\renewcommand{\cite}[1]{\citep{#1}}
\author{Takeshi Teshima,\textsuperscript{\rm{1,2}}
Issei Sato,\textsuperscript{\rm{1,2}}
Masashi Sugiyama\textsuperscript{\rm{2,1}}\\
\textsuperscript{\rm{1}}The University of Tokyo, \textsuperscript{\rm{2}}RIKEN \\
teshima@ms.k.u-tokyo.ac.jp,
\{sato, sugi\}@k.u-tokyo.ac.jp}
\newcommand{\todo}[1]{}
\newcommand{\todotwo}[1]{}
\newcommand{\todothree}[1]{}
\def \figureRoot {.}
\def \ptr {p_\rmtar}
\def \ptar {\ptr}
\def \psrc {p_\rmsrc}
\def \psrck {p_k}
\def \qsrck {q_k}
\def \qtar {q_\rmtar}
\def \Etr {\E_{\ptar}}
\def \Zsp {\mathcal{Z}}
\def \Xsp {\mathcal{X}}
\def \Ysp {\mathcal{Y}}
\def \Z {Z}
\def \z {z}
\def \DataTar {\mathcal{D}_\rmtar}
\def \Datak {\Datakk{k}}
\newcommand{\Datakk}[1]{\mathcal{D}_{#1}}
\newcommand{\subalign}[1]{%
\vcenter{%
\Let@ \restore@math@cr \default@tag
\baselineskip\fontdimen10 \scriptfont\tw@
\advance\baselineskip\fontdimen12 \scriptfont\tw@
\lineskip\thr@@\fontdimen8 \scriptfont\thr@@
\lineskiplimit\lineskip
\ialign{\hfil$\m@th\scriptstyle##$&$\m@th\scriptstyle{}##$\hfil\crcr
#1\crcr
}%
}%
}
\def \Zk {{Z^{\rmsrc}_k}}
\def \Zki {{Z^{\rmsrc}_{k, i}}}
\def \Zi {{\Z_i}}
\def \Ski {\S^{\rmsrc}_{k,i}}
\def \Si {\S_i}
\def \augSi {\bar \s_{\augi}}
\def \augZi {\bar \z_{\augi}}
\def \augi {{\bm{i}}}
\def \Qsp {\mathcal{Q}}
\DeclareMathOperator{\NLICA}{ICA}
\def \argmin {\mathop{\rm arg~min}\limits}
\def \argming {\argmin_{\g \in \G}}
\DeclareMathOperator{\AllCombinations}{AllCombinations}
\def \NNPsid {\Psi} 
\def \rhf {r_{\hf, \bm{\psi}}}
\def \argmin {\mathop{\rm argmin}\limits}
\newcommand{\comb}[2]{{#1 \choose #2}}
\def \iid {\overset{\text{i.i.d.}}{\sim}}
\def \Var {\mathrm{Var}}
\def \u {\bf{u}}
\def \x {\bf{x}}
\def \E {{\mathbb{E}}}
\def \Order {\mathcal{O}}
\def \prodd {\prod_d}
\def \Unif {\mathrm{U}}
\def \USsp {\mathcal{\tilde{S}}} 
\def \Xsp {\mathcal{X}}
\def \Ysp {\mathcal{Y}}
\def \u {u}
\def \Usp {\mathcal{U}}
\def \z {z}
\def \Zsp {\mathcal{Z}}
\def \rmsrc {\mathrm{Src}}
\def \rmtar {\mathrm{Tar}}
\def \nk {n_k}
\def \nsrck {\nk}
\def \ntr {n_\rmtar}
\def \Holder {H{\"o}lder\ }
\providecommand{\annot}[2]{\underbrace{#1}_{\text{#2}}}
\providecommand{\overbar}[1]{\mkern 1.5mu\overline{\mkern-1.5mu#1\mkern-1.5mu}\mkern 1.5mu}
\providecommand{\Mean}[1]{\mathop{\overbar{\sum}}\limits_{#1}}
\DeclareMathOperator{\Support}{supp}
\def \Z {Z}
\def \S {S}
\def \d {\mathrm{d}}
\def \de {\d\e}
\def \dx {\d\x}
\def \ds {\d\s}
\def \ED {\cQ^D}
\newcommand{\ETwoToD}[1]{\E_{#1_2, \ldots, #1_D}}
\newcommand{\EOneToD}[1]{\E_{#1_1, \ldots, #1_D}}
\def \Re {\mathbb{R}}
\def \RePos {\Re_{\geq 0}}
\def \ReD {\Re^D}
\def \ReDminus {\Re^{D-1}}
\def \Na {\mathbb{N}}
\def \Zsp {\mathcal{Z}}
\def \G {\mathcal{G}}
\def \F {\mathcal{F}}
\def \PDsp {\mathcal{P}}
\def \Qsp {\mathcal{Q}}
\newcommand{\HolderSp}[2]{C^{#1, #2}}
\def \UEspVol {V_\USsp}
\def \CubeLipschitz {\mathcal{H}}
\newcommand{\qdd}[1]{\q^{(#1)}}
\def \qd {\qdd{d}}
\def \cqd {\cq^{(d)}}
\newcommand{\pushForward}[2]{#1_\sharp #2}
\def \Jacobian {\mathrm{J}}
\newcommand{\J}[2]{\left|(\Jacobian#1)(#2)\right|}
\newcommand{\Ji}[2]{(\Jacobian#1)(#2)}
\def \supg {\sup_{\g \in \G}}
\def \infg {\inf_{\g \in \G}}
\def \allg {\forall \g \in \G}
\def \allq {\forall \q \in \Qsp}
\def \supeFull {\sup_{\e \in \ReD}}
\def \supzFull {\sup_{\z \in \ReD}}
\def \supDiffZ {\sup_{\z_1 \neq \z_2}}
\newcommand{\meanX}[1]{\frac{1}{n}\sum_{#1=1}^n}
\def \Erad {\E_\rad}
\def \hEData {\hat \E_n}
\newcommand{\deriv}[2]{\frac{\mathrm{d}#1}{\mathrm{d}#2}}
\newcommand{\pderiv}[2]{\frac{\partial #1}{\partial #2}}
\def \maxk {\max_{k \in [D]}}
\def \sumj {\sum_{j=1}^D}
\def \sumk {\sum_{k=1}^D}
\newcommand{\DefNorm}[2]{\left\|#1\right\|_{#2}}
\def \LoneSp {{L^1}}
\def \LdSp {{L^d}}
\def \LDSp {{L^D}}
\def \LpSp {{L^p}}
\def \ltwoSp {{\ell^2}}
\def \loneSp {{\ell^1}}
\newcommand{\Lone}[1]{\DefNorm{#1}{\LoneSp}}
\newcommand{\Lp}[1]{\DefNorm{#1}{\LpSp}}
\newcommand{\LoneWith}[2]{\DefNorm{#2}{\LoneSp(#1)}}
\newcommand{\LdWith}[2]{\DefNorm{#2}{\LdSp(#1)}}
\newcommand{\LDWith}[2]{\DefNorm{#2}{\LDSp(#1)}}
\newcommand{\finvHolderNorm}[1]{\DefNorm{#1}{\finvHolderClass}}
\newcommand{\SobolevSp}[2]{W^{#1,#2}}
\newcommand{\Sobolev}[3]{\DefNorm{#3}{\SobolevSp{#1}{#2}}}
\newcommand{\SobolevOne}[1]{\Sobolev{1}{1}{#1}}
\newcommand{\Sobolevd}[1]{\Sobolev{1}{d}{#1}}
\newcommand{\SobolevD}[1]{\Sobolev{1}{D}{#1}}
\newcommand{\op}[1]{\DefNorm{#1}{\mathrm{op}}}
\newcommand{\supnrm}[1]{\DefNorm{#1}{\infty}}
\newcommand{\ltwo}[1]{\DefNorm{#1}{\ltwoSp}}
\newcommand{\lone}[1]{\DefNorm{#1}{\loneSp}}
\def \msupnrm {\supnrm}
\newcommand{\opone}[1]{\DefNorm{#1}{\mathrm{op}(1)}}
\def \R {R}
\def \hR {\hat{\R}}
\def \cbarR {\bar{\check{\R}}}
\def \cbarhR {\check{\R}}
\def \LossValSp {[0, \LossUpperBound]}
\def \l {\ell}
\def \Data {\mathcal{D}}
\def \cData {\check{\mathcal{D}}}
\def \cEData {\E_{\cData}}
\def \iid {\overset{\text{i.i.d.}}{\sim}}
\newcommand{\TakeProductMeasure}[2]{#1^{#2}}
\def \q {q}
\def \Q {Q}
\def \Qn {\TakeProductMeasure{Q}{n}}
\def \Qj {\TakeProductMeasure{Q}{j}}
\def \cq {\check{\q}}
\def \cQ {\check{\Q}}
\def \cQn {\TakeProductMeasure{\cQ}{n}}
\def \cQj {\TakeProductMeasure{\cQ}{j}}
\newcommand{\Probability}[1]{\mathbb{P}_{#1}}
\def \p {p}
\def \x {z}
\def \e {s}
\def \s {s}
\def \g {g}
\def \gstar {g^*}
\def \cbarg {\bar{\check{\g}}}
\def \cbarhg {\check{\g}}
\def \hg {\hat\g}
\def \th {\theta}
\def \jGroupSplit {\mathfrak{G}_j^D}
\def \jGroupSplitDef {\{\tau : [D] \to [j] \ \vert\ \tau \text{ is surjective}\}}
\def \jGroupSplitCard {|\jGroupSplit|}
\def \jGroupSplitMean {\Mean{\tau \in \jGroupSplit}}
\def \DSymmetric {\mathfrak{S}_D}
\def \DSymmetricMean {\Mean{\pi \in \DSymmetric}}
\newcommand{\nCombset}[1]{\mathfrak{I}_#1^n}
\def \jnCombset {\nCombset{j}}
\def \jnCombsetDef {\{\rho : [j] \to [n] \ \vert\ \rho \text{ is injective}\}}
\def \jnCombsetMean {\Mean{\rho \in \jnCombset}}
\newcommand{\Vn}[1]{V_n^{#1}}
\newcommand{\Un}[1]{U_n^{#1}}
\def \Unj {\Un{j}}
\def \GUn {\mathrm{GU}_{(n_1, \ldots, n_L)}^{(k_1, \ldots, k_L)}}
\def \Rademacher {\mathfrak{R}}
\def \Rad {\Rademacher}
\def \RadG {\Rad(\G)}
\def \ORad {\Rad_\mathrm{ord}}
\newcommand{\metricEntropy}[3]{\mathcal{N}(#1, #2, #3)}
\def \f {f}
\def \finv {f^{-1}}
\def \hf {\hat f}
\def \hfinv {{\hf}^{-1}}
\def \dhf {\delta\hf}
\def \finvhf {\finv\circ\hf}
\def \finvf {\finv\circ\f}
\def \dfinvf {\deriv{\finvf}}
\def \dfinvhf {\deriv{\finvhf}}
\def \dfinv {\deriv{\finv}}
\def \pfinvj {\pderiv{\finv_j}}
\def \df {\deriv{\f}}
\def \pdfj {\pderiv{\f_j}}
\def \dhf {\deriv{\hf}}
\def \pdhfj {\pderiv{\hf_j}}
\newcommand{\Uke}[1]{\ke^{(#1)}}
\newcommand{\dUke}[1]{\ke_{#1}}
\def \ke {\tilde \l}
\def \keSp {\Phi}
\def \Ukej {\Uke{j}}
\def \lstar {\l^\dagger}
\def \lstarj {\l^{\dagger(j)}}
\newcommand{\UkeSp}[1]{\Phi_{\G, \hf}^{(#1)}}
\def \leadingKe {\ke^*}
\def \nsrc {n_\mathrm{src}}
\def \N {N}
\def \rad {\sigma}
\def \LossUpperBound {B_{\l}}
\def \qUpperBound {B_{\q}}
\def \qLipschitzConst {L_{\q}}
\def \finvLipschitzConst {L_{\finv}}
\def \lossUniformLipschitzConst {L_{\l_\G}}
\def \finvHolderClass {\HolderSp{1}{1}}
\def \dfSupNormConst {B_{\partial\f}^{\infty}}
\def \dfinvSupNormConst {B_{\partial\finv}^{\infty}}
\def \finalErrorBound {\annot{\ApproxErrorUpperBoundLeading}{Approximation error} + \annot{\PseudoGeneralizationErrorBoundForFinalErrorBoundOne}{Estimation error} + \annot{\finalErrorBoundSecondHigherOrder + \ApproxErrorUpperBoundResidual}{Higher order terms}}
\def \finalErrorBoundSecondHigherOrder {\kappa_1(\delta', n)}
\def \finalErrorBoundSecondHigherOrderDef {\Order(n^{-1})/{\delta'} + \Order(n^{-1})}
\def \ApproxErrorUpperBound {\ApproxErrorUpperBoundLeading + \ApproxErrorUpperBoundResidual}
\def \ApproxErrorUpperBoundLeading {\ApproxErrorUpperBoundConstOne \sum_{j=1}^D \SobolevOne{\f_j - \hf_j}}
\def \ApproxErrorUpperBoundResidual {D \LossUpperBound \qUpperBound \ApproxDensityDifferenceBoundRemainder}
\def \ApproxErrorUpperBoundContent {(\ApproxErrorUpperBoundConstOneDef) \sum_{j=1}^D \SobolevOne{\f_j - \hf_j} + D \LossUpperBound \qUpperBound \ApproxDensityDifferenceBoundRemainder}
\def \ApproxErrorUpperBoundConstOne {C}
\def \ApproxErrorUpperBoundConstOneDef {\qUpperBound \lossUniformLipschitzConst + D \LossUpperBound (\ApproxDensityDifferenceBoundMainCoeff)}
\def \ApproxDensityDifferenceBoundMain {(\ApproxDensityDifferenceBoundMainCoeff)\sum_{j=1}^D \SobolevOne{\f_j - \hf_j}}
\def \ApproxDensityDifferenceBoundMainCoeff {\qLipschitzConst \finvLipschitzConst + \qUpperBound D \ApproxDensityDifferenceJacobianPieceCoeffOne}
\def \ApproxDensityBoundOne {\qLipschitzConst \finvLipschitzConst \sum_{j=1}^D \SobolevOne{\f_j - \hf_j}}
\def \ApproxDensityDifferenceBoundRemainder {\kappa_2(\f - \hf)}
\def \ApproxDensityDifferenceBoundRemainderDef {\sum_{d=2}^D \comb{D}{d} \ApproxDensityDifferenceJacobianPieceCoeff \sum_{j=1}^D \Sobolevd{\f_j - \hf_j}^d}
\def \ApproxDensityBoundTwo {D \ApproxDensityDifferenceJacobianPieceCoeffOne \sum_{j=1}^D \SobolevOne{\f_j - \hf_j}}
\def \ApproxLossDifferenceBound {\qUpperBound \lossUniformLipschitzConst \sum_{j=1}^D \SobolevOne{\f_j - \hf_j}}
\def \ApproxDensityDifferenceJacobianPieceCoeff {C_d'}
\def \ApproxDensityDifferenceJacobianPieceCoeffOne {C_1'}
\def \ApproxDensityDifferenceJacobianPieceCoeffDef {(D+1)^{\frac{7}{2}d-2} \left((\dfSupNormConst)^d\left(\sumk \finvHolderNorm{\finv_k}\right)^d + (\dfinvSupNormConst)^d\right)}
\def \ApproxDensityDifferenceJacobianPieceCoeffOneDef {(D+1)^{3/2} \left(\dfSupNormConst\left(\sumk \finvHolderNorm{\finv_k}\right) + \dfinvSupNormConst\right)}
\def \PseudoGeneralizationErrorBoundOne {4D\wD\RadG + 2D\LossUpperBound \wD\PseudoGenProbTerm{\delta}}
\def \PseudoGeneralizationErrorBoundForFinalErrorBoundOne {4D\RadG + 2D\LossUpperBound \PseudoGenProbTerm{\delta}}
\def \PseudoGeneralizationErrorBoundTwo {2\wD(D-1)\sum_{j=2}^D\frac{\PseudoGenCj}{\delta'} n^{-j/2}+ 4 \LossUpperBound \sum_{j=1}^{D-1} w_j}
\def \PseudoGenCj {C_j}
\def \PseudoGenWj {w_j}
\def \wD {w_D}
\def \wDContent {\frac{n(n-1) \cdots (n-D+1)}{n^D}}
\newcommand{\PseudoGenProbTerm}[1]{\sqrt{\frac{\log 2 / {#1}}{2 n}}}
\newcommand{\tips}[1]{}
\def \IIDtext {i.i.d.}
\def \IID {\IIDtext\ }
\def \Supplementary {Supplementary~Material}
\newcommand{\unitnum}[2]{#1\;#2}
\newtheorem{theorem}{Theorem}
\newtheorem{lemma}{Lemma}
\newtheorem{definition}{Definition}
\newtheorem{assumption}{Assumption}
\newtheorem*{definition*}{Definition}
\newtheorem{remark}{Remark}
\newcommand{\DrawSchemaFig}[4]{\subfigure[#2]{\label{#3}\includegraphics[bb=#4, keepaspectratio, height=\textheight, width=\textwidth]{#1}}}
\date{}
\title{Few-shot Domain Adaptation by Causal Mechanism Transfer}
\begin{document}

\maketitle

\global\csname @topnum\endcsname 0

\begin{abstract}
We study \emph{few-shot supervised domain adaptation} (DA) for regression problems, where only a few labeled target domain data and many labeled source domain data are available.
Many of the current DA methods base their \emph{transfer assumptions} on either parametrized distribution shift or apparent distribution similarities, e.g., identical conditionals or small distributional discrepancies.
However, these assumptions may preclude the possibility of adaptation from intricately shifted and apparently very different distributions.
To overcome this problem, we propose \emph{mechanism transfer}, a meta-distributional scenario in which a data generating \emph{mechanism} is invariant across domains.
This transfer assumption can accommodate nonparametric shifts resulting in apparently different distributions while providing a solid statistical basis for DA.
We take the structural equations in causal modeling as an example and propose a novel DA method, which is shown to be useful both theoretically and experimentally.
Our method can be seen as the first attempt to fully leverage the structural causal models for DA.
\end{abstract}
\section{Introduction}
\label{sec:org77a861f}
Learning from a limited amount of data is a long-standing yet actively studied problem of machine learning.
Domain adaptation (DA) \cite{Ben-Davidtheory2010} tackles this problem by leveraging auxiliary data sampled from related but different domains.
In particular, we consider \emph{few-shot supervised} DA for regression problems, where only a few labeled target domain data and many labeled source domain data are available.

A key component of DA methods is the \emph{transfer assumption} (TA) to relate the source and the target distributions.
Many of the previously explored TAs have relied on certain direct distributional similarities, e.g., identical conditionals \cite{ShimodairaImproving2000} or small distributional discrepancies \cite{Ben-DavidAnalysis2007}.
However, these TAs may preclude the possibility of adaptation from apparently very different distributions.
Many others assume parametric forms of the distribution shift \cite{ZhangDomain2013} or the distribution family \cite{StorkeyMixture2007}
which can highly limit the considered set of distributions.
(we further review related work in Section~\ref{paper:sec:related-work-transfer-assumptions}).

To alleviate the intrinsic limitation of previous TAs due to relying on apparent distribution similarities or parametric assumptions,
we focus on a meta-distributional scenario where there exists a common generative \emph{mechanism} behind the data distributions (Figures~\ref{fig:schematic-illustration-1},\ref{fig:schematic-illustration-2}).
Such a common mechanism may be more conceivable in applications involving structured table data such as medical records \cite{YadavMining2018}.
For example, in medical record analysis for disease risk prediction, it can be reasonable to assume that there is a pathological mechanism that is common across regions or generations,
but the data distributions may vary due to the difference in cultures or lifestyles.
Such a hidden structure (pathological mechanism, in this case), once estimated, may provide portable knowledge to enable DA, allowing one to obtain accurate predictors for under-investigated regions or new generations.

Concretely, our assumption relies on the generative model of nonlinear independent component analysis (nonlinear ICA; Figure~\ref{fig:schematic-illustration-1}),
where the observed labeled data are generated by first sampling latent independent components (ICs) \(\S\) and later transforming them by a nonlinear invertible \emph{mixing function} denoted by \(\f\) \cite{HyvarinenNonlinear2019}.
\todothree{上と同様ですが、ICsが最初に nonparametric に生成されるという仮定は良くある話なんでしょうか}
Under this generative model, our TA is that \(\f\) representing the mechanism is identical across domains (Figure~\ref{fig:schematic-illustration-2}).
This TA allows us to formally relate the domain distributions and develop a novel DA method
without assuming their apparent similarities or making parametric assumptions.
\todothree{黒木さん：この段落に出てくるICAとfの関係がすぐに理解できない．．．}

\begin{figure}[!t]
\begin{minipage}[t]{1.0\linewidth}
\begin{minipage}[t]{0.3\linewidth}
\begin{figure}[H]
\begin{minipage}[t]{1.0\linewidth}\hspace*{\fill}
\begin{minipage}[c]{0.9\textwidth}
\includegraphics[keepaspectratio, bb=0 0 180 186, width=0.975\textwidth]{\figureRoot/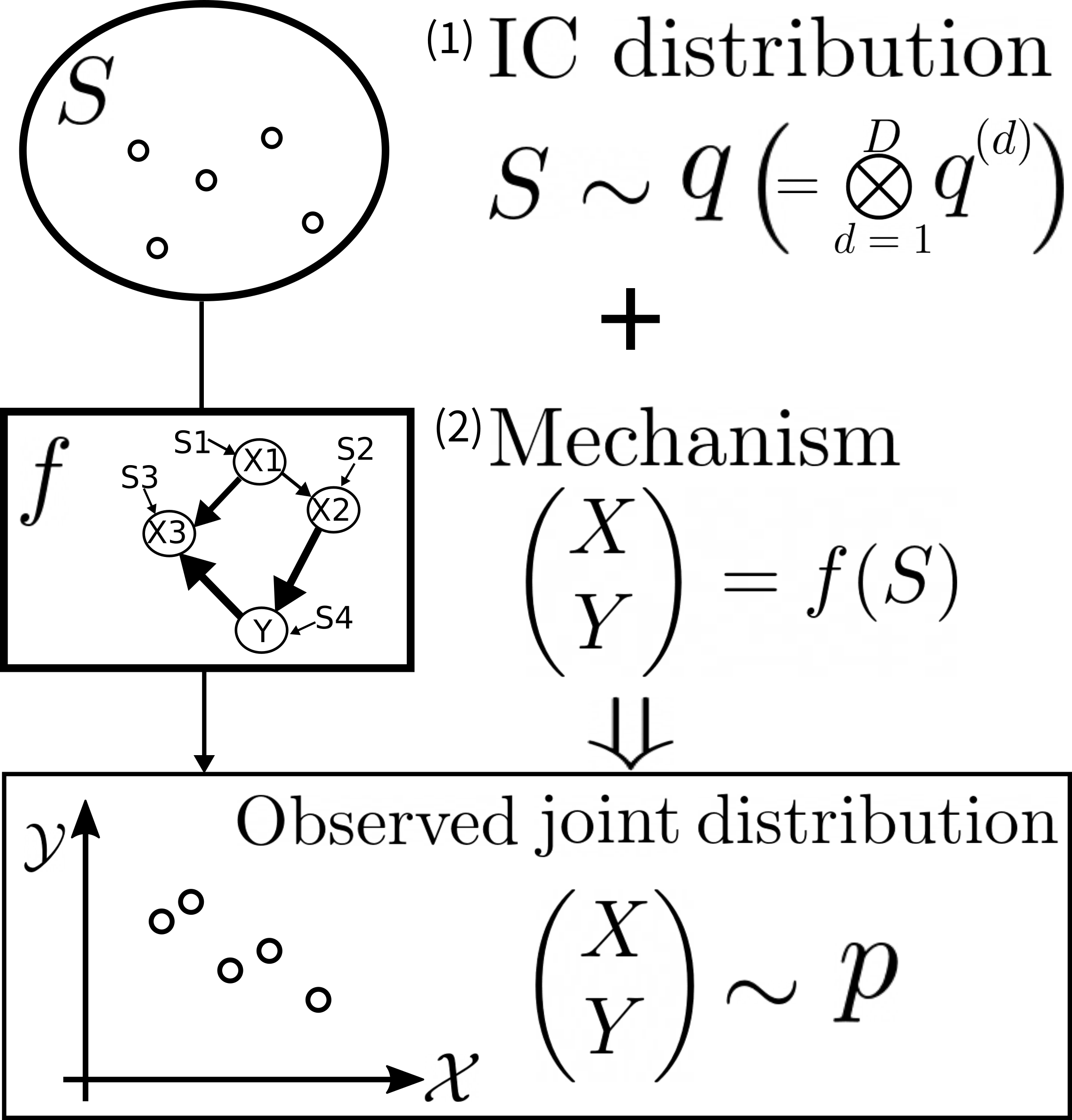}
\end{minipage}\hspace*{\fill}\vspace*{\fill}
\begin{minipage}[c]{1.0\linewidth}
\caption{
  Nonparametric generative model of nonlinear independent component analysis.
  Our meta-distributional transfer assumption is built on the model,
  where there exists an invertible function \(\f\) representing the mechanism
  to generate the labeled data \((X, Y)\) from the independent components (ICs), \(\S\), sampled from \todotwo{an independent distribution }\(\q\).
  As a result, each pair \((\f, \q)\) defines a joint distribution \(p\).
  \todothree{qは簡単に求められる分布という認識で正しいですか？}
  }
\label{fig:schematic-illustration-1}
\end{minipage}\vspace*{\fill}\hspace*{\fill}
\end{minipage}
\end{figure}
\end{minipage}\hspace*{\fill}
\begin{minipage}[t]{0.68\linewidth}
\begin{figure}[H]
\begin{minipage}[c]{1.0\linewidth}
\begin{minipage}[c]{1.0\linewidth}
\fbox{\includegraphics[keepaspectratio, bb=0 0 303 123, width=0.975\textwidth]{\figureRoot/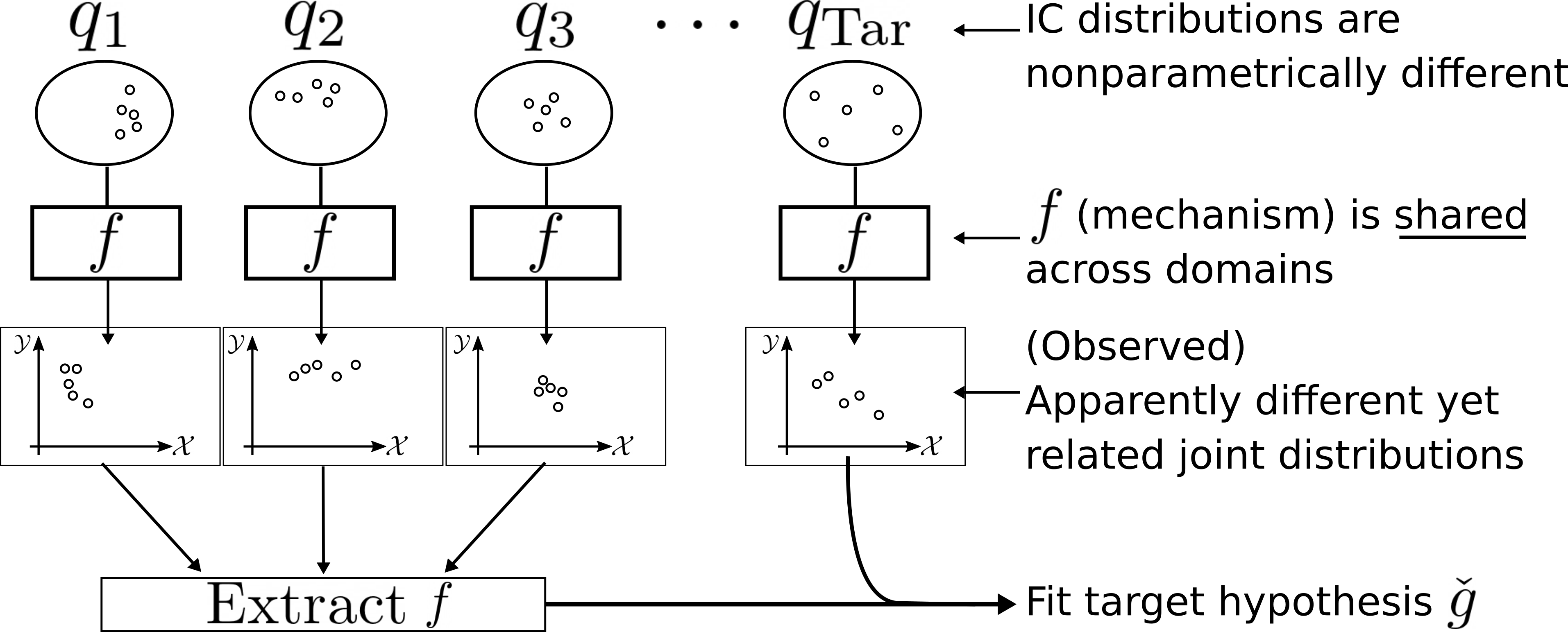}}
\vspace*{0.5\topsep}
\end{minipage}
\begin{minipage}[c]{1.0\linewidth}
\caption{
  Our assumption of common generative mechanism.
  By capturing the common data generation mechanism, we enable domain adaptation among seemingly very different distributions without relying on parametric assumptions.
  }
\label{fig:schematic-illustration-2}
\end{minipage}
\end{minipage}
\end{figure}
\end{minipage}
\end{minipage}
\end{figure}
\paragraph{Our contributions.}
\label{sec:orgbfe180d}
Our key contributions can be summarized in three points as follows.
\begin{enumerate}
\item We formulate the flexible yet intuitively accessible TA of shared generative mechanism
and develop a few-shot regression DA method (Section~\ref{paper:sec:proposed-method}).
The idea is as follows.
First, from the source domain data, we estimate the mixing function \(\f\) by nonlinear ICA \cite{HyvarinenNonlinear2019}
because \(\f\) is the only assumed relation of the domains.
Then, to transfer the knowledge, we perform data augmentation using the estimated \(\f\) on the target domain data using the independence of the IC distributions.
In the end, the augmented data is used to fit a target predictor (Figure~\ref{fig:schematic-illustration-algorithm}).
\item We theoretically justify the augmentation procedure by invoking the theory of generalized U-statistics \citep{LeeUStatistics1990}.
The theory shows that the proposed data augmentation procedure yields the uniformly minimum variance unbiased risk estimator in an ideal case.
We also provide an excess risk bound \cite{MohriFoundations2012} to cover a more realistic case (Section~\ref{paper:sec:theoretical-insights}).
\todotwo{, which involves slight extensions of the previously established learning theory for degree-\(2\) U-process minimization to higher degrees and to the von-Mises process minimization.}
\item We experimentally demonstrate the effectiveness of the proposed algorithm (Section~\ref{paper:sec:experiment}).
The real-world data we use is taken from the field of \emph{econometrics}, for which structural equation models have been applied in previous studies \cite{GreeneEconometric2012}.
\end{enumerate}

A salient example of the generative model we consider is the structural equations of causal modeling (Section~\ref{paper:sec:problem}).
In this context, our method can be seen as the first attempt to fully leverage the structural causal models for DA (Section~\ref{paper:sec:related-work-causality-and-transfer}).
\section{Problem Setup \label{paper:sec:problem}}
\label{sec:org90c380b}
In this section, we describe the problem setup and the notation.
To summarize, our problem setup is \emph{homogeneous}, \emph{multi-source}, and \emph{few-shot supervised} domain adapting regression.
That is, respectively, all data distributions are defined on the same data space, there are multiple source domains, and a limited number of labeled data is available from the target distribution
(and we do \emph{not} assume the availability of unlabeled data).
In this paper, we use the terms \emph{domain} and \emph{distribution} interchangeably.
\todothree{ここをどうにか読みやすくしたい}
\paragraph{Notation.}
\label{sec:org57b1cb1}
Let us denote the set of real (resp.\ natural) numbers by \(\mathbb{R}\) (resp.\ \(\mathbb{N}\)).
For \(N \in \mathbb{N}\), we define \([N] := \{1, 2, \ldots, N\}\).
\todo{To make it compact or not.}
Throughout the paper, we fix \(D (\in \mathbb{N}) > 1\) and suppose that the input space \(\Xsp\) is a subset of \(\mathbb{R}^{D-1}\) and the label space \(\Ysp\) is a subset of \(\mathbb{R}\).
As a result, the overall data space \(\Zsp := \Xsp \times \Ysp\) is a subset of \(\mathbb{R}^D\).
We generally denote a labeled data point by \(Z = (X, Y)\).
We denote by \(\Qsp\) the set of independent distributions on \(\mathbb{R}^D\) with absolutely continuous marginals.
For a distribution \(p\), we denote its induced expectation operator by \(\E_p\).
Table~\ref{tbl:notation} in Supplementary~Material provides a summary of notation.
\paragraph{Basic setup: Few-shot domain adapting regression.}
\label{sec:org61f2ef7}
Let \(\ptr\) be a distribution (the \emph{target distribution}) over \(\Zsp\), and let \(\G \subset \{\g: \ReDminus \to \Re\}\) be a hypothesis class.
Let \(\l : \G \times \ReD \to \LossValSp\) be a loss function where \(\LossUpperBound > 0\) is a constant.
Our goal is to find a predictor \(\g \in \G\) which performs well for \(\ptr\), i.e., the target risk \(\R(\g) := \Etr \l(\g, \Z)\) is small.
We denote \(\gstar \in \mathop{\rm arg~min}_{\g \in \G} \R(\g)\).
To this goal, we are given an independent and identically distributed (\IIDtext) sample \(\DataTar := \{\Z_i\}_{i=1}^{\ntr} \overset{\text{\IIDtext}}{\sim} \ptr\).
In a fully supervised setting where \(\ntr\) is large, a standard procedure is to select \(\g\) by empirical risk minimization (ERM), i.e., \(\hg \in \mathop{\rm arg~min}_{\g \in \G} \hR(\g)\), where \(\hR(\g) := \frac{1}{\ntr} \sum_{i=1}^{\ntr} \l(\g, \Z_i)\).
However, when \(\ntr\) is not sufficiently large, \(\hR(\g)\) may not accurately estimate \(\R(\g)\), resulting in a high generalization error of \(\hg\).
To compensate for the scarcity of data from the target distribution,
let us assume that we have data from \(K\) distinct \emph{source distributions} \(\{\psrck\}_{k=1}^K\) over \(\Zsp\), that is, we have independent \IID samples \(\Datak := \{\Zki\}_{i=1}^{\nk} \overset{\text{\IIDtext}}{\sim} \psrck (k \in [K], \nk \in \N)\) whose relations to \(\ptar\) are described shortly.
We assume \(\ntr, \nsrck \geq D\) for simplicity.
\todotwo{(We assume that there is enough number of distinct accessible domains. This is because our proposed method relies on the identifiability results of nonlinear ICA based on contrastive learning. Depending on the identification condition, our proposed approach can be extended to other situations, e.g., single-source domain adaptation is possible if the problem is reduced to linear ICA.)}
\paragraph{Key assumption.   \label{paper:sec:example-sem}}
\label{sec:org1307d56}
\todothree{ここでFigure ２を参照してもqの実態を理解できなかった}
In this work, the key transfer assumption is that all domains follow nonlinear ICA models with identical mixing functions (Figure~\ref{fig:schematic-illustration-2}).
To be precise, we assume that there exists a set of IC distributions \(\qtar, \qsrck \in \Qsp (k \in [K])\) , and a smooth invertible function \(\f: \ReD \to \ReD\) (the \emph{transformation} or \emph{mixing})
such that \(\Zki \sim \psrck\) is generated by first sampling \(\Ski \sim \qsrck\) and later transforming it by
\todo{伝わるか？曖昧性は無いか？ICAの論文を見る}
\begin{equation}\label{paper:eq:nonlinear-mixing}
\Zki = \f(\Ski),
\end{equation}
and similarly \(\Zi = \f(\Si), \Si \sim \qtar\) for \(\ptar\).
The above assumption allows us to formally relate \(\psrck\) and \(\ptar\).
It also allows us to estimate \(\f\) when sufficient identification conditions required by the theory of nonlinear ICA are met\todothree{conditionは一般的なのか結構強いのか，くらいはきになるなあという気持ちに}.
Due to space limitation, we provide a brief review of the nonlinear ICA method used in this paper and the known theoretical conditions in \Supplementary~\ref{paper:sec:appendix:nonlinear-ica-gcl}.
Having multiple source domains is assumed here for the identifiability of \(\f\): it comes from the currently known identification condition of nonlinear ICA \cite{HyvarinenNonlinear2019}.
Note that complex changes in \(\q\) are allowed, hence the assumption of invariant \(\f\) can accommodate intricate shifts in the apparent distribution \(\p\).
We discuss this further in Section~\ref{paper:sec:discussion:flexible-dist-shift} by taking a simple example.
\paragraph{Example: Structural equation models}
\label{sec:org3620df9}
A salient example of generative models expressed as Eq.~\eqref{paper:eq:nonlinear-mixing} is \emph{structural equation models} (SEMs; \citealp{PearlCausality2009,PetersElements2017}), which are used to describe the data generating mechanism involving the causality of random variables \citep{PearlCausality2009}.
More precisely, the generative model of Eq.\eqref{paper:eq:nonlinear-mixing} corresponds to the \emph{reduced form} \cite{ReissStructural2007} of a \emph{Markovian} SEM \cite{PearlCausality2009}, i.e., a form where the structural equations to determine \(Z\) from \((Z, S)\) are solved so that \(Z\) is expressed as a function of \(S\).
Such a conversion is always possible because a Markovian SEM induces an \emph{acyclic} causal graph \cite{PearlCausality2009}, hence the structural equations can be solved by elimination of variables.
This interpretation of reduced-form SEMs as Eq.\eqref{paper:eq:nonlinear-mixing} has been exploited in methods of \emph{causal discovery}, e.g., in the linear non-Gaussian additive-noise models and their successors \citep{KanoCausal2003,Shimizulinear2006,MontiCausal2019b}.
In the case of SEMs, the key assumption of this paper translates into the invariance of the structural equations across domains, which enables an intuitive assessment of the assumption based on prior knowledge. For instance, if all domains have the same causal mechanism and are in the same intervention state (including an intervention-free case), the modeling choice is deemed plausible.
Note that we do not estimate the original structural equations in the proposed method (Section~\ref{paper:sec:proposed-method}) but we only require estimating the reduced form, which is an easier problem compared to causal discovery.
\section{Proposed Method: Mechanism Transfer \label{paper:sec:proposed-method}}
\label{sec:orgf02327e}
In this section, we detail the proposed method, mechanism transfer (Algorithm~\ref{paper:alg:proposed-method}).
The method first estimates the common generative mechanism \(\f\) from the source domain data and then uses it\todotwo{the estimated mechanism} to perform data augmentation of the target domain data to transfer the knowledge (Figure~\ref{fig:schematic-illustration-algorithm}).

\todotwo{Let the readers naturally understand why the procedure is derived.}

\begin{figure}[!t]
\begin{minipage}[t]{1.0\linewidth}
\begin{minipage}[t]{0.48\linewidth}
\begin{algorithm}[H]
\caption{Proposed method: mechanism transfer}\label{paper:alg:proposed-method}
\begin{algorithmic}
\renewcommand{\algorithmicrequire}{\textbf{Input:}}
\renewcommand{\algorithmicensure}{\textbf{Output:}}
\REQUIRE Source domain data sets \(\{\Datak\}_{k \in [K]}\), target domain data set \(\DataTar\),
nonlinear ICA algorithm \(\NLICA\), and a learning algorithm \(\mathcal{A}_{\G}\) to fit the hypothesis class \(\G\) of predictors.
\STATE // Step 1. Estimate the shared transformation.
\STATE \quad \(\hf \gets \NLICA(\Datakk{1}, \ldots, \Datakk{K})\)
\STATE // Step 2. Extract and shuffle target independent components
\STATE \quad \(\hat \e_i \gets \hfinv(\Z_i), \quad (i = 1, \ldots, \ntr)\)
\STATE \quad \(\{\augSi\}_{\augi\in[\ntr]^D} \gets \AllCombinations(\{\hat \e_i\}_{i=1}^{\ntr})\)\todo{コンビネーション全部取るのか問題}
\STATE // Step 3. Synthesize target data and fit the predictor.
\STATE \quad \(\augZi \gets \hf(\augSi)\)
\STATE \quad \(\cbarhg \gets \mathcal{A}_{\G}(\{\augZi\}_{\augi})\)
\ENSURE \(\cbarhg\): the predictor in the target domain.
\end{algorithmic}\end{algorithm}
\end{minipage}\hfill
\begin{minipage}[t]{0.48\linewidth}
\begin{figure}[H]
\begin{minipage}[c]{1.0\linewidth}
\begin{minipage}[c]{1.0\linewidth}
\begin{minipage}[c]{0.2\linewidth}
\DrawSchemaFig{\figureRoot/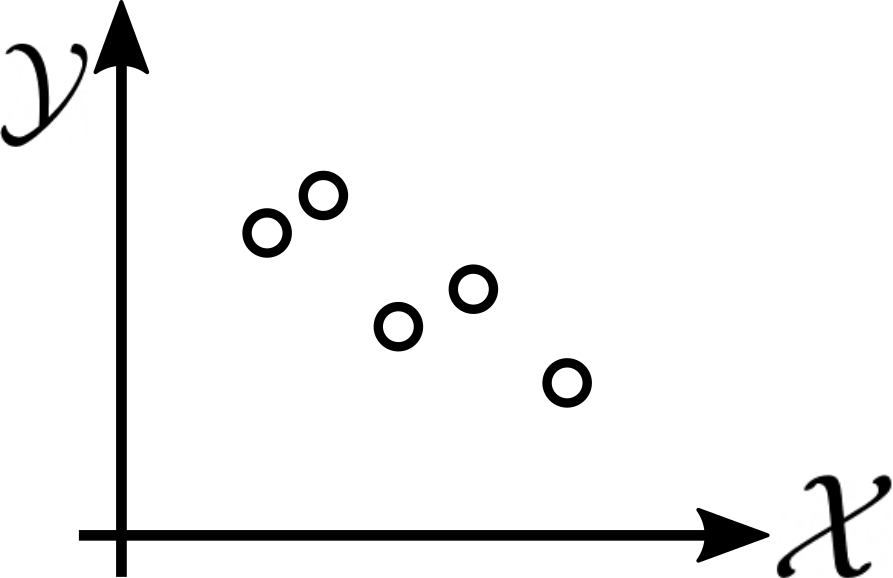}{Labeled target data}{fig:schema-1}{0 0 71 46}
\end{minipage}\hfill\(\overset{\hfinv}{\to}\)\hfill
\begin{minipage}[c]{0.2\linewidth}
\DrawSchemaFig{\figureRoot/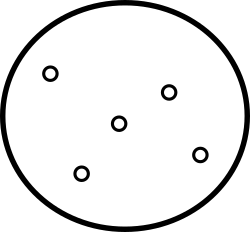}{Find IC}{fig:schema-2}{0 0 71 46}
\end{minipage}\hfill\(\to\)\hfill
\begin{minipage}[c]{0.2\linewidth}
\DrawSchemaFig{\figureRoot/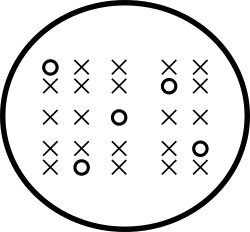}{Shuffle}{fig:schema-3}{0 0 71 46}
\end{minipage}\hfill\(\overset{\hf}{\to}\)\hfill
\begin{minipage}[c]{0.2\linewidth}
\DrawSchemaFig{\figureRoot/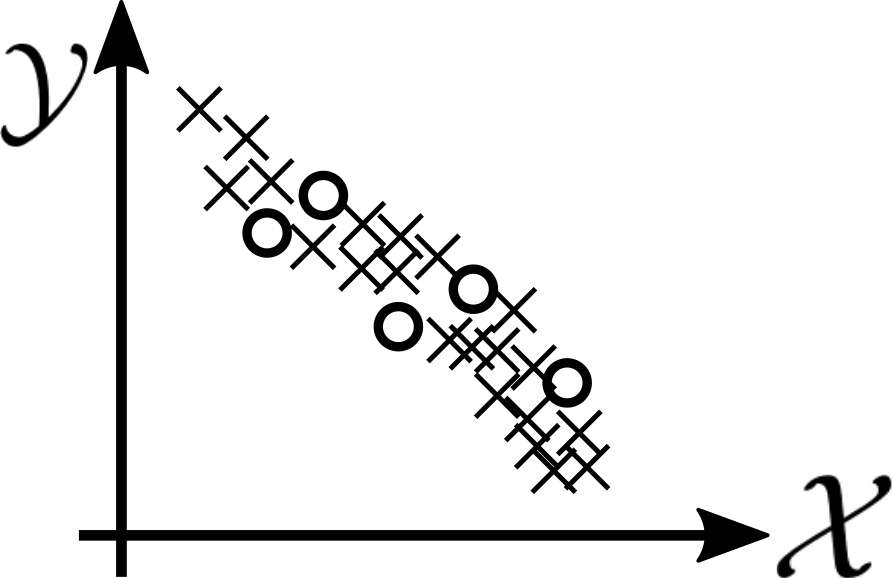}{Pseudo target data}{fig:schema-4}{0 0 71 46}
\end{minipage}\hfill
\end{minipage}
\begin{minipage}[c]{1.0\linewidth}
\caption{
  Schematic illustration of proposed few-shot domain adaptation method after estimating the common mechanism \(\f\)\todotwo{Shuffle -> Exchange}.
  With the estimated \(\hf\), the method augments the small target domain sample in a few steps to enhance statistical efficiency:
  \subref{fig:schema-1} The algorithm is given labeled target domain data.
  \subref{fig:schema-2} From labeled target domain data, extract the ICs.
  \subref{fig:schema-3} By shuffling the values, synthesize likely values of IC.
  \subref{fig:schema-4} From the synthesized IC, generate pseudo target data.
  The generated data is used to fit a predictor for the target domain.
  \todotwo{ここで図のキャプションの付け方を変えて，(a)を消し，b〜dは矢印に合わせる．するとキャプションの説明でも(a)を省けてスペースが作れる．}
  }
\label{fig:schematic-illustration-algorithm}
\end{minipage}
\end{minipage}
\end{figure}
\end{minipage}
\end{minipage}
\end{figure}
\subsection{Step 1: Estimate \(\f\) using the source domain data}
\label{sec:org01994ff}
The first step estimates the common transformation \(\f\) by nonlinear ICA, namely via \emph{generalized contrastive learning} (GCL; \citealp{HyvarinenNonlinear2019}).
GCL uses auxiliary information for training a certain binary classification function, \(\rhf\), equipped with a parametrized feature extractor \(\hf: \ReD \to \ReD\).
The trained feature extractor \(\hf\) is used as an estimator of \(\f\).
The auxiliary information we use in our problem setup is the domain indices \([K]\).
The classification function to be trained in GCL is \(\rhf(\z, \u) := \sum_{d=1}^D \psi_d(\hfinv(\z)_d, \u)\)
consisting of \((\hf, \{\psi_d\}_{d=1}^D)\), and the classification task of GCL is logistic regression to classify \((\Zk, k)\) as positive and \((\Zk, k') (k' \neq k)\) as negative.
This yields the following domain-contrastive learning criterion to estimate \(\f\):
\begin{equation*}\begin{split}
\argmin_{\substack{\hf \in \F, \\ \{\psi_d\}_{d=1}^D \subset \NNPsid}} \sum_{k=1}^K \frac{1}{\nsrck} \sum_{i=1}^{\nsrck} \biggl(&\phi\left(\rhf(\Zki, k)\right) + \mathbb{E}_{k' \neq k}\phi\left(-\rhf(\Zki, k')\right) \biggr),
\end{split}\end{equation*}
where \(\F\) and \(\NNPsid\) are sets of parametrized functions,
\(\mathbb{E}_{k'\neq k}\) denotes the expectation with respect to \(k' \sim \Unif([K] \setminus \{k\})\) (\(\Unif\) denotes the uniform distribution),
and \(\phi\) is the logistic loss \(\phi(m) := \log(1 + \exp(- m))\).
We use the solution \(\hf\) as an estimator of \(\f\).
In experiments, \(\F\) is implemented by invertible neural networks \cite{KingmaGlow2018}, \(\NNPsid\) by multi-layer perceptron, and \(\mathbb{E}_{k'\neq k}\) is replaced by a random sampling renewed for every mini-batch.
\subsection{Step 2: Extract and inflate the target ICs using \(\hf\)}
\label{sec:orgeab105d}
The second step extracts and inflates the target domain ICs using the estimated \(\hf\).
We first extract the ICs of the target domain data by applying the inverse of \(\hf\) as
\begin{equation*}\begin{split}
\hat \s_i = \hfinv(\Z_i).
\end{split}\end{equation*}
After the extraction, we inflate the set of IC values by taking all dimension-wise combinations of the estimated IC:
\begin{equation*}\begin{split}
\augSi = (\hat \s_{i_1}^{(1)}, \ldots, \hat \s_{i_D}^{(D)}), \quad {\bm{i}} = (i_1, \ldots, i_D) \in [\ntr]^D,
\end{split}\end{equation*}
to obtain new plausible IC values \(\augSi\).
The intuitive motivation of this procedure stems from the independence of the IC distributions.
Theoretical justifications are provided in Section~\ref{paper:sec:theoretical-insights}.
In our implementation, we use invertible neural networks \cite{KingmaGlow2018} to model the function \(\hf\) to enable the computation of the inverse \(\hfinv\).
\subsection{Step 3: Synthesize target data from the inflated ICs}
\label{sec:orgd4645d1}
The third step estimates the target risk \(\R\) by the empirical distribution of the augmented data:
\begin{equation}\label{paper:eq:augmented-erm}\begin{split}
\cbarhR(\g) := \frac{1}{\ntr^D}\sum_{\bm{i} \in [\ntr]^D} \l(\g, \hf(\augSi)),
\end{split}\end{equation}
and performs empirical risk minimization.
In experiments, we use a regularization term \(\Omega(\cdot)\) to control the complexity of \(\G\) and select
\begin{equation*}\begin{split}
\cbarhg \in \argming \left\{\cbarhR(\g) + \Omega(\g)\right\}.
\end{split}\end{equation*}
The generated hypothesis \(\cbarhg\) is then used to make predictions in the target domain.
In our experiments, we use \(\Omega(\g) = \lambda \|\g\|^2\), where \(\lambda > 0\) and the norm is that of the reproducing kernel Hilbert space (RKHS) which we take the subset \(\G\) from.
Note that we may well subsample only a subset of combinations in Eq.~\eqref{paper:eq:augmented-erm} to mitigate the computational cost similarly to \citet{ClemenconScalingup2016} and \citet{PapaSGD2015}.
\section{Theoretical Insights \label{paper:sec:theoretical-insights}}
\label{sec:org7d16cfb}
In this section, we state two theorems to investigate the statistical properties of the method proposed in Section~\ref{paper:sec:proposed-method} and provide plausibility beyond the intuition that we take advantage of the independence of the IC distributions.
\subsection{Minimum variance property: Idealized case}
\label{sec:org84ac3f0}
The first theorem provides an insight into the statistical advantage of the proposed method: in the ideal case, the method attains the minimum variance among all possible unbiased risk estimators.

\begin{theorem}[Minimum variance property of \(\cbarhR\)]
Assume that \(\hf = \f\).
Then, for each \(\g \in \G\), the proposed risk estimator \(\cbarhR(\g)\) is the uniformly minimum variance unbiased estimator of \(\R(\g)\), i.e.,
for any unbiased estimator \(\tilde \R(\g)\) of \(\R(\g)\),
\begin{equation*}\begin{split}
\forall \q \in \Qsp, \quad \Var(\cbarhR(\g)) \leq \Var(\tilde \R(\g))
\end{split}\end{equation*}
as well as \(\Etr \cbarhR(\g) = \R(\g)\) holds.
\label{paper:thm:1}
\end{theorem}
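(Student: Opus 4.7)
Because $\hat\f=\f$, the recovered latents $\hat\s_i = \hfinv(\Z_i)$ coincide exactly with the true latents $\S_i \iid \qtar$, and $\qtar \in \Qsp$ is an arbitrary product distribution with absolutely continuous marginals. The whole statement therefore reduces to a classical UMVUE problem: given an \IID sample $\S_1,\ldots,\S_{\ntr}$ from an unknown product measure $\q = q^{(1)}\otimes\cdots\otimes q^{(D)}$, prove that
\[
\cbarhR(\g) \;=\; \frac{1}{\ntr^{D}} \sum_{\bm i\in[\ntr]^{D}} \l\bigl(\g,\f(\hat\s^{(1)}_{i_1},\ldots,\hat\s^{(D)}_{i_D})\bigr)
\]
is the UMVUE of $\R(\g) = \E_{\S\sim\q}\l(\g,\f(\S))$. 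My plan is to invoke the Lehmann--Scheff\'e theorem after identifying a complete sufficient statistic for the family $\Qsp$ and verifying that $\cbarhR(\g)$ is an unbiased function of it.

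\textbf{Step 1: unbiasedness.} For any multi-index $\bm i \in [\ntr]^{D}$, the coordinates $\hat\s^{(d)}_{i_d}$ $(d\in[D])$ are mutually independent. This is true both when the $i_d$ are distinct (different samples are independent) and when they repeat (within one sample, coordinates are independent because $\q$ is a product measure). Hence each $\augSi$ has exactly the law $\q$, so $\E[\l(\g,\f(\augSi))] = \R(\g)$ for every $\bm i$, and averaging gives $\E\cbarhR(\g) = \R(\g)$.

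\textbf{Step 2: complete sufficient statistic for $\Qsp$.} For each coordinate $d\in[D]$, the samples $\{\hat\s^{(d)}_i\}_{i=1}^{\ntr}$ are \IID from the absolutely continuous marginal $q^{(d)}$, and the order statistic $T^{(d)}$ of these values is a complete sufficient statistic for $q^{(d)}$ within the family of all absolutely continuous distributions on $\mathbb{R}$ (a classical nonparametric result). Because $\q$ varies over the product family $\Qsp$ and the marginals are independent, a standard product argument upgrades the coordinatewise completeness to joint completeness: $T := (T^{(1)},\ldots,T^{(D)})$ is complete sufficient for $\q \in \Qsp$.

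\textbf{Step 3: $\cbarhR(\g)$ is $T$-measurable.} Permuting the entries $\{\hat\s^{(d)}_i\}_{i=1}^{\ntr}$ by any $\pi_d \in \mathfrak{S}_{\ntr}$ separately in each coordinate $d$ merely relabels the summation in $\cbarhR(\g)$ (the sum runs over all $\bm i \in [\ntr]^{D}$), hence leaves $\cbarhR(\g)$ invariant. Therefore $\cbarhR(\g)$ depends on the data only through the marginal order statistics, i.e.\ through $T$. Combined with Step~1, Lehmann--Scheff\'e yields the UMVUE property for every $\q \in \Qsp$, which is the claim.

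\textbf{Anticipated obstacle.} The only nontrivial ingredient is lifting 1D completeness of the order statistic to the product family in Step~2; this is where the restriction to \emph{independent} IC distributions in $\Qsp$ is essential, and without it the argument breaks. Unbiasedness (Step~1) and $T$-measurability (Step~3) are immediate once one exploits the product structure and the symmetry of the multi-index sum, respectively, so no difficult computation is needed beyond those observations.
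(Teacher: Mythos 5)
Your proposal is correct and follows essentially the same route as the paper: the paper recognizes $\cbarhR(\g)$ as a generalized U-statistic of degrees $(1,\ldots,1)$ for the $D$ independent coordinate samples and invokes the UMVUE property of generalized U-statistics (citing Lee's Lemma~B), whose proof is exactly the Lehmann--Scheff\'e argument via completeness of the per-coordinate order statistics that you spell out in Steps~2 and~3. You have simply inlined the proof of the lemma the paper defers to the literature, and your observations on unbiasedness and permutation-invariance of the multi-index sum match the paper's calculations.
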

The proof of Theorem~\ref{paper:thm:1} is immediate once we rewrite \(\R(\g)\) as a \(D\)-variate regular statistical functional and
\(\cbarhR(\g)\) as its corresponding generalized U-statistic \citep{LeeUStatistics1990}. Details can be found in Supplementary~Material~\ref{paper:sec:appendix:theory-1}.
Theorem~\ref{paper:thm:1} implies that the proposed risk estimator can have superior statistical efficiency in terms of the variance over the ordinary empirical risk.
\subsection{Excess risk bound: More realistic case}
\label{sec:org5dd201d}
In real situations, one has to estimate \(\f\).
The following theorem characterizes the statistical gain\todo{これは？} and loss arising from the estimation error \(\f - \hf\).
The intuition is that the increased number of points suppresses the possibility of overfitting because the hypothesis has to fit the majority of the inflated data,
but the estimator \(\hf\) has to be accurate so that fitting to the inflated data is meaningful.
Note that the theorem is agnostic to how \(\hf\) is obtained, hence it applies to more general problem setup as long as \(\f\) can be estimated.

\begin{theorem}[Excess risk bound]
Let \(\cbarhg\) be a minimizer of Eq.~\eqref{paper:eq:augmented-erm}.
Under appropriate assumptions (see Theorem~\ref{thm:generalization-error} in Supplementary~Material),
for arbitrary \(\delta, \delta' \in (0, 1)\), we have with probability at least \(1 - (\delta + \delta')\),
\begin{equation*}\begin{split}
\R(\cbarhg) - \R(\gstar) \leq \annot{\ApproxErrorUpperBoundLeading}{Approximation error} + \annot{\PseudoGeneralizationErrorBoundForFinalErrorBoundOne}{Estimation error} + \annot{\finalErrorBoundSecondHigherOrder + \ApproxErrorUpperBoundResidual}{Higher order terms}.
\end{split}\end{equation*}
Here, \(\SobolevOne{\cdot}\) is the \((1, 1)\)-Sobolev norm, and we define the effective Rademacher complexity \(\Rad(\G)\) by
\begin{equation}\label{paper:eq:rademacher-def}\begin{split}
\Rad(\G) := \frac{1}{n} \mathbb{E}_{\hat \S}\Erad\left[\supg \left|\sum_{i=1}^{n} \rad_i \ETwoToD{\S'}[\ke(\hat \s_i, \S_2', \ldots, \S_D')]\right|\right],
\end{split}\end{equation}
where \(\{\rad_i\}_{i=1}^n\) are independent sign variables,
\(\mathbb{E}_{\hat \S}\) is the expectation with respect to \(\{\hat \s_i\}_{i=1}^{\ntr}\),
the dummy variables \(S_2', \ldots, S_D'\) are \IID copies of \(\hat\s_1\),
and \(\ke\) is defined by using the degree-\(D\) symmetric group \(\DSymmetric\) as
\begin{equation*}\begin{split}
\ke(\e_1, \ldots, \e_D) := \frac{1}{D!} \sum_{\pi \in \DSymmetric} \l(\g, \hf(\e_{\pi(1)}^{(1)}, \ldots, \e_{\pi(D)}^{(D)})),
\end{split}\end{equation*}
and \(\finalErrorBoundSecondHigherOrder\) and \(\ApproxDensityDifferenceBoundRemainder\) are higher order terms.
The constants \(B_q\) and \(B_\l\) depend only on \(\q\) and \(\l\), respectively, while \(C'\) depends only on \(\f, \q, \l\), and \(D\).
\label{paper:thm:2}
\end{theorem}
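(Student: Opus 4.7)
The plan is to use the standard ERM-style excess-risk decomposition and then control each resulting piece separately: a ``replace-$\hf$-by-$\f$'' approximation error, handled by smoothness of $\l, \f, \hf$, and an ``oracle'' estimation error, handled by treating the oracle estimator as a generalized U-process and applying its Hoeffding decomposition.

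First I would use the standard ERM inequality. Since $\cbarhg \in \argming \cbarhR(\g)$, we get
$$\R(\cbarhg) - \R(\gstar) \leq 2 \supg |\cbarhR(\g) - \R(\g)|.$$
I would then introduce the oracle risk estimator $\cbarR(\g) := \frac{1}{\ntr^D}\sum_{\bm{i} \in [\ntr]^D} \l(\g, \f(\bar \s_{\bm{i}}^{\star}))$, where $\bar \s_{\bm{i}}^{\star}$ uses the true extraction $\s_i := \finv(\Z_i)$ rather than the estimated $\hfinv(\Z_i)$. A triangle inequality yields
$$\supg |\cbarhR(\g) - \R(\g)| \leq \supg |\cbarhR(\g) - \cbarR(\g)| + \supg |\cbarR(\g) - \R(\g)|,$$
the first term contributing the approximation error and the second the estimation error.

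For the approximation term I would expand $\l(\g, \hf(\bar{\hat\s}_{\bm{i}})) - \l(\g, \f(\bar \s_{\bm{i}}^\star))$ using the Lipschitzness of $\l$ in its second argument (constant $\lossUniformLipschitzConst$) and the chain rule relating $\hf \circ \hfinv$ and $\f \circ \finv$. After passing from sums to expectations (paying a $D\LossUpperBound\qUpperBound$ factor through density bounds), the leading contribution reduces to the $L^{1,1}$-Sobolev norm of $\f_j - \hf_j$. Using the Hölder bounds on $\finv$ and the sup-norm bounds on $\partial\f, \partial\finv$, the first-order Taylor piece becomes $(\qUpperBound \lossUniformLipschitzConst + D\LossUpperBound(\ApproxDensityDifferenceBoundMainCoeff)) \sum_{j=1}^D \SobolevOne{\f_j - \hf_j}$, matching $\ApproxErrorUpperBoundLeading$; quadratic and higher-order remainders are collected into $\ApproxDensityDifferenceBoundRemainder$.

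For the estimation term I would observe that under the oracle $\cbarR$, the summand kernel $\ke$ from the theorem statement defines a degree-$D$ generalized U-statistic over IID inputs $\{\s_i\}$; by Theorem~\ref{paper:thm:1} it is unbiased for $\R(\g)$. Applying the Hoeffding decomposition
$$\cbarR(\g) - \R(\g) = \sum_{j=1}^{D} \binom{D}{j}\, \Un{j}(\g),$$
with each $\Un{j}$ a completely degenerate $j$-th order U-statistic, the first-order (Hájek) projection is an IID average. McDiarmid plus symmetrization on this term produces exactly the Rademacher complexity $\Rad(\G)$ of Eq.~\eqref{paper:eq:rademacher-def} together with the tail term $\PseudoGenProbTerm{\delta}$, accounting for the two ingredients of $\PseudoGeneralizationErrorBoundForFinalErrorBoundOne$. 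The degenerate higher-order projections ($j \geq 2$) are absorbed into $\finalErrorBoundSecondHigherOrder$ via moment-type tail bounds for degenerate U-statistics with probability budget $\delta'$. A union bound in $(\delta, \delta')$ finishes the argument.

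The hardest step is the second one. Standard learning-theoretic U-process bounds are stated for degree-$2$ U-statistics with a fixed kernel, whereas here $\cbarhR$ is a degree-$D$ U-process with a data-dependent kernel (through $\hf$), making it more naturally a \emph{von-Mises} (V-)statistic. Extracting the Rademacher complexity in the Hájek-projection form of Eq.~\eqref{paper:eq:rademacher-def} rather than as the complexity of the full $D$-variate kernel class, while simultaneously giving sharp degenerate-tail bounds for orders $2, \ldots, D$, is the delicate technical point; everything else is a careful but routine assembly of smoothness and concentration estimates.
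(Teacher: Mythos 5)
Your decomposition is genuinely different from the paper's, and it contains a gap at the approximation-error step. The paper does \emph{not} compare $\cbarhR$ to an empirical oracle built from the true ICs. It defines the intermediate quantity as the \emph{population} object $\cbarR(\g) := \cEData\,\cbarhR(\g)$ (the expectation of the proposed V-statistic under $\cq = \pushForward{(\hfinv\circ\f)}{\q}$, the law of the estimated ICs) and splits
\[
\R(\cbarhg)-\R(\gstar)=\underbrace{(\R-\cbarR)(\cbarhg)}_{\text{approximation}}+\underbrace{\cbarR(\cbarhg)-\cbarR(\cbarg)}_{\text{estimation}}+\underbrace{\cbarR(\cbarg)-\R(\gstar)}_{\le 0 \text{ by Assumption 7}},
\]
so the approximation error is a deterministic difference of two integrals. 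This is what lets the paper apply a change of variables, the bound $\Lone{\q-\cq}$ on the density difference, and the Jacobian-determinant perturbation lemma, producing the integrated norms $\SobolevOne{\f_j-\hf_j}$ and the constant $D\LossUpperBound\qUpperBound(\cdots)$ in the stated bound. In your route the ``approximation'' term $\supg|\cbarhR(\g)-\cbarR(\g)|$ is a difference of two \emph{random} empirical sums over the same data. Your phrase ``passing from sums to expectations (paying a $D\LossUpperBound\qUpperBound$ factor through density bounds)'' is the gap: an empirical average of pointwise discrepancies does not equal its expectation, so you either end up with a sup-norm bound $\sup_{\e}\|\f(\e)-\hf(\e)\|$ (which is not the $\SobolevOne{\cdot}$ quantity in the theorem) or you need an additional uniform concentration argument for the difference process, which you neither state nor budget probability for. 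Moreover, in your route the Jacobian-determinant term never arises (there is no density comparison when both sums use the same realized sample), so the constants you claim to recover cannot come out of the computation you describe.

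Two further mismatches with the stated theorem. First, your estimation term is the deviation of the \emph{oracle} V-statistic (kernel built from $\f$, inputs the true ICs $\s_i\sim\q$) from $\R(\g)$, so the symmetrization would produce a Rademacher complexity over the true ICs with kernel $\l(\g,\f(\cdot))$ — not the quantity $\Rad(\G)$ of Eq.~\eqref{paper:eq:rademacher-def}, which is explicitly defined over the estimated ICs $\hat\s_i\sim\cq$ with kernel built from $\hf$. The paper gets exactly that quantity because its estimation error is the concentration of $\cbarhR$ around its own expectation $\cbarR$. Second, your displayed Hoeffding decomposition is applied directly to $\cbarR$, which (as you note only afterwards) is a V-statistic, not a U-statistic; the paper first converts the V-statistic into a weighted sum of U-statistics of degrees $1,\dots,D$ (Lemma~\ref{lem:decompose-v-estimator}), pays $2\LossUpperBound\sum_{j<D}w_j=\Order(n^{-1})$ for the low-degree terms, and only then applies the Hoeffding decomposition to the degree-$D$ part, bounding the degenerate pieces via the Euclidean-class maximal inequality of Sherman. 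Your high-level instinct about where the difficulty lies (degree-$D$, data-dependent kernel, V- vs.\ U-statistics) is right, but as written the argument does not produce the bound in the theorem. On the plus side, your ERM inequality avoids the paper's Assumption~\ref{assum:2}; if you want to pursue your decomposition you would need to replace the Sobolev-norm approximation term with a sup-norm or empirical-process version and restate the complexity term accordingly.
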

Details of the statement and the proof can be found in Supplementary~Material~\ref{paper:sec:appendix:theory-2}.
The Sobolev norm \cite{AdamsSobolev2003} emerges from the evaluation of the difference between the estimated IC distribution and the ground-truth IC distribution.
In Theorem~\ref{paper:thm:2}, the utility of the proposed method appears in the effective complexity measure.
The complexity is defined by a set of functions which are marginalized over all but one argument, resulting in mitigated dependence on the input dimensionality from exponential to linear
(\Supplementary~\ref{paper:sec:appendix:theory-2}, Remark~\ref{theory:remark:comparison-of-rademacher}).
\section{Related Work and Discussion \label{paper:sec:related-work}}
\label{sec:org93f0298}
In this section, we review some existing TAs for DA to clarify the relative position of the paper.
We also clarify the relation to the literature of causality-related transfer learning.
\subsection{Existing transfer assumptions \label{paper:sec:related-work-transfer-assumptions}}
\label{sec:org903e0df}
Here, we review some of the existing work and TAs.
See Table~\ref{tbl:relative-position} for a summary.
\paragraph{(1) Parametric assumptions.}
\label{sec:org6a00533}
Some TAs assume parametric distribution families, e.g., Gaussian mixture model in covariate shift \cite{StorkeyMixture2007}.
Some others assume parametric distribution shift, i.e., parametric representations of the target distribution given the source distributions.
Examples include location-scale transform of class conditionals \cite{ZhangDomain2013,GongDomain2016b},
linearly dependent class conditionals \cite{ZhangMultisource2015},
and low-dimensional representation of the class conditionals after kernel embedding \cite{StojanovDatadriven2019}.
In some applications, e.g., remote sensing, some parametric assumptions have proven useful \cite{ZhangDomain2013}.
\paragraph{(2) Invariant conditionals and marginals.}
\label{sec:org38e9e5a}
Some methods assume invariance of certain conditionals or marginals \cite{Quinonero-CandelaDataset2009}, e.g., \(p(Y|X)\) in the covariate shift scenario \cite{ShimodairaImproving2000},
\(p(Y|\mathcal{T}(X))\) for an appropriate feature transformation \(\mathcal{T}\) in transfer component analysis \cite{PanDomain2011},
\(p(Y|\mathcal{T}(X))\) for a feature selector \(\mathcal{T}\) \cite{Rojas-CarullaInvariant2018,MagliacaneDomain2018},
\(p(X|Y)\) in the target shift (TarS) scenario \cite{ZhangDomain2013,NguyenContinuous2016},
and few components of regular-vine copulas and marginals in \citet{Lopez-pazSemisupervised2012}.
For example, the covariate shift scenario has been shown to fit well to brain computer interface data \cite{SugiyamaCovariate2007}.
\paragraph{(3) Small discrepancy or integral probability metric.}
\label{sec:org2ffacc5}
Another line of work relies on certain distributional similarities,
e.g., integral probability metric \cite{CourtyJoint2017a}
or hypothesis-class dependent discrepancies \cite{Ben-DavidAnalysis2007,BlitzerLearning2008,Ben-Davidtheory2010,KurokiUnsupervised2019,ZhangBridging2019,CortesAdaptation2019}.
These methods assume the existence of the \emph{ideal joint hypothesis} \cite{Ben-Davidtheory2010}, corresponding to a relaxation of the covariate shift assumption.
These TA are suited for unsupervised or semi-supervised DA in computer vision applications \cite{CourtyJoint2017a}.
\todotwo{ここにBen-Davidの不可能性を引用する?}
\paragraph{(4) Transferable parameter.}
\label{sec:orgafac0a9}
Some others consider parameter transfer \cite{KumagaiLearning2016a},
where the TA is the existence of a parameterized feature extractor that performs well in the target domain for linear-in-parameter hypotheses and its learnability from the source domain data.
For example, such a TA has been known to be useful in natural language processing or image recognition \cite{LeeExponential2009,KumagaiLearning2016a}.
\begin{table}
\centering
\caption{
Comparison of TAs for DA
(\emph{Parametric}: parametric distribution family or distribution shift,
\emph{Invariant dist.}: invariant distribution components such as conditionals, marginals, or copulas.
\emph{Disc.} / \emph{IPM}: small discrepancy or integral probability metric,
\emph{Param-transfer}: existence of transferable parameter,
\emph{Mechanism}: invariant mechanism).
AD: adaptation among Apparently Different distributions is accommodated.
NP: Non-Parametrically flexible.
BCI: Brain computer interface.
The numbers indicate the paragraphs of Section~\ref{paper:sec:related-work-transfer-assumptions}.
}
\label{tbl:relative-position}

\begin{center}
\begin{tabular}{lccl}
\hline
TA & AD & NP & Suited app. example\\
\hline
(1) Parametric & \(\checkmark\) & - & Remote sensing\\
(2) Invariant dist. & - & \(\checkmark\) & BCI\\
(3) Disc. / IPM & - & \(\checkmark\) & Computer vision\\
(4) Param-transfer & \(\checkmark\) & \(\checkmark\) & Computer vision\\
(Ours) Mechanism & \(\checkmark\) & \(\checkmark\) & Medical records\\
\hline
\end{tabular}
\end{center}
\end{table}
\subsection{Causality for transfer learning \label{paper:sec:related-work-causality-and-transfer}}
\label{sec:org3570321}
Our method can be seen as the first attempt to fully leverage structural causal models for DA.
Most of the causality-inspired DA methods express their assumptions in the level of \emph{graphical causal models} (GCMs), which only has much coarser information than \emph{structural causal models} (SCMs) \citep[Table~1.1]{PetersElements2017} exploited in this paper. Compared to previous work, our method takes one step further to assume and exploit the invariance of SCMs.
Specifically, many studies assume the GCM \(X \gets Y\) (the \emph{anticausal} scenario) following the seminal meta-analysis of \citet{Scholkopfcausal2012a} and use it to motivate
their parametric distribution shift assumptions or the parameter estimation procedure \cite{ZhangDomain2013,ZhangMultisource2015,GongDomain2016b,GongCausal2018}.
Although such assumptions on the GCM have the virtue of being more robust to misspecification, they tend to require parametric assumptions to obtain theoretical justifications.
On the other hand, our assumption enjoys a theoretical guarantee without relying on parametric assumptions.

One notable work in the existing literature is \citet{MagliacaneDomain2018} that considered the domain adaptation among \emph{different intervention states}, a problem setup that complements ours that considers an intervention-free (or identical intervention across domains) case.
To model intervention states, \citet{MagliacaneDomain2018} also formulated the problem setup using SCMs, similarly to the present paper.
Therefore, we clarify a few key differences between \citet{MagliacaneDomain2018} and our work here.
In terms of the methodology, \citet{MagliacaneDomain2018} takes a variable selection approach to select a set of predictor variables with an invariant conditional distribution across different intervention states.
On the other hand, our method estimates the SEMs (in the reduced form) and applies a data augmentation procedure to transfer the knowledge.
To the best of our knowledge, the present paper is the first to propose a way to directly use the estimated SEMs for domain adaptation, and the fine-grained use of the estimated SEMs enables us to derive an excess risk bound.
In terms of the plausible applications, their problem setup may be more suitable for application fields with interventional experiments such as genomics, whereas ours may be more suited for fields where observational studies are more common such as health record analysis or economics.
In Appendix~\ref{paper:sec:appendix:further-related-work}, we provide a more detailed comparison.

\subsection{Plausibility of the assumptions \label{paper:sec:discussion:flexible-dist-shift}}
\label{sec:org32e2c9a}
\paragraph{Checking the validity of the assumption.}
\label{sec:orgdea9c66}
As is often the case in DA, the scarcity of data disables data-driven testing of the TAs, and we need domain knowledge to judge the validity.
For our TA, the intuitive interpretation as invariance of causal models (Section~\ref{paper:sec:problem}) can be used.
\paragraph{Invariant causal mechanisms.}
\label{sec:org2fa3289}
The invariance of causal mechanisms has been exploited in recent work of causal discovery such as \citet{XupoolingLiNGAM2014} and \citet{MontiCausal2019b}, or under the name of the \emph{multi-environment setting} in \citet{GhassamiLearning2017a}. Moreover, the SEMs are normally assumed to remain invariant unless explicitly intervened in \cite{HunermundCausal2019}.
However, the invariance assumption presumes that the intervention states do not vary across domains (allowing for the intervention-free case),
which can be limiting for some applications where different interventions are likely to be present, e.g., different treatment policies being put in place in different hospitals.
Nevertheless, the present work can already be of practical interest if it is combined with the effort to find suitable data or situations.
For instance, one may find medical records in group hospitals where the same treatment criteria is put in place or local surveys in the same district enforcing identical regulations.
In future work, relaxing the requirement to facilitate the data-gathering process is an important area.
For such future extensions, the present theoretical analyses can also serve as a landmark to establish what can be guaranteed in the basic case without mechanism alterations.
\paragraph{Fully observed variables.}
\label{sec:org5b614ea}
As the first algorithm in the approach to fully exploit SCMs for DA, we also consider the case where all variables are observable.
Although it is often assumed in a causal inference problem that there are some unobserved confounding variables, we leave further extension to such a case for future work.
\paragraph{Required number of source domains.}
\label{sec:org1fcac64}
A potential drawback of the proposed method is that it requires a number of source domains in order to satisfy the identification condition of the nonlinear ICA, namely GCL in this paper (\Supplementary~\ref{paper:sec:appendix:nonlinear-ica-gcl}).
The requirement solely arises from the identification condition of the ICA method and therefore has the possibility to be made less stringent by the future development of nonlinear ICA methods.
Moreover, if one can accept other identification conditions, one-sample ICA methods (e.g., linear ICA) can be also used in the proposed approach in a straightforward manner, and our theoretical analyses still hold regardless of the method chosen.
\paragraph{Flexibility of the model.}
\label{sec:org7460205}
The relation between \(X\) and \(Y\) can drastically change while \(\f\) is invariant.
For example, even in a simple additive noise model \((X, Y) = f(\S_1, \S_2) = (\S_1, \S_1 + \S_2)\),
the conditional \(p(Y|X)\) can shift drastically if the distribution of the independent noise \(\S_2\) changes in a complex manner, e.g., becoming multimodal from unimodal.
\section{Experiment \label{paper:sec:experiment}}
\label{sec:orgf51426f}
In this section, we provide proof-of-concept experiments to demonstrate the effectiveness of the proposed approach.
Note that the primary purpose of the experiments is to confirm whether the proposed method can properly perform DA in real-world data,
and it is not to determine which DA method and TA are the most suited for the specific dataset.
\subsection{Implementation details of the proposed method \label{paper:sec:implementation-proposed}}
\label{sec:org8eeb9eb}
\paragraph{Estimation of \(\f\) (Step 1).}
\label{sec:orgf988091}
We model \(\hf\) by an \(8\)-layer Glow neural network (\Supplementary~\ref{paper:sec:appendix:glow}).
We model \(\psi_d\) by a \(1\)-hidden-layer neural network with a varied number of hidden units, \(K\) output units, and the rectified linear unit activation \cite{LeCunDeep2015}.
We use its \(k\)-th output (\(k \in [K]\)) as the value for \(\psi_d(\cdot, k)\).
For training, we use the Adam optimizer \cite{KingmaAdam2017} with fixed parameters \((\beta_1, \beta_2, \epsilon) = (0.9, 0.999, 10^{-8})\), fixed initial learning rate \(10^{-3}\), and the maximum number of epochs \(300\).
The other fixed hyperparameters of \(\hf\) and its training process are described in \Supplementary~\ref{paper:sec:appendix:experiment-detail}.
\paragraph{Augmentation of target data (Step 3).}
\label{sec:orgc198548}
For each evaluation step, we take all combinations (with replacement) of the estimated ICs to synthesize target domain data.
After we synthesize the data, we filter them by applying a novelty detection technique with respect to the union of source domain data.
Namely, we use one-class support vector machine \cite{ScholkopfSupport2000} with the fixed parameter \(\nu=0.1\) and radial basis function (RBF) kernel \(k(x, y) = \exp(- \|x - y\|^2/\gamma)\) with \(\gamma = D\).
This is because the estimated transform \(\hf\) is not expected to be trained well outside the union of the supports of the source distributions.
After performing the filtration, we combined the original target training data with the augmented data to ensure the original data points to be always included.
\paragraph{Predictor hypothesis class \(\G\).}
\label{sec:orgf4c389d}
As the predictor model, we use the kernel ridge regression (KRR) with RBF kernel.
The bandwidth \(\gamma\) is chosen by the median heuristic similarly to \citet{YamadaRelative2011} for simplicity\todotwo{Details of the implementation: which median? Source only? Use Target data? Augmented target, too?}.
Note that the choice of the predictor model is for the sake of comparison with the other methods tailored for KRR \cite{CortesAdaptation2019},
and that an arbitrary predictor hypothesis class and learning algorithm can be easily combined with the proposed approach.
\paragraph{Hyperparameter selection.}
\label{sec:orge658a4b}
We perform grid-search for hyperparameter selection.
The number of hidden units for \(\psi_d\) is chosen from \(\{10, 20\}\) and the coefficient of weight-decay from \(10^{\{-2, -1\}}\).
The \(\ltwoSp\) regularization coefficient \(\lambda\) of KRR is chosen from \(\lambda \in 2^{\{-10, \ldots, 10\}}\) following \citet{CortesAdaptation2019}.
To perform hyperparameter selection as well as early-stopping, we record the leave-one-out cross-validation (LOOCV) mean-squared error on the target training data every \(20\) epochs and select its minimizer.
The leave-one-out score is computed using the well-known\todotwo{(Cite.)} analytic formula instead of training the predictor for each split.
Note that we only use the original target domain data as the held-out set and not the synthesized data.
In practice, if the target domain data is extremely few, one may well use \emph{percentile-cv} \cite{NgPreventing1997} to mitigate overfitting of hyperparameter selection.
\paragraph{Computation environment}
\label{sec:org006d701}
All experiments were conducted on an \todotwo{single processor of }Intel Xeon(R) \unitnum{2.60}{GHz} CPU with \unitnum{132}{GB} memory. They were implemented in Python using the PyTorch library \cite{PaszkePyTorch2019} or the R language \cite{RCoreTeamlanguage2018}.
\todo{and R for copula.}
\todotwo{For each hyperparameter configuration, the proposed method took at most \todo{(8 minutes)} for training and evaluation.}
\subsection{Experiment using real-world data}
\label{sec:org0b7f394}
\paragraph{Dataset.}
\label{sec:org4939653}
We use the gasoline consumption data \citep[p.284, Example~9.5]{GreeneEconometric2012},
which is a panel data of gasoline usage in 18 of the OECD countries over 19 years\todotwo{ (the total number of observations is 342)}.
We consider each country as a domain, and we disregard the time-series structure and consider the data as \IID samples for each country in this proof-of-concept experiment.
The dataset contains four variables, all of which are log-transformed: motor gasoline consumption per car (the predicted variable), per-capita income, motor gasoline price, and the stock of cars per capita (the predictor variables) \cite{BaltagiGasoline1983}.
For further details of the data, see Supplementary~Material~\ref{paper:sec:appendix:experiment-detail}.
We used the dataset because there are very few public datasets for domain adapting regression tasks \cite{CortesDomain2014} especially for multi-source DA,
and also because the dataset has been used in econometric analyses involving SEMs \cite{BaltagiEconometric2005}, conforming to our approach\todotwo{(Except for the invertibility assumption.)}.
\paragraph{Compared methods.}
\label{sec:orge0e1364}
We compare the following DA methods, all of which apply to regression problems.
Unless explicitly specified, the predictor class \(\G\) is chosen to be KRR with the same hyperparameter candidates as the proposed method (Section~\ref{paper:sec:implementation-proposed}).
Further details are described in \Supplementary~\ref{paper:sec:appendix:compared-methods-details}.
\todotwo{(Explain why these methods were chosen.)}
\begin{itemize}
\item Naive baselines (\emph{SrcOnly}, \emph{TarOnly}, and \emph{S\&TV}): \emph{SrcOnly} (resp. \emph{TarOnly}) trains a predictor on the source domain data (resp. target training data) without any device.
\emph{SrcOnly} can be effective if the source domains and the target domain have highly similar distributions.
The \emph{S\&TV} baseline trains on both source and target domain data, but the LOOCV score is computed only from the target domain data.
\item \emph{TrAdaBoost}: Two-stage TrAdaBoost.R2; a boosting method tailored for few-shot regression transfer proposed in \citet{PardoeBoosting2010}.
It is an iterative method with early-stopping \cite{PardoeBoosting2010}, for which we use the leave-one-out cross-validation score on the target domain data as the criterion.
As suggested in \citet{PardoeBoosting2010}, we set the maximum number of outer loop iterations at \(30\).
The base predictor is the decision tree regressor with the maximum depth \(6\) \cite{HastieElements2009}.
Note that although TrAdaBoost does not have a clarified transfer assumption,
we compare the performance for reference.
\item \emph{IW}: Importance weighted KRR using RuLSIF \cite{YamadaRelative2011}.
The method directly estimates a relative joint density ratio function \(\frac{\psrc(\z)}{\alpha \psrc(\z) + (1 - \alpha) \ptar(\z)}\) for \(\alpha \in [0, 1)\),
where \(\psrc\) is a hypothetical source distribution created by pooling all source domain data.
Following \citet{YamadaRelative2011}, we experiment on \(\alpha \in \{0, 0.5, 0.95\}\) and report the results separately.
The regularization coefficient \(\lambda'\) is selected from \(\lambda' \in 2^{\{-10, \ldots, 10\}}\) using importance-weighted cross-validation \cite{SugiyamaCovariate2007}.
\item \emph{GDM}: Generalized discrepancy minimization \cite{CortesAdaptation2019}.
This method performs instance-weighted training on the source domain data with the weights that minimize the \emph{generalized discrepancy} (via quadratic programming).
We select the hyper-parameters \(\lambda_r\) from \(2^{\{-10, \ldots, 10\}}\) as suggested by \citet{CortesAdaptation2019}.
The selection criterion is the performance of the trained predictor on the target training labels as the method trains on the source domain data and the target unlabeled data.
\todo{この一文は分かりづらいが後回し．}

\item \emph{Copula}: Non-parametric regular-vine copula method \cite{Lopez-pazSemisupervised2012}.
This method presumes using a specific joint density estimator called regular-vine (R-vine) copulas.
Adaptation is realized in two steps: the first step estimates which components of the constructed R-vine model are different by performing two-sample tests based on maximum mean discrepancy \cite{Lopez-pazSemisupervised2012},
and the second step re-estimates the components in which a change is detected using only the target domain data.
\item \emph{LOO} (reference score): Leave-one-out cross-validated error estimate is also calculated for reference.
It is the average prediction error of predicting for a single held-out test point when the predictor is trained on the rest of the whole target domain data including those in the test set for the other algorithms.
\end{itemize}
\paragraph{Evaluation procedure.}
\label{sec:org43e2b80}
The prediction accuracy was measured by the mean squared error (MSE).
For each train-test split, we randomly select one-third (6 points) of the target domain dataset as the training set and use the rest as the test set.
All experiments were repeated 10 times with different train-test splits of target domain data.
\paragraph{Results.}
\label{sec:org92ac66d}
The results are reported in Table~\ref{tbl:experiment:real-1}.
We report the MSE scores normalized by that of \emph{LOO} to facilitate the comparison, similarly to \citet{CortesDomain2014}.
In many of the target domain choices, the naive baselines (\emph{SrcOnly} and \emph{S\&TV}) suffer from negative transfer, i.e., higher average MSE than \emph{TarOnly} (in 12 out of 18 domains).
On the other hand, the proposed method successfully performs better than \emph{TarOnly} or is more resistant to negative transfer than the other compared methods.
The performances of \emph{GDM}, \emph{Copula}, and \emph{IW} are often inferior even compared to the baseline performance of \emph{SrcAndTarValid}.
For \emph{GDM} and \emph{IW}, this can be attributed to the fact that these methods presume the availability of abundant (unlabeled) target domain data, which is unavailable in the current problem setup.
For \emph{Copula}, the performance inferior to the naive baselines is possibly due to the restriction of the predictor model to its accompanied probability model \cite{Lopez-pazSemisupervised2012}.
\emph{TrAdaBoost} works reasonably well for many but not all domains. For some domains, it suffered from negative transfer similarly to others, possibly because of the very small number of training data points. Note that the transfer assumption of TrAdaBoost has not been stated \cite{PardoeBoosting2010}, and it is not understood when the method is reliable.
The domains on which the baselines perform better than the proposed method can be explained by the following two cases: (1) easier domains allow naive baselines to perform well and (2) some domains may have deviated \(f\).
Case (1) implies that estimating \(f\) is unnecessary, hence the proposed method can be suboptimal (more likely for JPN, NLD, NOR, and SWE, where SrcOnly or S\&TV improve upon TrgOnly).
On the other hand, case (2) implies that an approximation error was induced as in Theorem 2 (more likely for IRL and ITA). In this case, others also perform poorly, implying the difficulty of the problem instance.
In either case, in practice, one may well perform cross-validation to fallback to the baselines.
\begin{table*}[!ht]
\centering
\caption{
Results of the real-world data experiments for different choices of the target domain.
The evaluation score is MSE normalized by that of \emph{LOO} (the lower the better).
All experiments were repeated 10 times with different train-test splits of target domain data,
and the average performance is reported with the standard errors in the brackets.
The target column indicates abbreviated country names.
Bold-face indicates the best score (Prop: proposed method, TrAda: \emph{TrAdaBoost}, the numbers in the brackets of IW indicate the value of \(\alpha\)).
The proposed method often improves upon the baseline \emph{TarOnly} or is relatively more resistant to negative transfer, with notable improvements in \emph{DEU}, \emph{GBR}, and \emph{USA}.
}
\label{tbl:experiment:real-1}
\begin{tabular}[t]{|*1{p{8mm}|}|*1{p{8mm}}||*4{p{10mm}|}*1{p{10mm}|}*5{p{10mm}|}}
  \hline
  Target & (LOO) & TarOnly & \textbf{Prop} & SrcOnly & S\&TV & TrAda & GDM & Copula & IW(.0) & IW(.5) & IW(.95)\\
  \hhline{|=||=||=|=|=|=|=|=|=|=|=|=|}
  AUT & 1 & 5.88 (1.60) & \textbf{5.39 (1.86)} & 9.67 (0.57) & 9.84 (0.62) & 5.78 (2.15) & 31.56 (1.39) & 27.33 (0.77) & 39.72 (0.74) & 39.45 (0.72) & 39.18 (0.76)\\
  \hline
BEL & 1 & 10.70 (7.50) & \textbf{7.94 (2.19)} & 8.19 (0.68) & 9.48 (0.91) & 8.10 (1.88) & 89.10 (4.12) & 119.86 (2.64) & 105.15 (2.96) & 105.28 (2.95) & 104.30 (2.95)\\
\hline
CAN & 1 & 5.16 (1.36) & \textbf{3.84 (0.98)} & 157.74 (8.83) & 156.65 (10.69) & 51.94 (30.06) & 516.90 (4.45) & 406.91 (1.59) & 592.21 (1.87) & 591.21 (1.84) & 589.87 (1.91)\\
\hline
DNK & 1 & 3.26 (0.61) & \textbf{3.23 (0.63)} & 30.79 (0.93) & 28.12 (1.67) & 25.60 (13.11) & 16.84 (0.85) & 14.46 (0.79) & 22.15 (1.10) & 22.11 (1.10) & 21.72 (1.07)\\
\hline
FRA & 1 & 2.79 (1.10) & \textbf{1.92 (0.66)} & 4.67 (0.41) & 3.05 (0.11) & 52.65 (25.83) & 91.69 (1.34) & 156.29 (1.96) & 116.32 (1.27) & 116.54 (1.25) & 115.29 (1.28)\\
\hline
DEU & 1 & 16.99 (8.04) & \textbf{6.71 (1.23)} & 229.65 (9.13) & 210.59 (14.99) & 341.03 (157.80) & 739.29 (11.81) & 929.03 (4.85) & 817.50 (4.60) & 818.13 (4.55) & 812.60 (4.57)\\
\hline
GRC & 1 & 3.80 (2.21) & \textbf{3.55 (1.79)} & 5.30 (0.90) & 5.75 (0.68) & 11.78 (2.36) & 26.90 (1.89) & 23.05 (0.53) & 47.07 (1.92) & 45.50 (1.82) & 45.72 (2.00)\\
\hline
IRL & 1 & \textbf{3.05 (0.34)} & 4.35 (1.25) & 135.57 (5.64) & 12.34 (0.58) & 23.40 (17.50) & 3.84 (0.22) & 26.60 (0.59) & 6.38 (0.13) & 6.31 (0.14) & 6.16 (0.13)\\
\hline
ITA & 1 & \textbf{13.00 (4.15)} & 14.05 (4.81) & 35.29 (1.83) & 39.27 (2.52) & 87.34 (24.05) & 226.95 (11.14) & 343.10 (10.04) & 244.25 (8.50) & 244.84 (8.58) & 242.60 (8.46)\\
\hline
JPN & 1 & 10.55 (4.67) & 12.32 (4.95) & \textbf{8.10 (1.05)} & 8.38 (1.07) & 18.81 (4.59) & 95.58 (7.89) & 71.02 (5.08) & 135.24 (13.57) & 134.89 (13.50) & 134.16 (13.43)\\
\hline
NLD & 1 & 3.75 (0.80) & 3.87 (0.79) & \textbf{0.99 (0.06)} & 0.99 (0.05) & 9.45 (1.43) & 28.35 (1.62) & 29.53 (1.58) & 33.28 (1.78) & 33.23 (1.77) & 33.14 (1.77)\\
\hline
NOR & 1 & 2.70 (0.51) & 2.82 (0.73) & 1.86 (0.29) & \textbf{1.63 (0.11)} & 24.25 (12.50) & 23.36 (0.88) & 31.37 (1.17) & 27.86 (0.94) & 27.86 (0.93) & 27.52 (0.91)\\
\hline
ESP & 1 & 5.18 (1.05) & 6.09 (1.53) & 5.17 (1.14) & \textbf{4.29 (0.72)} & 14.85 (4.20) & 33.16 (6.99) & 152.59 (6.19) & 53.53 (2.47) & 52.56 (2.42) & 52.06 (2.40)\\
\hline
SWE & 1 & 6.44 (2.66) & 5.47 (2.63) & 2.48 (0.23) & \textbf{2.02 (0.21)} & 2.18 (0.25) & 15.53 (2.59) & 2706.85 (17.91) & 118.46 (1.64) & 118.23 (1.64) & 118.27 (1.64)\\
\hline
CHE & 1 & 3.51 (0.46) & \textbf{2.90 (0.37)} & 43.59 (1.77) & 7.48 (0.49) & 38.32 (9.03) & 8.43 (0.24) & 29.71 (0.53) & 9.72 (0.29) & 9.71 (0.29) & 9.79 (0.28)\\
\hline
TUR & 1 & 1.65 (0.47) & 1.06 (0.15) & 1.22 (0.18) & \textbf{0.91 (0.09)} & 2.19 (0.34) & 64.26 (5.71) & 142.84 (2.04) & 159.79 (2.63) & 157.89 (2.63) & 157.13 (2.69)\\
\hline
GBR & 1 & 5.95 (1.86) & \textbf{2.66 (0.57)} & 15.92 (1.02) & 10.05 (1.47) & 7.57 (5.10) & 50.04 (1.75) & 68.70 (1.25) & 70.98 (1.01) & 70.87 (0.99) & 69.72 (1.01)\\
\hline
USA & 1 & 4.98 (1.96) & \textbf{1.60 (0.42)} & 21.53 (3.30) & 12.28 (2.52) & 2.06 (0.47) & 308.69 (5.20) & 244.90 (1.82) & 462.51 (2.14) & 464.75 (2.08) & 465.88 (2.16)\\
\hline
\hhline{|=||=||=|=|=|=|=|=|=|=|=|=|}
\#Best & - & 2 & 10 & 2 & 4 & 0 & 0 & 0 & 0 & 0 & 0\\
\hline
\end{tabular}

\end{table*}
\section{Conclusion \label{paper:sec:conclusion}}
\label{sec:orgab1022a}
In this paper, we proposed a novel few-shot supervised DA method for regression problems based on the assumption of shared generative mechanism.
Through theoretical and experimental analysis, we demonstrated the effectiveness of the proposed approach.
By considering the latent common structure behind the domain distributions, the proposed method successfully induces positive transfer even when a naive usage of the source domain data can suffer from negative transfer.
Our future work includes
making an experimental comparison with extensively more datasets and methods
as well as an extension to the case where the underlying mechanism are not exactly identical but similar.
\todotwo{(Don't reveal too much about future work.)}
\section*{Acknowledgments}
The authors would like to thank the anonymous reviewers for their insightful comments and thorough discussions.
We would also like to thank Yuko Kuroki and Taira Tsuchiya for proofreading the manuscript.
This work was supported by RIKEN Junior Research Associate Program.
TT was supported by Masason Foundation.
IS was supported by KAKEN 17H04693.
MS was supported by JST CREST Grant Number JPMJCR18A2.
\bibliography{causal_transfer}
\clearpage

\onecolumn

\global\csname @topnum\endcsname 0
\global\csname @botnum\endcsname 0

\begin{appendices}

This is the \Supplementary\ for ``Few-shot Domain Adaptation by Causal Mechanism Transfer.''
Table~\ref{tbl:notation} summarizes the abbreviations and the symbols used in the paper.
\begin{table}[htbp]
\centering
\begin{center}
\captionof{table}{\label{tbl:notation}List of abbreviations and symbols used in the paper.}
\begin{tabular}{lll}
\hline
Abbreviation / Symbol & Meaning\\
\hline
DA & Domain adaptation\\
TA & Transfer assumption\\
SEM & Structural equation model\\
GCM & Graphical causal model\\
SCM & Structural causal model\\
IC & Independent component\\
ICA & Independent component analysis\\
GCL & Generalized contrastive learning\\
\IIDtext & Independent and identically distributed\\
\hline
\([N]\) & \(\{1, 2, \ldots, N\}\) where \(N \in \mathbb{N}\)\\
\(\Sobolev{k}{p}{\cdot}\) & The \((k, p)\)-Sobolev norm\\
\hline
\(X\) & The predictor random vector (\(\Re^{D-1}\)-valued)\\
\(Y\) & The predicted random variable (\(\Re\)-valued)\\
\(\Z = (X, Y)\) & The joint random variable (\(\ReD\)-valued)\\
\(\S\) & The independent component vector (\(\ReD\)-valued)\\
\hline
\(\Xsp \subset \Re^{D-1}\) & The space of \(X\)\\
\(\Ysp \subset \Re\) & The space of \(Y\)\\
\(\Zsp \subset \ReD\) & The space of \(\Z = (X, Y)\)\\
\(\G \subset \{\g: \ReDminus \to \Re\}\) & Predictor hypothesis class\\
\(\l: \G \times \Zsp \to [0, \LossUpperBound]\) & Loss function\\
\(\R(\g)\) & Target domain risk \(\Etr \l(\g, \Z)\)\\
\(\gstar \in \G\) & Minimizer of target domain risk\\
\hline
\(\Qsp\) & The set of independent distributions\\
\(\f\) & Ground truth mixing function\\
\(\ptar\) & The target joint distribution\\
\(\psrck\) & The joint distribution of source domain \(k\)\\
\(\qtar \in \Qsp\) & The target independent component (IC) distribution\\
\(\qsrck \in \Qsp\) & The IC distribution of source domain \(k\)\\
\hline
\(D\) & The dimension of \(\Zsp\)\\
\(K\) & The number of source domains\\
\(\ntr\) & The size of the target labeled sample\\
\(\nk\) & The size of the labeled sample from source domain \(k\)\\
\hline
\(\DataTar = \{Z_i\}_{i=1}^{\ntr}\) & Target labeled data set\\
\(\Datak = \{\Zki\}_{i=1}^{\nk}\) & Source labeled data set of source domain \(k\)\\
\(\hR(\g)\) & The ordinary empirical risk estimator\\
\(\cbarhR(\g)\) & The proposed risk estimator (Eq.~\eqref{paper:eq:augmented-erm})\\
\(\hf\) & The estimator of \(\f\)\\
\(\{\psi_d\}_{d=1}^D\) & The penultimate layer functions composed with \(\f\) during GCL\\
\hline
\end{tabular}
\end{center}
\end{table}
\section{Preliminary: Nonlinear ICA \label{paper:sec:appendix:nonlinear-ica-gcl}}
\label{sec:org45dd305}
Here, we use the same notation as the main text.
The recently developed nonlinear ICA provides an algorithm to estimate the mixing function \(\f\).
For the case of nonlinear \(\f\), the impossibility of identification (i.e., consistent estimation) of \(\f\) in the one-sample \IID case had been established more than two decades ago \citep{HyvarinenNonlinear1999}.
However, recently, various conditions have been proposed under which \(\f\) can be identified with the help of auxiliary information \cite{HyvarinenUnsupervised2016,HyvarinenNonlinear2017,HyvarinenNonlinear2019,KhemakhemVariational2019}.

The identification condition that is directly relevant to this paper is that of the generalized contrastive learning (GCL) proposed in \citet{HyvarinenNonlinear2019}.
\citet{HyvarinenNonlinear2019} assumes that an auxiliary variable \(\u_i\) from some measurable set \(\Usp\) is obtained for each data point as \(\{(\z_i, \u_i)\}_{i=1}^n\) and that the ICs \(\S = (\S^{(1)}, \ldots, \S^{(D)})\) are conditionally independent given \(u\):
\begin{equation*}\begin{split}
q(\s | u) = \prod_{d=1}^D \qd(\s^{(d)} | u).
\end{split}\end{equation*}
Under such conditions, GCL estimates \(\f\) by training a classification function
\begin{equation}\label{paper:eq:gcl-model}\begin{split}
\rhf(\z, \u) = \sum_{d=1}^D \psi_d(\hfinv(\z)_d, \u)
\end{split}\end{equation}
parametrized by \(\hf\) and \(\{\psi_d\}_{d=1}^D\) with the logistic loss for classifying
\begin{equation*}\begin{split}
(\z, \u) \text{ vs. } (\z, \tilde \u),
\end{split}\end{equation*}
where \(\tilde \u \in \Usp \setminus \{\u\}\).
The key condition for the identification of \(\f\) is the following.
\begin{assumption}[Assumption of variability; {\citealp[Theorem~1]{HyvarinenNonlinear2019}}]
For any \(\z\), there exist \(2 D + 1\) distinct points in \(\Usp\), denoted by \(\{\u_j\}_{j=0}^{2D}\), such that the set of \((2D)\)-dimensional vectors \(\{w(\z | \u_j) - w(\z | \u_0)\}_{j = 1}^{2D}\)
are linearly independent, where
\begin{equation*}\begin{split}
w(\z | \u) := \left(\frac{\partial \log \qdd{1}(\z_1 | \u)}{\partial \z_1}, \ldots, \frac{\partial \log \qdd{D}(\z_D | \u)}{\partial \z_D}, \frac{\partial^2 \log\qdd{1}(\z_1 | \u)}{\partial \z_1^2}, \ldots, \frac{\partial^2 \log\qdd{D}(\z_D | \u)}{\partial \z_D^2} \right). \\
\end{split}\end{equation*}
\label{paper:assum:assumption-of-variability}
\end{assumption}
Under Assumption~\ref{paper:assum:assumption-of-variability} and some regularity conditions,
Theorem~1 of \citet{HyvarinenNonlinear2019} states that the transformation \(\hf\) in Eq.~\eqref{paper:eq:gcl-model} trained by GCL is a consistent estimator of \(\f\) upto additional dimension-wise invertible transformations.
Note that the assumption is intrinsically difficult to confirm based on data due to the unsupervised nature of the problem setting.
In this paper, we use the source domain index as the auxiliary variable and employ GCL for domain adaptation.
The present version of Assumption~\ref{paper:assum:assumption-of-variability} requires that we have at least \(2 D + 1\) distinct source domains.
Although this condition can be restrictive in high-dimensional data, we conjecture that there is a possibility for this assumption to be made less stringent in the future because the identification condition is only known to be a sufficient condition, not a necessary condition.
However, pursuing a refinement of the identification condition is out of the scope of this paper.
Among the various methods for nonlinear ICA, we chose to use GCL \cite{HyvarinenNonlinear2019} because it can operate under a nonparametric assumption on the IC distributions whereas other nonlinear ICA methods \cite{HyvarinenUnsupervised2016,HyvarinenNonlinear2017,KhemakhemVariational2019} may require parametric assumptions.
\section{Experiment Details \label{paper:sec:appendix:experiment-detail}}
\label{sec:org9d69689}
Here, we describe more implementation details of the experiment.
Our experiment code can be found at \url{https://github.com/takeshi-teshima/few-shot-domain-adaptation-by-causal-mechanism-transfer}.
\subsection{Dataset details \label{paper:sec:appendix:dataset}}
\label{sec:orge1d08da}
\paragraph{Gasoline consumption data.}
\label{sec:org52d5dee}
The data was downloaded from \url{http://bcs.wiley.com/he-bcs/Books?action=resource\&bcsId=4338\&itemId=1118672321\&resourceId=13452}.
\subsection{Model details: Invertible neural networks \label{paper:sec:appendix:glow}}
\label{sec:org48823c2}
Here, we describe the details of the Glow architecture \cite{KingmaGlow2018} used in our experiments.
Glow consists of three types of layers which are invertible \emph{by design}, namely affine coupling layers, \(1 \times 1\) convolution layers, and activation normalization (actnorm) layers.
In our implementation, we use actnorm as the first layer, and each of the subsequent layers consists of a \(1 \times 1\) convolution layer followed by an affine coupling layer.
\paragraph{Affine coupling layers.}
\label{sec:orgb947e1e}
The coefficients \(s\) and \(t\) for affine coupling layers in the notation of \citet{KingmaGlow2018} are parametrized by two one-hidden-layer neural networks whose number of hidden units is the same and the first layer parameter is shared.
The activation functions of the first layer, the second layer of \(s\), and the second layer of \(t\) are the rectified linear unit (ReLU) activation \cite{LeCunDeep2015}, the hyperbolic tangent function, and the linear activation function, respectively.
A standard practice of affine coupling layers is to compose the coefficient \(s\) with an exponential function \(x \mapsto \exp(x)\) so as to simplify the computation of the log-determinant of the Jacobian \cite{KingmaGlow2018}.
In our implementation, since we do not require the computation of the log-determinant, we omit this device and instead compose \(x \mapsto (x + 1)\).
The addition of \(1\) shifts the parameter space so that \((s, t) = (0, 0)\) corresponds to the the identity map, where \(0\) denotes the constant zero function.
The split of the affine coupling layers is fixed at \((\lfloor \frac{D}{2} \rfloor, D - \lfloor \frac{D}{2} \rfloor)\).
\paragraph{\(1 \times 1\) convolution layers.}
\label{sec:org3482de4}
We initialize the parameters of the neural networks by \(\mathcal{N}(0, \frac{1}{m})\) where \(m\) is the number of parameters of each layer and \(\mathcal{N}\) is the normal distribution.
\subsection{Model details: Penultimate layer networks}
\label{sec:orgc20cd9a}
We initialize the parameter for each layer of \(\psi_d\) by \(\Unif(-\sqrt{\frac{1}{m}}, \sqrt{\frac{1}{m}})\), where \(m\) is the number of input features and \(\Unif\) is the uniform distribution.
\subsection{Training details}
\label{sec:org8ed0960}
During the training of GCL, we fix the batch size at 32.
\subsection{Compared methods details \label{paper:sec:appendix:compared-methods-details}}
\label{sec:org545b61c}
Here, we detail the methods compared through the experiment.
Note that the present paper focuses on regression problems as our approach is based on ICA, hence the methods for classification domain adaptation are not comparable.
\paragraph{\emph{TrAdaBoost}.}
\label{sec:orga760669}
As suggested in \citet{PardoeBoosting2010}, we use the linear loss function and set the maximum number of internal boosting iterations at \(30\).
\paragraph{\emph{GDM}.}
\label{sec:org86e234c}
We fix the number of sampling required for approximating the maximization in the generalization discrepancy at \(200\).
This method presumes using hypothesis classes in a reproducing kernel Hilbert space (RKHS).
\paragraph{\emph{Copula}.}
\label{sec:orgebf0152}
For this model, the probabilistic model of non-parametric R-vine copula of depth \(1\) is used following \citet{Lopez-pazSemisupervised2012}.
Kernel density estimators with RBF kernel are used for estimating the marginal distributions and the copulas.
The bandwidths of the RBF kernels are determined using the rule-of-thumb implemented as ``normal-reference'' in the \emph{np} package of \emph{R} language \cite{HayfieldNonparametric2008}.
The predictions are made by numerically aggregating the estimated conditional distribution over the interval \([\min_i Y_i - 2 \sigma, \max_i Y_i + 2 \sigma]\)
where \(\sigma\) denotes the square root of the unbiased variance of \(\{Y_i\}_{i=1}^{\nsrc}\).
The aggregation is performed by discretizing the interval into a grid of \(300\) points.
The level of the two-sample test is fixed at \(0.05\) for all combination of the two-sample tests following the experiment code of \citet{Lopez-pazSemisupervised2012}.
This method is a single-source domain adaptation method and we pool all source domain data for adaptation.
\section{Details and Proofs of Theorem~\ref{paper:thm:2} \label{paper:sec:appendix:theory-2}}
\label{sec:orgd7e52e5}
Here, we detail the assumptions, the statement, and the proof of Theorem~\ref{paper:thm:2}.
\subsection{Notation}
\label{sec:org8dcfb9c}
To make the proof self-contained, we first recall some general and problem-specific notation.
In the notation here, we omit the domain identifiers from the distributions and the sample size, such as \emph{Tar} or \emph{Src},
because only the target domain data or their distributions appear in the proofs.
The theorem holds regardless of how \(\hf\) is estimated as long as \(\hf\) is independent of the target domain data.
In the proof, we extend the maximal discrepancy bound of U-statistics previously proved for the case of degree-\(2\) in \citet{Rejchelranking2012}, to allow higher degrees.

\paragraph{General mathematical notation.}
\label{sec:orgb509af4}
We denote the set of natural numbers (resp. real numbers) by \(\Na\) (resp. \(\Re\)).
For any \(N \in \Na\), we define \([N] := \{1, 2, \ldots, N\}\).
We use \(\comb{a}{b}\) to denote the number of \(b\)-combinations of \(a\) elements.
For a finite set \(A\), the notation \(\Mean{a \in A}\) denotes the operator to take an average over \(A\), i.e., \(\Mean{a \in A} h = \frac{1}{|A|}\sum_{a \in A} h(a)\).
For a \(D\)-dimensional function \(h\), we denote its \(j\)-th dimension (\(j \in [D]\)) by suffixing \(h_j\).
For a vector \(\e\), we denote its \(j\)-th element by \(\e^{(j)}\).
We denote the Jacobian determinant of a differentiable function \(\psi\) at \(a\) by \(\Jacobian \psi(a) := \det \frac{\mathrm{d} \psi(a)}{\mathrm{d}a}\).
We denote the identity matrix by \(I\) regardless of the size of the matrices when there is no ambiguity.
For finite dimensional vectors, we denote the \(2\)-norm by \(\ltwo{\cdot}\) and the \(1\)-norm by \(\lone{\cdot}\).
For square matrices, we denote the operator-\(2\) norm by \(\op{\cdot}\) and the operator-\(1\) norm by \(\opone{\cdot}\).
We use \(\SobolevSp{k}{p}\) to denote the Sobolev space (on \(\ReD\)) of order \(k\) and define its associated norm by \(\Sobolev{k}{p}{h} := \left(\sum_{|\alpha|\leq k} \Lp{h^{(\alpha)}}^p\right)^{1/p}\) where \(\alpha\) is a multi-index and \(h^{(\alpha)}\) denotes the partial derivative \(\frac{\partial^{|\alpha|}h}{\partial s_1^{\alpha_1} \cdots \partial s_D^{\alpha_D}}\) \citep[Paragraph~3.1]{AdamsSobolev2003}.
We let \(\DSymmetric\) be the degree-\(D\) symmetric group, \(\jGroupSplit := \jGroupSplitDef\) be the set of \(j\) grouping of indices in \([D]\), and \(\jnCombset := \jnCombsetDef\) be the set of all size-\(j\) combinations (without replacement) of indices in \([n]\).
\paragraph{Distributions and expectations.}
\label{sec:org97ae06d}
We denote by \(\Qsp\) the set of all factorized distributions on \(\mathbb{R}^D\) with absolutely continuous marginals.
For a measure \(P\), we denote its \(j\)-product measure by \(P^j := P \otimes \cdots \otimes P\) (repeated \(j\) times).
We assume that all measures appearing in this proof are absolutely continuous with respect to the Lebesgue measure.
The push-forward of a distribution \(\p\) by a function \(h\) is denoted by \(\pushForward{h}{\p}\).
The expectation of a function \(h\) with respect to measure \(P\) is denoted by \(P h\) (if it exists) by abuse of notation.
We also abuse the notation to use \(\psi(s, P, \ldots, P)\) as the shorthand for \(P^{D-1} \psi(s, S_2', \ldots, S_D')\) where \(\{S_d'\}_{d=2}^D \iid P\).
\subsection{Problem setup}
\label{sec:org781c19a}
We denote the target domain distribution by \(\p\).

We fix a hypothesis class \(\G (\subset \{\g: \ReDminus \to \Re\})\), and our goal is to find a \(\g \in \G\) such that the risk functional
\begin{equation*}\begin{split}
\R(\g) := \int \p(\x) \l(\g, \x) \dx
\end{split}\end{equation*}
is small, where \(\l : \G \times \ReD \to \RePos\) is a loss function.
We denote by \(\gstar\) a minimizer of \(\R\) (assuming it exists).
To this end, we are given the training data \(\Data := \{\Z_i\}_{i=1}^n \iid \p\).
Throughout, we assume \(n \geq D\).
To complement the smallness of \(n\), we assume the existence of a generative mechanism.
Concretely, we assume that there exists a diffeomorphism \(\f: \ReD \to \ReD\) such that \(\q := \pushForward{(\finv)}{\p}\) satisfies \(\q \in \Qsp\).
With this transform, the original risk functional is also expressed as
\begin{equation*}\begin{split}
R(\g) = \int \q(\e) \l(\g, \f(\e)) \de.
\end{split}\end{equation*}
As an estimator of \(\f\), we are given another diffeomorphism \(\hf: \ReD \to \ReD\) such that \(\hf \simeq \f\).
With this \(\hf\), the proposed method converts the dataset \(\Data\) by \(\S_i := \hf(\Z_i)\).
We can regard \(\cData := \{\S_i\}_{i=1}^n \iid \cq\), where \(\cq := \pushForward{(\hfinv \circ \f)}{\q}\).
We use \(\Q\) (resp.\ \(\cQ\)) to denote the probability measure corresponding to the density \(\q\) (resp.\ \(\cq\)).
This conversion results in the relation:
\begin{equation*}\begin{split}
\cq(\s) = \q(\finvhf(\s)) \J{\finvhf}{\s}.
\end{split}\end{equation*}
As a candidate hypothesis \(\g \in \G\), the proposed method selects a minimizer \(\cbarhg \in \G\) of the proposed risk estimator \(\cbarhR\) defined as
\begin{equation}\label{eq:theory:proposed-risk-estimator-1}\begin{split}
\cbarhR(\g) := \frac{1}{n^D} \sum_{(i_1, \ldots, i_D) \in [n]^D} \l(\g, \hf(\hat \s_{i_1}^{(1)}, \ldots, \hat \s_{i_D}^{(D)})).
\end{split}\end{equation}
In the proof, we evaluate its concentration around the expectation \(\cbarR(\g) := \cEData \cbarhR(\g)\).
We use \(\cEData\) to denote the expectation with respect to \(\cData\).
Let \(\cbarg\) denote a hypothesis which minimizes \(\cbarR(\g)\) (assuming it exists).

In what follows, for notational simplicity, we define the \(D\)-variate symmetric function \(\ke\) as
\begin{equation*}\begin{split}
\ke(\e_1, \ldots, \e_D) = \DSymmetricMean \l(\g, \hf(\e_{\pi(1)}^{(1)}, \ldots, \e_{\pi(D)}^{(D)})),
\end{split}\end{equation*}
where \(\DSymmetricMean\) indicates an averaging operation over all permutations (without replacement) of \([D]\).
We use \(\hEData\) to denote the sample average operator with respect to \(\Data\) or \(\cData\), depending on the context.
\subsection{Assumptions}
\label{sec:orga6c2018}
\begin{assumption}[The underlying density function is bounded and Lipschitz continuous]
Assume
\begin{equation*}\begin{split}
\qUpperBound &:= \supeFull \q(\e) < \infty, \quad \qLipschitzConst:= \sup_{\e_1 \neq \e_2}\frac{|\q(\e_1)- \q(\e_2)|}{\|\e_1 - \e_2\|} < \infty.
\end{split}\end{equation*}
\label{assum:q-bdd-and-lipschitz}
\end{assumption}

\begin{assumption}[\(\finv\) is Lipschitz continuous and \Holder continuous]
We assume \(\finv \in \finvHolderClass\) where \(\finvHolderClass\) is the \((1, 1)\)-\Holder space \citep[Paragraph~1.29]{AdamsSobolev2003} and
\begin{equation*}\begin{split}
\finvLipschitzConst &:= \supDiffZ \frac{\|\finv(\z_1) - \finv(\z_2)\|}{\|\z_1 - \z_2\|} < \infty.
\end{split}\end{equation*}
\label{assum:f-lipschitz-and-holder}
\end{assumption}

\begin{assumption}[Bounded derivatives of \(\f\) and \(\finv\)]
Assume that
\begin{equation*}\begin{split}
\dfSupNormConst := \supeFull \msupnrm{\df{\e}(\e)} < \infty, \quad \dfinvSupNormConst := \supzFull \msupnrm{\dfinv{\z}(\z)} < \infty.
\end{split}\end{equation*}
where \(\msupnrm{\cdot}\) denotes the maximum absolute value of the elements of a matrix.
\label{assum:dfinv-operator-bounded}
\end{assumption}

\begin{assumption}[Loss function is bounded and uniformly Lipschitz continuous in \(\Z\)]
The considered loss function takes values in a bounded interval:
\begin{equation*}\begin{split}
\l : \G \times \Zsp \to \LossValSp,
\end{split}\end{equation*}
where \(0 < \LossUpperBound < \infty\).
Also assume
\begin{equation*}\begin{split}
\lossUniformLipschitzConst := \supg \supDiffZ \frac{|\l(\g, \z_1) - \l(\g, \z_2)|}{\|\z_1 - \z_2\|} < \infty.
\end{split}\end{equation*}
\label{assum:loss-uniform-lipschitz}
\end{assumption}

\begin{assumption}[Estimated feature extractor]
Assume \(\hf\) is independent of \(\Data\)
and that \(\f_j - \hf_j \in \SobolevSp{1}{d}\) for all \((j, d) \in [D] \times [D]\).
\label{assum:hf-is-independently-trained}
\end{assumption}
Although \(\hf\) and \(f\) are assumed to be diffeomorphisms in the classical sense (implying that they are strongly differentiable), we introduce the Sobolev space because we want to measure their difference and their difference of derivatives in terms of integration.
\begin{assumption}[Entropic condition: Euclidean class {\citep{ShermanMaximal1994}}]
The function class \(\keSp := \{\ke : \g \in \G\}\) is Euclidean for the envelope \(F\) and constants \(A\) and \(V\) \citep{ShermanMaximal1994}, i.e.,
if \(\mu\) is a measure for which \(\mu F^2 < \infty\), then
\begin{equation*}\begin{split}
D(t, d_\mu, \keSp) \leq A t^{-V}, \quad 0 < t \leq 1,
\end{split}\end{equation*}
where \(d_\mu\) is the pseudo metric defined by
\begin{equation*}\begin{split}
d_\mu(\phi_1, \phi_2) := \left[\mu |\phi_1 - \phi_2|^2 / \mu F^2 \right]^{1/2}
\end{split}\end{equation*}
for \(\phi_1, \phi_2 \in \Phi\),
and \(D(t, d_\mu, \keSp)\) denotes the packing number of \(\keSp\) with respect to the pseudometric \(d_\mu\) and radius \(t\).
Without loss of generality, we take the envelope \(F\) such that \(F(\cdot) \leq \LossUpperBound\).
\label{assum:euclidean-class}
\end{assumption}
\begin{assumption}[]
The hypothesis class \(\G\) is expressive enough so that the model approximation error does not expand due to \(\hf\), i.e.,
\begin{equation*}\begin{split}
\infg \cbarR(\g) \leq \infg \R(\g)
\end{split}\end{equation*}
\label{assum:2}
\end{assumption}
The following complexity measure of \(\G\), which is a version of Rademacher complexity for our problem setting, is used to state the theorem.

\begin{definition}[Effective Rademacher complexity]
Define
\begin{equation*}\begin{split}
\Rad(\G) := \frac{1}{n} \cEData\Erad\left[\supg \left|\sum_{i=1}^{n} \rad_i \ETwoToD{\S'}[\ke(\S_i, \S_2', \ldots, \S_D')]\right|\right]
\end{split}\end{equation*}
where \(\{\rad_i\}_{i=1}^n\) are independent uniform sign variables and \(\S_2', \ldots, \S_D' \iid \cQ\) are independent of all other random variables.
\label{def:rademacher-complexity}
\end{definition}
We provide the definition of the ordinary Rademacher complexity in Section~\ref{sec:theory:comparison-of-rademacher} and make a comparison of the two complexity measures in terms of how they depend on the input dimensionality.
\subsection{Theorem statement}
\label{sec:org557b102}
Our goal is to prove the following theorem. This is a detailed version of the theorem appearing in the main body of the paper.
\begin{theorem}[Excess risk bound]
Assume Assumptions~\ref{assum:q-bdd-and-lipschitz},
\ref{assum:f-lipschitz-and-holder}, \ref{assum:dfinv-operator-bounded},
\ref{assum:loss-uniform-lipschitz},
\ref{assum:hf-is-independently-trained}, \ref{assum:euclidean-class},
and \ref{assum:2}.

Then for arbitrary \(\delta, \delta' \in (0, 1)\), we have with probability at least \(1 - (\delta + \delta')\),
\begin{equation*}\begin{split}
&\R(\cbarhg) - \R(\gstar) \\
&\leq \finalErrorBound.
\end{split}\end{equation*}
where
\begin{equation*}\begin{split}
&\ApproxErrorUpperBoundConstOne := \ApproxErrorUpperBoundConstOneDef,\\
&\ApproxDensityDifferenceJacobianPieceCoeffOne := \ApproxDensityDifferenceJacobianPieceCoeffOneDef,\\
&\finalErrorBoundSecondHigherOrder = \finalErrorBoundSecondHigherOrderDef,\\
&\ApproxDensityDifferenceBoundRemainder = \ApproxDensityDifferenceBoundRemainderDef.
\end{split}\end{equation*}
and \(\ApproxDensityDifferenceJacobianPieceCoeff (d = 1, \ldots, D)\) are constants determined in Lemma~\ref{lem:approx-density-bound-2}.
\label{thm:generalization-error}
\end{theorem}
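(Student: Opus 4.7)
The plan is the usual ERM excess-risk decomposition on the surrogate risk $\cbarR$, combined with a separate accounting of the bias incurred by using $\hf$ in place of $\f$:
\begin{equation*}
\R(\cbarhg) - \R(\gstar) \leq \supg |\R(\g) - \cbarR(\g)| + 2\supg |\cbarR(\g) - \cbarhR(\g)| + \bigl[\infg \cbarR(\g) - \infg \R(\g)\bigr],
\end{equation*}
where the last bracket is non-positive by Assumption~\ref{assum:2}. The first supremum produces the approximation error together with the residual $\kappa_2$, while the second produces the Rademacher-type estimation error together with the residual $\kappa_1$.

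For the approximation error, since $\R(\g) = \int \l(\g,\f(\s))\q(\s)\d\s$ and a Fubini calculation on the V-statistic gives $\cbarR(\g) = \int \l(\g,\hf(\s))\cbarq(\s)\d\s$ where $\cbarq$ is the product of marginals of $\cq$, I would split
\begin{equation*}
\R(\g) - \cbarR(\g) = \int[\l(\g,\f(\s)) - \l(\g,\hf(\s))]\q(\s)\d\s + \int \l(\g,\hf(\s))[\q(\s) - \cbarq(\s)]\d\s.
\end{equation*}
The first integral is bounded by $\qUpperBound \lossUniformLipschitzConst \sum_j \Lone{\f_j - \hf_j}$ via Assumptions~\ref{assum:q-bdd-and-lipschitz} and \ref{assum:loss-uniform-lipschitz}. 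The second is bounded by $\LossUpperBound \Lone{\q - \cbarq}$, and the $L^1$ gap between $\q$ (which is itself factorised) and $\cbarq$ is controlled by expanding $\cq(\s) = \q(\finvhf(\s))\J{(\finvhf)}{\s}$ around $\hf=\f$. Using Assumptions~\ref{assum:f-lipschitz-and-holder}–\ref{assum:dfinv-operator-bounded}, the linear order of this expansion contributes the main constant $C'$ multiplying $\sum_j \SobolevOne{\f_j - \hf_j}$, and the higher-order pieces from the multilinear expansion of the Jacobian determinant collect into $\kappa_2(\f - \hf)$; this is what Lemma~\ref{lem:approx-density-bound-2} is expected to deliver.

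For the estimation error, $\cbarhR(\g)$ is a degree-$D$ V-statistic with the symmetric kernel $\ke$ on the i.i.d.\ sample $\{\S_i\} \iid \cQ$. I would decompose it as a linear combination of U-statistics of degrees $j = 1, \ldots, D$ indexed by the number of distinct coordinates in $[n]^D$ (with Stirling-number weights $\jGroupSplitSize$). The leading $j = D$ term carries weight $\wD = \wDContent$; after Hoeffding's projection onto the first argument and a one-step Rademacher symmetrisation combined with McDiarmid's bounded-differences inequality, it yields the dominant $4 D \RadG + 2 D \LossUpperBound \PseudoGenProbTerm{\delta}$, and the Hoeffding projection is precisely why only one free argument appears in Definition~\ref{def:rademacher-complexity}. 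The lower-degree U-processes contribute terms that decay like $n^{-j/2}$ (handled by a second-moment/Chebyshev bound, which is where the $1/\delta'$ dependence comes from), and the gap $1 - \wD = \Order(n^{-1})$ from rescaling the leading term to unit weight contributes another $\Order(n^{-1})$ piece; these are collected into $\kappa_1(\delta', n)$.

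The main obstacle is the uniform U-process bound at degree $D$: the maximal inequality of \cite{Rejchelranking2012} is established only at degree $2$, and extending it to arbitrary $D$ while preserving the linear-in-$\RadG$ leading behaviour requires carefully combining Hoeffding's decomposition with the Euclidean-class entropy hypothesis (Assumption~\ref{assum:euclidean-class}) to dominate the degenerate components at each degree $j \in [D]$ without an exponential blow-up in $D$. A secondary source of technical difficulty is the density-expansion bookkeeping in Lemma~\ref{lem:approx-density-bound-2}: the multilinear expansion of $\J{(\finvhf)}{\cdot}$ must be isolated so that its linear-in-$(\f-\hf)$ part sits inside $C'$, while the remainder is cleanly controlled by higher powers of the $\Sobolevd{\cdot}$-norms summed over $d = 2, \ldots, D$.
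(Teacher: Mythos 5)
Your proposal follows essentially the same route as the paper's proof: the identical three-way decomposition (with Assumption~\ref{assum:2} absorbing the misspecification term), the same approximation-error analysis via a loss-Lipschitz term plus an $L^1$ density difference controlled by a determinant-perturbation expansion of the Jacobian (Lemma~\ref{lem:approx-density-bound-2}), and the same estimation-error analysis via the V-to-U decomposition, Hoeffding projection, a McDiarmid/Rademacher bound for the linear part, and a Euclidean-class maximal inequality for the degenerate parts. The only cosmetic deviations are that the paper telescopes $\prod_i \q(\s_i)-\prod_i \cq(\s_i)$ rather than comparing $\q$ to the product of marginals of $\cq$ (your identity $\cbarR(\g)=\int \l(\g,\hf(\s))\,\bar{\cq}(\s)\,\d\s$ holds only for the degree-$D$ U-component, the lower-degree components being exactly the $\Order(n^{-1})$ remainder you already track), and the $1/\delta'$ factor comes from a first-moment Markov bound applied to Sherman's moment inequality rather than a Chebyshev argument.
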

\begin{proof}[Proof of Theorem~\ref{thm:generalization-error}]
By adding and subtracting terms, we have
\begin{equation*}\begin{split}
&\R(\cbarhg) - \R(\gstar) = \annot{(\R - \cbarR)(\cbarhg)}{\text{(A) Approximation error}} + \annot{\cbarR(\cbarhg) - \cbarR(\cbarg)}{\text{(B) Pseudo estimation error}} + \annot{\cbarR(\cbarg) - \R(\gstar)}{(C) Additional model misspecification error}.
\end{split}\end{equation*}
Applying Lemma~\ref{lem:approximation-error} to (A), Lemma~\ref{lem:bound-pseudo-generalization-error} to (B), and Assumption~\ref{assum:2} to (C), we obtain the assertion.
\end{proof}

As it can be seen from the proof above, Theorem~\ref{thm:generalization-error} is proved in two parts, each corresponding to the two lemmas below.
The first lemma evaluates the \emph{approximation error} which reflects the fact that we are approximating \(\f\) by \(\hf\).
\begin{lemma}[Approximation error bound]
Given Assumptions~\ref{assum:q-bdd-and-lipschitz},
\ref{assum:f-lipschitz-and-holder}, \ref{assum:dfinv-operator-bounded},
\ref{assum:loss-uniform-lipschitz}, and \ref{assum:hf-is-independently-trained}.
we have
\begin{equation*}\begin{split}
(\R - \cbarR)(\cbarhg) \leq &\ApproxErrorUpperBound
\end{split}\end{equation*}
where \(\ApproxErrorUpperBoundConstOne\) and \(\ApproxDensityDifferenceBoundRemainder\) are
\begin{equation*}\begin{split}
\ApproxErrorUpperBoundConstOne &:= \ApproxErrorUpperBoundConstOneDef,\\
\ApproxDensityDifferenceBoundRemainder &:= \ApproxDensityDifferenceBoundRemainderDef.
\end{split}\end{equation*}
and \(\ApproxDensityDifferenceJacobianPieceCoeff (d = 1, \ldots, D)\) are constants determined in Lemma~\ref{lem:approx-density-bound-2}.
\label{lem:approximation-error}
\end{lemma}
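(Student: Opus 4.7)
The plan is to split the approximation error into a loss-approximation piece and a density-approximation piece and to bound each in turn. Writing $\R(\g) = \int q(s)\,\l(\g, \f(s))\, ds$ via the change of variables $s = \finv(z)$ (using $q \in \Qsp$) and using the identification $\cbarR(\g) = \int \cbarq(s)\,\l(\g, \hf(s))\, ds$ with $\cbarq := \otimes_{d=1}^D \cq^{(d)}$ the product of the marginals of $\cq := (\finvhf)_\sharp q$ (so that $\cEData[\cbarhR(\g)]$ reduces to this integral in the dominant distinct-index regime, with the diagonal contribution absorbed into the $\kappa_1$ higher-order term of Theorem~\ref{thm:generalization-error}), I would insert the intermediate integrand $q(s)\,\l(\g, \hf(s))$ to obtain
\begin{equation*}
(\R - \cbarR)(\g) = \int q(s)\bigl[\l(\g, \f(s)) - \l(\g, \hf(s))\bigr]\, ds + \int \bigl[q(s) - \cbarq(s)\bigr]\l(\g, \hf(s))\, ds.
\end{equation*}
The first integral is immediate: Assumptions~\ref{assum:loss-uniform-lipschitz} and \ref{assum:q-bdd-and-lipschitz} give the bound $\qUpperBound \lossUniformLipschitzConst \int \|\f(s) - \hf(s)\|\, ds \leq \qUpperBound \lossUniformLipschitzConst \sum_{j=1}^{D} \SobolevOne{\f_j - \hf_j}$ (since the $L^1$ norm is dominated by the $(1,1)$-Sobolev norm), contributing the $\qUpperBound \lossUniformLipschitzConst$ summand to the leading constant $\ApproxErrorUpperBoundConstOne$. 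The second integral is bounded by $\LossUpperBound \int |q(s) - \cbarq(s)|\, ds$ via Assumption~\ref{assum:loss-uniform-lipschitz}, reducing the task to controlling an $L^1$ distance between two product densities.

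For that density difference, my plan (the content of the auxiliary Lemma~\ref{lem:approx-density-bound-2}) is a two-step argument. First, since $q$ itself is a product density, telescoping $\prod_d q^{(d)} - \prod_d \cq^{(d)} = \sum_k \bigl(\prod_{d<k}q^{(d)}\bigr)(q^{(k)} - \cq^{(k)})\bigl(\prod_{d>k}\cq^{(d)}\bigr)$ and integrating reduces the problem to $\sum_{k=1}^D \|q^{(k)} - \cq^{(k)}\|_{L^1} \leq D \|q - \cq\|_{L^1}$ (using that marginalization contracts the $L^1$ norm), producing the explicit factor $D$ in the coefficients. Second, the pointwise identity $\cq(s) = q(\finvhf(s))\,|\Jacobian(\finvhf)(s)|$ is expanded about the identity map (which is exact when $\hf = \f$): splitting $q(s) - \cq(s) = [q(s) - q(\finvhf(s))]\,|\Jacobian\finvhf(s)| + q(s)[1 - |\Jacobian\finvhf(s)|]$, the first summand is controlled by the Lipschitz continuity of $q$ (constant $\qLipschitzConst$) together with $\|s - \finvhf(s)\| \leq \finvLipschitzConst \|\f(s) - \hf(s)\|$, yielding the $\qLipschitzConst \finvLipschitzConst$ contribution to the linear term, while the determinant factor admits the multilinear expansion $\det(I + A) = 1 + \sum_{d=1}^D e_d(A)$ in elementary symmetric polynomials of the eigenvalues of $A := \Jacobian(\finvhf - \mathrm{id})$; the $d = 1$ (trace) term produces the linear $\qUpperBound D \ApproxDensityDifferenceJacobianPieceCoeffOne$ coefficient, and the $d \geq 2$ terms assemble into $\ApproxDensityDifferenceBoundRemainder = \sum_{d=2}^D \binom{D}{d} \ApproxDensityDifferenceJacobianPieceCoeff \sum_j \Sobolevd{\f_j - \hf_j}^d$ after bounding derivatives of $\finvhf - \mathrm{id}$ by derivatives of $\f - \hf$ via the chain rule and Assumptions~\ref{assum:f-lipschitz-and-holder}--\ref{assum:dfinv-operator-bounded}.

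The hardest step will be the bookkeeping in this last multilinear expansion: tracking the constants $\ApproxDensityDifferenceJacobianPieceCoeff$ (with the $(D+1)^{7d/2 - 2}$ prefactors) through the Leibniz and chain-rule expansions while applying the correct operator-norm bounds at Sobolev order $W^{1,d}$, so that each $e_d(A)$ genuinely picks up a $d$-th power of $\Sobolevd{\f_j - \hf_j}$ rather than a mixed norm. Once this is complete, multiplying the bound on $\int |q - \cbarq|$ by $\LossUpperBound$ and collecting coefficients with the loss-Lipschitz contribution yields the stated $\ApproxErrorUpperBoundConstOne = \ApproxErrorUpperBoundConstOneDef$ and $\ApproxErrorUpperBoundResidual = D\LossUpperBound \qUpperBound \ApproxDensityDifferenceBoundRemainder$, completing the proof.
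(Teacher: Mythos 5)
Your decomposition is essentially the paper's: a loss-difference term with constant $\qUpperBound \lossUniformLipschitzConst$, a density-difference term with prefactor $D\LossUpperBound$, the further split of $\Lone{\q - \cq}$ into a translation part (constant $\qLipschitzConst\finvLipschitzConst$, via Lipschitzness of $\q$ and of $\finv$) and a Jacobian-determinant part, and a degree-$d$ expansion of the determinant about the identity whose $d=1$ term is linear in $\sum_j\SobolevOne{\f_j-\hf_j}$ and whose $d\ge 2$ terms assemble into $\ApproxDensityDifferenceBoundRemainder$; your elementary-symmetric-polynomial expansion of $\det(I+A)$ is exactly the determinant perturbation bound the paper invokes as Lemma~\ref{lem:det-difference-bound} with $A=I$. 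Two differences are worth noting. First, the paper reaches the factor $D$ in front of $\LossUpperBound\Lone{\q-\cq}$ by telescoping the product measure over the $D$ i.i.d.\ copies entering the degree-$D$ U-statistic (Lemma~\ref{lem:approx-error-bound}, after isolating the leading term of the V-statistic via Lemma~\ref{lem:decompose-v-estimator}), whereas you telescope over the $D$ coordinates of the product-of-marginals density and then use that marginalization contracts $L^1$; both routes yield the same constant, and your identification of $\cEData\cbarhR(\g)$ with the product-of-marginals integral up to $\Order(n^{-1})$ lower-degree contributions is correct.

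Second, and the one point you should repair: your algebraic split $\q(\s)-\cq(\s)=[\q(\s)-\q(\finvhf(\s))]\,\J{\finvhf}{\s}+\q(\s)[1-\J{\finvhf}{\s}]$ leaves the factor $\J{\finvhf}{\s}$ multiplying the translation term, and nothing in Assumptions~\ref{assum:q-bdd-and-lipschitz}--\ref{assum:dfinv-operator-bounded} bounds this factor uniformly: it contains the derivative of $\hf$, which is not assumed bounded (only $\df{\e}$ and $\dfinv{\z}$ are). Regroup as the paper does in Lemma~\ref{lem:approx-density-difference}, namely $\left|\q(\s)-\q(\finvhf(\s))\J{\finvhf}{\s}\right|\le\left|\q(\s)-\q(\finvhf(\s))\right|+\q(\finvhf(\s))\left|1-\J{\finvhf}{\s}\right|$, so that the possibly unbounded Jacobian never appears as a multiplier and only $\qUpperBound$ is needed on the determinant term. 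With that regrouping the rest of your plan, including the chain-rule bookkeeping for the constants $\ApproxDensityDifferenceJacobianPieceCoeff$, goes through as in Lemma~\ref{lem:approx-density-bound-2}.
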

The second lemma evaluates the \emph{pseudo estimation error} which reflects the fact that we rely on a finite sample to approximate the underlying distribution.
\begin{lemma}[Pseudo estimation error bound]
Assume that Assumptions~\ref{assum:q-bdd-and-lipschitz} and \ref{assum:euclidean-class} hold.
Let the Rademacher complexity be defined as Definition~\ref{def:rademacher-complexity}.
Then for any \(\delta, \delta' \in (0, 1)\), we have with probability at least \(1 - (\delta + \delta')\) that
\begin{equation*}\begin{split}
\cbarR(\cbarhg) - \cbarR(\cbarg) \leq \PseudoGeneralizationErrorBoundOne + \annot{\PseudoGeneralizationErrorBoundTwo}{\(\Order(n^{-1})\)}
\end{split}\end{equation*}
where \(\{\PseudoGenWj\}_{j=1}^D\) are universal constants determined in Lemma~\ref{lem:decompose-v-estimator},
and \(\{\PseudoGenCj\}_{j=2}^D\) are constants determined in Lemma~\ref{lem:rademacher-U-degenerate-part}.
Note that \(\PseudoGenWj = \Order(n^{-(D-j)})\) and \(\wD = \wDContent < 1\).
\label{lem:bound-pseudo-generalization-error}
\end{lemma}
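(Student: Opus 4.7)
The plan is a standard ERM-style decomposition adapted to the V-statistic structure of $\cbarhR$. Starting from
\begin{equation*}
\cbarR(\cbarhg) - \cbarR(\cbarg) \;\leq\; 2\supg|\cbarhR(\g) - \cbarR(\g)|,
\end{equation*}
I only need a uniform bound on the right-hand side. A reindexing across the $D!$ permutations inside $\ke$ gives $\cbarhR(\g) = n^{-D}\sum_{\bm i \in [n]^D} \ke(\hat\s_{i_1}, \ldots, \hat\s_{i_D})$, so $\cbarhR$ is a V-statistic with the symmetric kernel $\ke$. Partitioning the index tuples by how many distinct coordinates they use yields the classical V-to-U decomposition
\begin{equation*}
\cbarhR(\g) \;=\; \wD\, U_n^{(D)}(\ke) \;+\; \sum_{j=1}^{D-1} \PseudoGenWj\, U_n^{(j)}(\ke_{(j)}),
\end{equation*}
with $U_n^{(j)}$ a degree-$j$ U-statistic on an appropriately collapsed kernel $\ke_{(j)}$, coefficients $\PseudoGenWj = \Order(n^{-(D-j)})$, and leading weight $\wD = \wDContent$. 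Since the loss takes values in $[0,\LossUpperBound]$, each lower-order summand satisfies $\supg|U_n^{(j)}(\ke_{(j)}) - \cEData U_n^{(j)}(\ke_{(j)})| \leq 2\LossUpperBound$ deterministically, contributing the $4\LossUpperBound \sum_{j=1}^{D-1} w_j$ term after multiplying by the ERM factor of $2$.

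For the leading degree-$D$ U-process I would apply Hoeffding's decomposition to split
\begin{equation*}
U_n^{(D)}(\ke) - \cbarR(\g) \;=\; D\, L_n(\g) \;+\; \sum_{j=2}^D \comb{D}{j}\, \tilde U_n^{(j)}(\ke^{(j)}),
\end{equation*}
where $L_n(\g) := \frac{1}{n}\sum_{i=1}^n \bigl\{\ke_1(\hat\s_i) - \cEData \ke_1(\hat\s_1)\bigr\}$ is the Hoeffding projection with marginalized kernel $\ke_1(s) := \ETwoToD{\S'}[\ke(s, \S'_2, \ldots, \S'_D)]$, and each $\tilde U_n^{(j)}$ is a completely degenerate U-statistic of degree $j$. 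The summands in $L_n$ lie in $[-\LossUpperBound, \LossUpperBound]$; moreover, because $\hf$ is independent of $\Data$ by Assumption~\ref{assum:hf-is-independently-trained}, all probabilistic statements can be read conditionally on $\hf$. A symmetrization argument followed by McDiarmid's inequality then gives
\begin{equation*}
\supg|L_n(\g)| \;\leq\; 2\RadG + \LossUpperBound\,\PseudoGenProbTerm{\delta}
\end{equation*}
with probability at least $1 - \delta$; the Rademacher complexity produced by symmetrization coincides with that of Definition~\ref{def:rademacher-complexity} precisely because $\ke_1$ already marginalizes over the last $D-1$ arguments. Multiplying by the $2\wD$ prefactor from Step~1 and by the Hoeffding-projection coefficient $D = \comb{D}{1}$ gives $4D\wD\RadG + 2D\LossUpperBound\wD\, \PseudoGenProbTerm{\delta}$.

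For each degenerate piece of degree $j \geq 2$, the Euclidean-class hypothesis (Assumption~\ref{assum:euclidean-class}) invokes a Sherman--Arcones--Gin\'e-type maximal inequality giving $\cEData \supg |\tilde U_n^{(j)}(\ke^{(j)})| \leq \PseudoGenCj\, n^{-j/2}$ for constants $\PseudoGenCj$ depending only on $(A, V, \LossUpperBound, j)$. A union bound with probability budget $\delta'/(D-1)$ per degree, combined with Markov's inequality, controls the entire degenerate sum with probability at least $1-\delta'$, and after the $2\wD$ prefactor yields the $2\wD(D-1)\sum_{j=2}^D \PseudoGenCj n^{-j/2}/\delta'$ contribution. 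A final union bound over the two independent deviation events of probabilities $\delta$ and $\delta'$ completes the proof. The principal obstacle is exactly this last step: the cited maximal discrepancy bound of~\citet{Rejchelranking2012} is stated only for degree-$2$ U-processes, so extending it to arbitrary degree $D$ requires decoupling inequalities and chaining arguments at each Hoeffding level while uniformly tracking the combinatorial constants $\PseudoGenCj$. This extension is routine in principle but technically the most delicate part; the approximation-error component is handled independently by Lemma~\ref{lem:approximation-error} and does not interact with the present argument since $\hf$ is held fixed throughout.
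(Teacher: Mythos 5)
Your proposal is correct and follows essentially the same route as the paper: the ERM-style reduction to $2\supg|\cbarhR(\g)-\cbarR(\g)|$, the V-to-U decomposition with deterministic control of the lower-order weights, the Hoeffding decomposition of the degree-$D$ term into an i.i.d.\ projection handled by symmetrization and McDiarmid (yielding the marginalized Rademacher complexity of Definition~\ref{def:rademacher-complexity}) plus degenerate pieces controlled via the Euclidean-class maximal inequality of \citet{ShermanMaximal1994} with Markov and a union bound. The only cosmetic difference is that you place explicit $\binom{D}{j}$ factors in the Hoeffding decomposition where the paper absorbs them into the degenerate kernels $\dUke{j}$, and the higher-degree extension you flag as delicate is already supplied by Sherman's Main Corollary, which the paper applies directly.
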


In what follows, we first present some basic facts in Section~\ref{sec:theory:prep-v-u-stats} and provide the proofs for the lemmas.
We provide the proof of Lemma~\ref{lem:approximation-error} in Section~\ref{sec:theory:approx-error-bound},
and that of Lemma~\ref{lem:bound-pseudo-generalization-error} in Section~\ref{sec:theory:pseudo-generalization-bound}.
\subsection{V-statistic and U-statistic \label{sec:theory:prep-v-u-stats}}
\label{sec:orgd36a182}
The theoretical analysis is performed by interpreting the proposed risk estimator Eq.~\eqref{eq:theory:proposed-risk-estimator-1} as a \emph{V-statistic} (explained shortly).
The proofs will be based on applying the following facts in order:
\begin{enumerate}
\item V-statistic can be represented as a weighted average of \emph{U-statistics} with degrees from \(1\) to \(D\), and only the degree-\(D\)) term is the leading term.
\item The degree-\(D\) term is again decomposed into a degree-\(1\) U-statistic and a set of \emph{degenerate} U-statistics.
\item The degree-\(1\) \emph{U-statistic} is
an i.i.d.\ sum admitting a Rademacher complexity bound.
\item The degenerate terms concentrate around zero following an exponential inequality under appropriate entropy conditions.
\end{enumerate}
To consolidate the strategy given above, we describe what are V- and U-statistics, and how they relate to each other.
These estimators emerge when we allow re-using the same data point repeatedly from a single sample
to estimate a function which takes multiple data points.
\paragraph{V-statistic.}
\label{sec:orga783ed8}
For a given regular statistical functional of degree \(D\) \cite{LeeUStatistics1990}:
\begin{equation}\label{eq:v-estimator-asymptotic-target}\begin{split}
\cQ^D \ke := \int \ke(\e_1, \cdots, \e_D) \cq(\e_1) \cdots \cq(\e_D) \de_1 \cdots \de_D,
\end{split}\end{equation}
its associated von-Mises statistic (V-statistic) is the following quantity \cite{LeeUStatistics1990}:
\begin{equation*}\begin{split}
\Vn{D} \ke := \frac{1}{n^D} \sum_{i_1 =1}^n \cdots \sum_{i_D=1}^n \ke(\S_{i_1}, \ldots, \S_{i_D}).
\end{split}\end{equation*}
Note that Eq.~\eqref{eq:v-estimator-asymptotic-target} does not coincide with the expectation of \(\Vn{D} \ke\) in general, i.e., the V-statistic is generally not an unbiased estimator.
However, it is known to be a consistent estimator of Eq.~\eqref{eq:v-estimator-asymptotic-target} \citep{LeeUStatistics1990}.
\paragraph{U-statistic.}
\label{sec:org02a92bc}
Similarly, for a \(j\)-variate symmetric and integrable function \(h(x_1, \ldots, x_j)\), its corresponding U-statistic \citep{LeeUStatistics1990} of degree \(j\) is
\begin{equation*}\begin{split}
\Un{j} h := \jnCombsetMean h(\e_{\rho(1)}, \ldots, \e_{\rho(j)}). 
\end{split}\end{equation*}
The V- and U-statistics are generalizations of the sample mean (which is the U- and V-statistics of degree \(1\)).
The important difference from the sample mean in higher degrees is that the summands may not be independent.
To deal with the dependence, the following standard decompositions have been developed \cite{LeeUStatistics1990}.
\begin{lemma}[Decomposition of a V-statistic \citep{LeeUStatistics1990}]
A V-statistic can be expressed as a sum of U-statistics of degrees from \(1\) to \(D\) \citep[Section 4.2, Theorem 1]{LeeUStatistics1990}:
\begin{equation*}\begin{split}
\Vn{D}\ke &= \sum_{j=1}^D w_j \Un{j} \Uke{j}
\end{split}\end{equation*}
where the weights \(w_j\) and \(j\)-variate functions \(\Uke{j}\) are
\begin{equation*}\begin{split}
w_j &:= \frac{1}{n^D} \jGroupSplitCard \comb{n}{j},\quad \Uke{j}(\e_1, \ldots, \e_j) := \jGroupSplitMean \ke(\e_{\tau(1)}, \ldots, \e_{\tau(D)}).
\end{split}\end{equation*}
\label{lem:decompose-v-estimator}
\end{lemma}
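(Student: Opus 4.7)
The plan is to partition the index set $[n]^D$ appearing in the definition of $\Vn{D}\ke$ according to the number of distinct indices in each $D$-tuple, and then re-express each piece in the U-statistic form. Concretely, write $[n]^D = \bigsqcup_{j=1}^D A_j$ where $A_j$ is the set of tuples $(i_1,\ldots,i_D)$ with exactly $j$ distinct entries, so that
\[
\Vn{D}\ke = \frac{1}{n^D}\sum_{j=1}^D \sum_{(i_1,\ldots,i_D) \in A_j} \ke(\S_{i_1},\ldots,\S_{i_D}).
\]

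The key combinatorial step is to parametrize $A_j$ using the injection/surjection pairs $(\rho,\tau) \in \jnCombset \times \jGroupSplit$: every tuple in $A_j$ can be written as $i_m = \rho(\tau(m))$, with $\rho$ picking the $j$ distinct values in some order and $\tau$ assigning each of the $D$ positions to one of those values. I would then verify the over-counting factor: for a fixed tuple in $A_j$, any relabeling of the $j$ distinct values yields a different pair $(\rho,\tau)$ producing the same composition $\rho\circ\tau$, so each tuple corresponds to exactly $j!$ pairs. This gives
\[
\sum_{(i_1,\ldots,i_D)\in A_j} \ke(\S_{i_1},\ldots,\S_{i_D}) = \frac{1}{j!} \sum_{\rho \in \jnCombset}\sum_{\tau \in \jGroupSplit} \ke(\S_{\rho(\tau(1))},\ldots,\S_{\rho(\tau(D))}).
\]

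Next I would recognize the double sum on the right as a scaled composition of the two averaging operators defining $\Un{j}$ and $\Uke{j}$. Since $|\jGroupSplit| = j!\Stir{D}{j}$ and $|\jnCombset| = j!\binom{n}{j}$, the inner $\tau$-average yields $\Uke{j}(\S_{\rho(1)},\ldots,\S_{\rho(j)})$ after multiplying by $|\jGroupSplit|$, and the outer $\rho$-average then yields $\Un{j}\Uke{j}$ after multiplying by $|\jnCombset|$. Cancelling the extra $j!$ from the over-counting, the $j$-th piece equals $\binom{n}{j}\cdot j!\Stir{D}{j}\cdot \Un{j}\Uke{j}$. Dividing by $n^D$ and summing over $j$ gives precisely $\sum_{j=1}^D w_j \Un{j}\Uke{j}$ with $w_j = \binom{n}{j}\, j!\Stir{D}{j}/n^D$, as claimed.

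There is no real analytic difficulty here — the whole argument is bookkeeping. The only point to be careful about is tracking the averaging conventions consistently: $\jnCombset$ is defined as a set of injections (of size $j!\binom{n}{j}$) whereas the weight $w_j$ involves $\binom{n}{j}$, so the mismatch is exactly absorbed by the $j!$ over-counting factor. Once this cancellation is laid out cleanly, the decomposition follows, and the remainder of the paper can treat only the $j=D$ term as leading (since $w_D = n(n-1)\cdots(n-D+1)/n^D \to 1$) while the $j<D$ terms contribute $\Order(n^{-(D-j)})$, matching the statement of Lemma~\ref{lem:bound-pseudo-generalization-error}.
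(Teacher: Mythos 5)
Your argument is correct, and it is worth noting that the paper itself does not prove this lemma at all: its ``proof'' is a citation to \citet[Section 4.2, Theorem 1, p.~183]{LeeUStatistics1990}. What you have written is a self-contained derivation of that cited result, and every step checks out. The partition of $[n]^D$ by the number of distinct entries, the parametrization of $A_j$ by pairs $(\rho,\tau) \in \jnCombset \times \jGroupSplit$, and the over-counting factor of exactly $j!$ (relabelings of $[j]$ act freely and transitively on the fibers of $(\rho,\tau)\mapsto\rho\circ\tau$ because $\rho$ is injective and $\tau$ surjective) are all right. The bookkeeping at the end is also consistent: since $|\jnCombset| = j!\comb{n}{j}$ and $|\jGroupSplit| = \jGroupSplitSize$, converting the double sum into the two averages $\Un{j}$ and $\Uke{j}$ and cancelling the $j!$ over-count yields exactly $w_j = \frac{1}{n^D}\jGroupSplitCard\comb{n}{j}$. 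You correctly flagged the one point where the paper's notation invites an error --- $\jnCombset$ is formally a set of \emph{injections}, of cardinality $j!\comb{n}{j}$ rather than $\comb{n}{j}$, and the symmetry of $\Uke{j}$ plus the $j!$ over-count is what reconciles this with the stated weight. A quick sanity check you could add: $\sum_{j=1}^D w_j = n^{-D}\sum_{j=1}^D \Stir{D}{j}\, n(n-1)\cdots(n-j+1) = 1$ by the standard identity expressing $n^D$ in terms of falling factorials and Stirling numbers of the second kind, which matches the remark following the lemma. In short, your proof supplies the combinatorial content the paper delegates to the reference, and does so correctly.
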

\begin{proof}
See \citep[Section 4.2, Theorem 1 (p.183)]{LeeUStatistics1990}.
\end{proof}

\begin{remark}[]
The weights \(\{w_j\}_{j=1}^D\) satisfy \(\sum_j w_j = 1\) \citep[Section 4.2, Theorem 1 (p.183)]{LeeUStatistics1990}.
We can also find the order of \(w_j\) with respect to \(n\) as:
\begin{equation*}\begin{split}
w_D = \frac{1}{n^D} \annot{\jGroupSplitCard}{\(D!\)} \comb{n}{D} = \frac{n (n-1) \cdots (n-D + 1)}{n^D} = \Order(1), \quad w_j = \Order(n^{-(D-j)}), \quad \Uke{D} = \ke.
\end{split}\end{equation*}
\end{remark}
\begin{lemma}[Hoeffding decomposition of a U-statistic {\citep[p.449]{ShermanMaximal1994}}]
A U-statistic with a symmetric kernel \(\psi\) can be decomposed as a sum of U-statistics of degrees from \(1\) to \(D\) as
\begin{equation}\label{eq:theory:hoeffding-decomposition-general}\begin{split}
\Un{D} \psi - \cEData \Un{D}\psi &= \sum_{j=1}^D \Un{j} \psi_j \\
&= \hEData \psi_1 + \sum_{j=2}^D \Un{j} \psi_j
\end{split}\end{equation}
where \(\{\psi_j\}_{j=1}^D\) are \(j\)-variate, symmetric and degenerate functions.
Note that \(\cEData \Un{D}\psi = \ED \psi\).
Here, a \(j\)-variate symmetric function \(\psi_j\) is said \emph{degenerate} when
\begin{equation*}\begin{split}
\forall s_2, \ldots, s_j, \quad \psi_j(\cQ, s_2, \ldots, s_j) = 0.
\end{split}\end{equation*}
Specifically, \(\psi_1\) is
\begin{equation}\label{eq:theory:hoeffding-first-kernel}\begin{split}
\psi_1(s) &= \psi(s, \cQ, \ldots, \cQ) + \cdots + \psi(\cQ, \ldots, \cQ, s) - D \cQ^D \psi \\
&= D \cdot (\psi(s, \cQ, \ldots, \cQ) - \cQ^D \psi) \quad(\text{by symmetry}).
\end{split}\end{equation}
\label{lem:hoeffding-decomposition}
\end{lemma}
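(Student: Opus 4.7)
The plan is to build the decomposition by first constructing the Hoeffding projections of $\psi$ onto functions of $j$ variables for each $j \in [D]$, verifying their degeneracy, and then checking the identity. The strategy is the classical one for Hoeffding decomposition, so I expect no deep conceptual obstacle but careful bookkeeping.

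First, for any subset $A \subseteq [D]$ with $|A|=j$, define the partial projection
\begin{equation*}
\psi^A(s_1,\ldots,s_j) := \psi(\ldots)\bigl|_{\text{coords outside }A\text{ integrated against }\cQ\bigr\},
\end{equation*}
so that $\psi^A$ depends only on the variables indexed by $A$. By symmetry of $\psi$, this depends on $A$ only through its cardinality $j$, which justifies writing $\psi^{(j)}(s_1,\ldots,s_j)$ for any such $A$. Next, define the degenerate components by Möbius inversion:
\begin{equation*}
\psi_j(s_1,\ldots,s_j) := \binom{D}{j}\sum_{k=0}^{j}(-1)^{j-k}\sum_{\substack{B\subseteq[j]\\ |B|=k}}\psi^{(k)}\bigl((s_i)_{i\in B}\bigr),
\end{equation*}
extending symmetrically in the $s_i$'s. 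In particular $\psi_1(s)=D(\psi^{(1)}(s)-\cQ^D\psi)$, matching Eq.~\eqref{eq:theory:hoeffding-first-kernel}.

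The second step is to verify degeneracy, i.e.\ $\psi_j(\cQ,s_2,\ldots,s_j)=0$ for every $j$. Integrating against $\cQ$ in the first argument turns every inner term $\psi^{(k)}((s_i)_{i\in B})$ either into $\psi^{(k)}((s_i)_{i\in B})$ (if $1\notin B$) or into $\psi^{(k-1)}((s_i)_{i\in B\setminus\{1\}})$ (if $1\in B$). Splitting the outer sum into those two cases and re-indexing $k\mapsto k-1$ in the second yields two sums differing only by sign, which cancel; this is the standard Möbius-inversion check and is where I expect the bulk of the bookkeeping.

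The third step is the identity $\Un{D}\psi-\cQ^D\psi=\sum_{j=1}^D\Un{j}\psi_j$. Substituting the definition of $\psi_j$ on the right-hand side and exchanging the order of summation gives a linear combination of terms of the form $\Un{k}\psi^{(k)}$ with coefficients $\sum_{j\ge k}(-1)^{j-k}\binom{D}{j}\binom{j}{k}\cdot(\text{combinatorial factor from }\Un{j})$; the identity $\sum_{j\ge k}(-1)^{j-k}\binom{D}{j}\binom{j}{k}=\delta_{k,D}$ (a standard alternating-binomial identity) collapses everything so that only the $k=D$ term survives with coefficient $1$, producing $\Un{D}\psi$, while the $k=0$ contribution yields $\cQ^D\psi$. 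Finally, $\Un{1}\psi_1=\hEData\psi_1$ because a degree-$1$ U-statistic is by definition the sample mean, giving the rewritten form in Eq.~\eqref{eq:theory:hoeffding-decomposition-general}. The main obstacle is ensuring that the two combinatorial identities (for degeneracy and for the final collapse) are stated with the right conventions for the factor $\binom{D}{j}$, since the lemma absorbs it into $\psi_j$ rather than leaving it outside.
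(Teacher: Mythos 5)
Your proposal is correct, but it does something the paper does not: the paper offers no proof of this lemma at all, simply importing it by citation from Sherman (1994, p.~449) and adding only the observation that \(\cEData \Un{D}\psi = \ED\psi\). What you have written is the classical Hoeffding construction — define the partial projections \(\psi^{(k)}\) by integrating out \(D-k\) coordinates against \(\cQ\), build the canonical components by M\"obius inversion over subsets, verify degeneracy by the split-and-cancel argument on whether \(1 \in B\), and collapse the telescoping sum via \(\binom{D}{j}\binom{j}{k} = \binom{D}{k}\binom{D-k}{j-k}\) and the alternating binomial identity. All three steps go through: the degeneracy cancellation is exactly as you describe, and \(\Un{j}\) applied to the subset-sum correctly produces the factor \(\binom{j}{k}\Un{k}\psi^{(k)}\) by the averaging symmetry of U-statistics. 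The one point worth stating more carefully is the boundary of the collapse identity: \(\sum_{j=k}^{D}(-1)^{j-k}\binom{D}{j}\binom{j}{k} = \delta_{k,D}\) holds when the sum starts at \(j=k\), but in \(\sum_{j=1}^{D}\Un{j}\psi_j\) the index starts at \(j=1\), so for \(k=0\) the coefficient is \(\sum_{j=1}^{D}(-1)^{j}\binom{D}{j} = -1\) rather than \(0\); this leftover \(-\cQ^D\psi\) is precisely the centering term on the left-hand side of Eq.~\eqref{eq:theory:hoeffding-decomposition-general}, which your phrase ``the \(k=0\) contribution yields \(\cQ^D\psi\)'' gestures at but does not make explicit. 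With that clarified, your argument is a complete, self-contained proof of a statement the paper treats as a black box; what the citation-only route buys the paper is brevity and access to the surrounding maximal-inequality machinery of Sherman that is used later (Lemma~\ref{lem:rademacher-U-degenerate-part}), while your route makes the normalization conventions (the absorbed \(\binom{D}{j}\), the factor \(D\) in \(\psi_1\)) verifiable rather than taken on trust.
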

For further details, see \citep[p.449]{ShermanMaximal1994}.
Note that in \citep[p.449]{ShermanMaximal1994}, Eq.~\eqref{eq:theory:hoeffding-decomposition-general} is written using \(\ED \psi\) in place of \(\cEData \Un{D}\psi\).
This is because
\begin{equation*}\begin{split}
\cEData \Un{D}\psi = \Un{D} \cEData \psi = \Un{D} \ED \psi = \ED \psi
\end{split}\end{equation*}
holds by linearity and symmetry.
\begin{remark}[Connecting the lemmas to Section~\ref{sec:theory:pseudo-generalization-bound}]
It can be easily checked by definition that the proposed risk estimator Eq.~\eqref{eq:theory:proposed-risk-estimator-1} takes the form of a V-statistic: \(\cbarhR(\g) = \Vn{D} \ke\) for each \(\g \in \G\).
Let us denote \(\leadingKe(\s) := \ke(\s, \cQ, \ldots, \cQ)\).
Then \(\cEData \leadingKe = \ED \ke\) holds by definition.
Substituting these into Eq.~\eqref{eq:theory:hoeffding-first-kernel},
we have that Eq.~\eqref{eq:theory:hoeffding-decomposition-general} applied to \(\psi = \ke\) is equivalent to
\begin{equation*}\begin{split}
\Un{D} \ke - \cEData \Un{D} \ke = D \cdot (\hEData \leadingKe - \cEData \leadingKe) + \sum_{j=2}^D \Un{j} \ke_j.
\end{split}\end{equation*}
where \(\{\ke_j\}_{j=2}^D\) are symmetric degenerate functions.
In Section~\ref{sec:theory:pseudo-generalization-bound}, we first decompose \(\cbarhR(\g)\) into a sum of U-statistics.
After such conversion, we take a closer look at the leading term, \(\hEData \leadingKe\).
\label{theory:rem:connect-hoeffding-decomp-and-bound}
\end{remark}
\subsection{Proof of pseudo estimation error bound \label{sec:theory:pseudo-generalization-bound}}
\label{sec:orge42bb4b}
\begin{proof}[(Proof of Lemma~\ref{lem:bound-pseudo-generalization-error})]
First, we have
\begin{equation*}\begin{split}
\cbarR(\cbarhg) - \cbarR(\cbarg) = \cbarR(\cbarhg) - \cbarhR(\cbarhg) + \cbarhR(\cbarhg) - \cbarR(\cbarg) &\leq \cbarR(\cbarhg) - \cbarhR(\cbarhg) + \cbarhR(\cbarg) - \cbarR(\cbarg) \\
&\leq 2 \supg \left|\cbarhR(\g) - \cbarR(\g)\right|.
\end{split}\end{equation*}
Now the right-most expression can be decomposed as
\begin{equation*}\begin{split}
&\supg |\cbarhR(\g) - \cbarR(\g)| = \supg |\Vn{D} \ke - \cEData \Vn{D} \ke| \\
&\leq w_D\supg \left|\Un{D}\ke - \cEData\Un{D}\ke\right| + \sum_{j=1}^{D-1} w_j \supg \left|\Un{j}\Uke{j} - \cEData \Un{j} \Uke{j}\right| \quad(\because \text{Lemma~\ref{lem:decompose-v-estimator}})\\
&\leq w_D\supg |\Un{D}\ke - \cEData\Un{D}\ke| + 2 \LossUpperBound \sum_{j=1}^{D-1} w_j \\
&\leq w_D\left(\supg |\hEData \dUke{1}| + \sum_{j=2}^D \supg |\Un{j}\dUke{j}|\right) + 2 \LossUpperBound \sum_{j=1}^{D-1} w_j \quad(\because \text{Lemma~\ref{lem:hoeffding-decomposition}})\\
&= w_D\left(\annot{\supg |\hEData D (\leadingKe - \cEData \leadingKe)|}{Addressed in Lemma~\ref{lem:rademacher-U-leading-term}} + \annot{\sum_{j=2}^D \supg |\Un{j}\dUke{j}|}{Addressed in Lemma~\ref{lem:rademacher-U-degenerate-part}}\right) + 2 \LossUpperBound \sum_{j=1}^{D-1} w_j.
\end{split}\end{equation*}
where \(\dUke{j}\) are symmetric degenerate functions and \(\leadingKe\) is defined as in Remark~\ref{theory:rem:connect-hoeffding-decomp-and-bound}.
Applying Lemma~\ref{lem:rademacher-U-leading-term} to the first term and Lemma~\ref{lem:rademacher-U-degenerate-part} to the second term with the union bound, we obtain the assertion.
\end{proof}
In the last part of the proof we used the following lemmas.
Because the leading term is an i.i.d.\ sum, the following Rademacher complexity bound can be proved.
\begin{lemma}[U-process bound: the leading term]
Assume Assumption~\ref{assum:q-bdd-and-lipschitz} holds.
Then, we have with probability at least \(1 - \delta\),
\begin{equation*}\begin{split}
\supg |\hEData (\leadingKe - \cEData \leadingKe)| \leq 2 \Rad(\G) + \LossUpperBound \sqrt{\frac{\log(2/\delta)}{2n}},
\end{split}\end{equation*}
where \(\Rad\) is defined in Definition~\ref{def:rademacher-complexity}.
\label{lem:rademacher-U-leading-term}
\end{lemma}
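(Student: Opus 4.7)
The statement is a standard Rademacher-style uniform deviation bound, made almost routine by the fact that $\leadingKe(\e) = \ke(\e, \cQ, \ldots, \cQ)$ depends on a single argument, so $\hEData \leadingKe$ is an ordinary \IID sum over $\cData$. The plan is therefore to apply the two-step recipe of McDiarmid's inequality followed by symmetrization, and then to match the resulting symmetrized expression with the definition of $\Rad(\G)$ given in Definition~\ref{def:rademacher-complexity}.

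First I would verify the bounded-differences property needed for McDiarmid. Since $\l \in \LossValSp$ by Assumption~\ref{assum:loss-uniform-lipschitz}, we have $|\leadingKe(\e)| \leq \LossUpperBound$ uniformly in $\g$, so replacing a single $\S_i$ in $\supg |\hEData(\leadingKe - \cEData \leadingKe)|$ changes its value by at most $2\LossUpperBound / n$. McDiarmid's inequality then yields that with probability at least $1-\delta$,
\begin{equation*}
\supg |\hEData(\leadingKe - \cEData \leadingKe)| \leq \cEData \supg |\hEData(\leadingKe - \cEData \leadingKe)| + \LossUpperBound \sqrt{\frac{\log(2/\delta)}{2n}}.
\end{equation*}

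Next I would bound the expectation by the standard ghost-sample symmetrization trick: introduce an independent copy $\cData'$ of $\cData$, rewrite $\cEData \leadingKe$ as $\cEData' \hEData' \leadingKe$, push the supremum inside the expectation over $\cData'$ via Jensen, and then insert Rademacher signs $\{\rad_i\}$ because the pair $(\S_i, \S_i')$ is exchangeable. This gives
\begin{equation*}
\cEData \supg |\hEData(\leadingKe - \cEData \leadingKe)| \leq 2 \cdot \frac{1}{n}\cEData \Erad \supg \left| \sum_{i=1}^n \rad_i \leadingKe(\S_i)\right|.
\end{equation*}
Since $\leadingKe(\S_i) = \ETwoToD{\S'}[\ke(\S_i, \S_2', \ldots, \S_D')]$ by definition of the shorthand $\ke(\cdot, \cQ, \ldots, \cQ)$, the right-hand side is exactly $2\Rad(\G)$ as defined in Definition~\ref{def:rademacher-complexity}. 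Combining with the McDiarmid step yields the claimed bound.

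There is no real obstacle here: the factor-of-two from symmetrization and the bounded range $\LossUpperBound$ from Assumption~\ref{assum:loss-uniform-lipschitz} are exactly what appear in the statement. The only point requiring mild care is making sure the outer expectation $\cEData$ in the definition of $\Rad(\G)$ is preserved correctly when we pull the supremum through — this is why the definition uses $\cEData \Erad$ rather than a data-dependent empirical Rademacher complexity. Everything else is a direct transcription of the textbook Rademacher-complexity argument (e.g., \citet{MohriFoundations2012}) to our particular \IID sum $\frac{1}{n}\sum_i \leadingKe(\S_i)$.
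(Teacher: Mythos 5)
Your proof is correct and follows essentially the same route as the paper, whose proof simply invokes the standard one-sided Rademacher bound of \citet{MohriFoundations2012} twice with a union bound (whence the \(\log(2/\delta)\)); your single two-sided McDiarmid step followed by symmetrization, using that \(\leadingKe\) depends on one argument so \(\hEData\leadingKe\) is an \IID average whose symmetrization matches Definition~\ref{def:rademacher-complexity} exactly, is an equivalent packaging. One small slip: since \(\leadingKe\) takes values in \([0,\LossUpperBound]\), replacing a single \(\S_i\) changes the supremum by at most \(\LossUpperBound/n\), not \(2\LossUpperBound/n\) --- with your constant McDiarmid would give the weaker deviation \(2\LossUpperBound\sqrt{\log(1/\delta)/(2n)}\), whereas the correct constant yields \(\LossUpperBound\sqrt{\log(1/\delta)/(2n)}\leq\LossUpperBound\sqrt{\log(2/\delta)/(2n)}\) as claimed.
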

\begin{proof}
Applying the standard one-sided Rademacher complexity bound based on McDiarmid's inequality \citep[Theorem~3.1]{MohriFoundations2012} twice with the union bound, we obtain the lemma.
\end{proof}
The other terms than the leading term are degenerate U-statistics, hence the following holds under appropriate entropy assumptions.
\begin{lemma}[U-process bound: degenerate terms {\citep[Corollary~7]{ShermanMaximal1994}}]
Assume Assumption~\ref{assum:euclidean-class}. Then for each \(j = 2, \ldots, D\),
there exist constants \(\PseudoGenCj\) such that for any \(\delta \in (0, 1)\), we have with probability at least \(1 - \delta' / (D - 1)\),
\begin{equation*}\begin{split}
\supg |\Un{j}\dUke{j}| \leq \frac{(D-1)}{\delta'}\PseudoGenCj n^{-j/2}
\end{split}\end{equation*}
where \(\PseudoGenCj\) depends only on \(A\), \(V\), and \(\LossUpperBound\).
\label{lem:rademacher-U-degenerate-part}
\end{lemma}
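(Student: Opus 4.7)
The plan is to invoke the maximal inequality for degenerate U-processes over Euclidean classes due to \citet{ShermanMaximal1994}. The argument proceeds in three steps: first, verify that the family of Hoeffding-projected kernels $\{\dUke{j} : \g \in \G\}$ forms a Euclidean class of completely degenerate symmetric kernels of degree $j$; second, apply Sherman's maximal inequality to obtain a moment bound of order $n^{-j/2}$; third, convert this moment bound into the stated high-probability bound via Markov's inequality.

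For the first step, recall that $\dUke{j}$ arises from $\ke$ via the Hoeffding projection, which amounts to taking iterated integrals of $\ke$ against the measure $\cQ$ on the coordinates that are to be averaged out. These projections are $L^2$-contractions with respect to the natural product measure, so the packing number of $\{\dUke{j}: \g \in \G\}$ under the relevant pseudometric is controlled (up to universal factors) by the packing number of $\keSp$ assumed Euclidean in Assumption~\ref{assum:euclidean-class}. Hence the projected class inherits the Euclidean property with modified constants $A'$ and $V'$ depending only on $j$, $A$, $V$, and $\LossUpperBound$, and with envelope bounded by a constant multiple of $\LossUpperBound$ because each Hoeffding kernel inherits boundedness from the original loss, which is uniformly bounded by $\LossUpperBound$ (Assumption~\ref{assum:loss-uniform-lipschitz}).

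For the second step, Corollary~7 of \citet{ShermanMaximal1994} then yields a constant $\PseudoGenCj$, depending only on $A'$, $V'$, and $\LossUpperBound$, such that
\begin{equation*}
\cEData \supg |\Un{j} \dUke{j}| \leq \PseudoGenCj\, n^{-j/2}.
\end{equation*}
For the third step, Markov's inequality gives that for any $t > 0$,
\begin{equation*}
\Prob\Bigl(\supg |\Un{j} \dUke{j}| \geq t\Bigr) \leq \frac{\PseudoGenCj\, n^{-j/2}}{t},
\end{equation*}
and taking $t := (D-1)\,\PseudoGenCj\, n^{-j/2} / \delta'$ yields the desired probability bound of $\delta'/(D-1)$ for each individual $j$.

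The main obstacle is the first step: Sherman's framework requires checking that the Hoeffding-projected class is Euclidean, which involves tracking how covering numbers transform under the projection. While this stability is a standard folklore consequence of the contractive nature of conditional expectation, spelling out the dependence of the new covering constants on the original $A$, $V$, $\LossUpperBound$, $j$, and $D$ is the delicate bookkeeping part of the proof. Once this is established, the remainder of the argument is a direct application of Sherman's maximal inequality combined with Markov's inequality, and the union bound over $j = 2, \ldots, D$ is taken implicitly at the point of use inside Lemma~\ref{lem:bound-pseudo-generalization-error}.
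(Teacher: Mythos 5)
Your proposal is correct and follows essentially the same route as the paper: establish that the Hoeffding-projected class $\UkeSp{j}$ is Euclidean and degenerate, apply Sherman's maximal inequality to get the moment bound $\cEData \supg |n^{j/2}\Un{j}\dUke{j}| \leq \PseudoGenCj$, and convert via Markov's inequality with $u = (D-1)\PseudoGenCj n^{-j/2}/\delta'$. The step you flag as the delicate obstacle --- stability of the Euclidean property under Hoeffding projection --- is exactly Lemma~6 of \citet{ShermanMaximal1994}, which the paper invokes directly, so no additional covering-number bookkeeping is needed.
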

\begin{proof}
The proof follows a similar path as that of \citep[Corollary~7]{ShermanMaximal1994}, but we provide more explicit expressions to inspect the order with respect to \(n\).
Let \(\UkeSp{j} := \{\dUke{j} : \g \in \G\}\).
Then \(\UkeSp{j}\) is Euclidean for an envelope \(F_j\) satisfying \(\cQ^j F_j^2 < \infty\)
by Lemma~6 in \citet{ShermanMaximal1994} and Assumption~\ref{assum:euclidean-class}.
In addition, \(\UkeSp{j}\) is a set of functions degenerate with respect to \(\cQ\).
Without loss of generality, we can take \(F_j\) such that \(F_j \leq \LossUpperBound\).
Similarly to the proof of \citep[Corollary~4]{ShermanMaximal1994}, we can apply \citep[Main~Corollary]{ShermanMaximal1994} with \(p = 1\) in their notation to obtain
\begin{equation*}\begin{split}
\cEData \supg |n^{j/2} \Un{j}\dUke{j}| \leq \Gamma A^{1/2mp}(\cQ^j F_j^2)^{(\epsilon + \alpha)/2} \leq \annot{\Gamma A^{1/2mp}(\LossUpperBound)^{\epsilon + \alpha}}{\(=: \PseudoGenCj\)}
\end{split}\end{equation*}
where \(\Gamma\) is a universal constant \citep[Main~Corollary]{ShermanMaximal1994},
\(\epsilon \in (0, 1)\) and \(m\) are chosen to satisfy \(1 - V/2m > 1 - \epsilon\),
and \(\alpha = 1 - V/2m\).
By applying Markov inequality, we have for arbitrary \(u > 0\),
\begin{equation*}\begin{split}
\Probability{\cData}\left(\supg |n^{j/2} \Un{j}\dUke{j}| > u\right) \leq \frac{\PseudoGenCj}{u},
\end{split}\end{equation*}
where \(\Probability{\cData}(E)\) denotes the probability of the event \(E\) with respect to \(\cData\).
Equating the right hand side with \(\delta' / (D - 1)\) and solving for \(u\), we obtain the result.
\end{proof}
\subsection{Proof of approximation error bound \label{sec:theory:approx-error-bound}}
\label{sec:org7619785}
\begin{proof}[(Proof of Lemma~\ref{lem:approximation-error})]
Due to Lemma~\ref{lem:decompose-v-estimator}, we have
\begin{equation*}\begin{split}
\supg \left(\R(\g) - \cbarR(\g)\right) &= \supg \left(\R(\g) - \cEData\Vn{D}\ke\right) \\
&= \supg \left(\sum_{j=1}^D w_j (\R(\g) - \cEData\Un{j}\Uke{j})\right)\\
&\leq w_D \supg \left(\R(\g) - \cEData \Un{D}\Uke{D}\right) + 2 \LossUpperBound \sum_{j=1}^{D-1}\annot{w_j}{\(\Order(n^{-(D-j)})\)} \\
&\leq w_D \supg \left(\R(\g) - \cEData \Un{D}\Uke{D}\right) + 2 \LossUpperBound \Order(n^{-1})
\end{split}\end{equation*}
By applying Lemmas~\ref{lem:approx-error-bound} (with \(j = D\)), we obtain
\begin{equation*}\begin{split}
&\supg \left(\R(\g) - \cEData\Un{D}\Uke{D}\right) \leq \supg \LoneWith{\q}{\l(\f(\g, \cdot)) - \l(\g, \hf(\cdot))} + D \LossUpperBound \Lone{\q - \cq}.\\
\end{split}\end{equation*}
The right-hand side can be further bounded by applying Lemmas~\ref{lem:approx-density-difference} and \ref{lem:approx-loss-difference} by
\begin{equation*}\begin{split}
&\ApproxLossDifferenceBound + D \LossUpperBound \left(\ApproxDensityDifferenceBoundMain + \qUpperBound \ApproxDensityDifferenceBoundRemainder\right) \\
&\leq \ApproxErrorUpperBoundContent
\end{split}\end{equation*}
and hence the assertion of the lemma.
\end{proof}
The above proof combined three approximation bounds, which are shown in the following lemmas.
The following lemma reduces the difference in the expectation of U-statistic into the differences in the loss function and the density function.
Although we apply the following Lemma~\ref{lem:approx-error-bound} only with \(j=D\), we prove its general form for \(j \in [D]\).
\begin{lemma}[Approximation bound for U-statistic of degree-\(j\)]
Fix \(j \in [D]\). Assume Assumption~\ref{assum:q-bdd-and-lipschitz}.
Then we have for any \(\g \in \G\),
\begin{equation*}\begin{split}
\R(\g) - \cEData\Un{j}\Uke{j} \leq \LoneWith{\q}{\l(\g, \f(\cdot)) - \l(\g, \hf(\cdot))} + j \LossUpperBound \Lone{\q - \cq}
\end{split}\end{equation*}
\label{lem:approx-error-bound}
\end{lemma}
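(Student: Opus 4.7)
The plan is to split the gap \(R(\g) - \cEData \Un{j} \Uke{j}\) into two parts by adding and subtracting the auxiliary quantity \(\int \q(\e)\l(\g, \hf(\e)) \de\). The first part,
\(R(\g) - \int \q(\e)\l(\g, \hf(\e))\de = \int \q(\e)[\l(\g, \f(\e)) - \l(\g, \hf(\e))]\de\),
is immediately bounded by \(\LoneWith{\q}{\l(\g, \f(\cdot)) - \l(\g, \hf(\cdot))}\), which produces the first term of the lemma. The remaining task is to bound \(\int \q(u)\l(\g, \hf(u))du - \cEData \Un{j} \Uke{j}\).

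Since U-statistics are unbiased, \(\cEData \Un{j} \Uke{j} = \cQj \Uke{j}\). Unfolding the definitions of \(\Uke{j}\) and \(\ke\), the composition \(\sigma := \tau \circ \pi\) of the outer surjection \(\tau \in \jGroupSplit\) and the inner permutation \(\pi \in \DSymmetric\) sweeps each surjection in \(\jGroupSplit\) with uniform multiplicity \(D!\), so
\(\Uke{j}(\e_1, \ldots, \e_j) = \Mean{\sigma \in \jGroupSplit} \l(\g, \hf(\e_{\sigma(1)}^{(1)}, \ldots, \e_{\sigma(D)}^{(D)}))\). Integrating against \(\cQj\) and marginalising out the components of each \(\e_k\) not referenced by \(\hf\) identifies
\(\cQj \Uke{j} = \Mean{\sigma \in \jGroupSplit} \int \l(\g, \hf(u)) \tilde\q_\sigma(u)\,du\),
where \(\tilde\q_\sigma(u) := \prod_{k=1}^j \cq_{A_k}(u_{A_k})\) is a block-product density on \(\ReD\) built from the marginals of \(\cq\) on the blocks \(A_k := \sigma^{-1}(k)\).

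Hence the remaining quantity equals \(\Mean{\sigma \in \jGroupSplit} \int \l(\g, \hf(u))[\q(u) - \tilde\q_\sigma(u)]du\), which is at most \(\LossUpperBound \cdot \Mean{\sigma \in \jGroupSplit} \Lone{\q - \tilde\q_\sigma}\) by boundedness of \(\l\). Because \(\q\) is itself a product density, \(\q(u) = \prod_{k=1}^j \q_{A_k}(u_{A_k})\), and the telescoping identity
\(\prod_k \q_{A_k} - \prod_k \cq_{A_k} = \sum_{k=1}^j \bigl(\prod_{k'<k} \q_{A_{k'}}\bigr)(\q_{A_k} - \cq_{A_k})\bigl(\prod_{k'>k} \cq_{A_{k'}}\bigr)\)
together with the \(L^1\) triangle inequality and the fact that block factors integrate to one gives
\(\Lone{\q - \tilde\q_\sigma} \leq \sum_{k=1}^j \Lone{\q_{A_k} - \cq_{A_k}} \leq j\,\Lone{\q - \cq}\),
where the last step is the standard fact that a marginal difference is bounded above by the joint difference. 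Combining the two parts yields the stated bound.

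The main technical point is the identification of \(\cQj \Uke{j}\) as an average of integrals against the block-product density \(\tilde\q_\sigma\). This needs careful index bookkeeping: the double averaging over \(\tau\) and \(\pi\) must be merged into a single average over \(\sigma = \tau \circ \pi\), and the integration over the unused coordinates of each \(\e_k\) must be recognised as producing precisely the marginal \(\cq_{A_k}\). Once this identification is in place, the remaining estimates --- triangle inequality, boundedness of \(\l\), and the elementary telescoping computation that converts the block-product gap into \(j\,\Lone{\q - \cq}\) --- follow routinely.
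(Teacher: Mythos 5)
Your proposal is correct and follows essentially the same route as the paper: both split the gap into a loss-difference term (bounded by $\LoneWith{\q}{\l(\g,\f(\cdot))-\l(\g,\hf(\cdot))}$ using the independence of $\q$) and a density-difference term handled by telescoping a product of densities, each factor contributing one copy of $\Lone{\q-\cq}$. The only cosmetic difference is that the paper telescopes the $j$-fold product $\q^{\otimes j}-\cq^{\otimes j}$ over the $j$ sample copies on $(\ReD)^j$ after reducing $(\Qn-\cQn)\Unj\Ukej$ to $(\Qj-\cQj)\Ukej$ by symmetry, whereas you push everything down to $\ReD$ and telescope over the $j$ coordinate blocks $A_k=\sigma^{-1}(k)$, invoking the $L^1$-contractivity of marginalization; both yield the same factor $j$.
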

\begin{proof}
Let us define a \(D\)-variate function \(\lstar\) and a \(j\)-variate function \(\lstarj\) (similarly to \(\ke\) and \(\Ukej\), respectively) by
\begin{equation*}\begin{split}
\lstar(\s_1, \ldots, \s_D) &:= \DSymmetricMean \l(\g, \f(\s_{\pi(1)}^{(1)}, \ldots, \s_{\pi(D)}^{(D)})),\\
\lstarj(\s_1, \ldots, \s_j) &:= \jGroupSplitMean \lstar(\s_{\tau(1)}, \ldots, \s_{\tau(D)}).
\end{split}\end{equation*}
Then, recalling \(\Q \in \Qsp\), we can show \(\R(\g) = \Qn(\Unj\lstarj)\) because
\begin{equation*}\begin{split}
\Qn(\Unj \lstarj) &= \Qn(\jnCombsetMean \lstarj(\S_{\rho(1)}, \ldots, \S_{\rho(j)})) \\
&= \Qn(\jnCombsetMean \jGroupSplitMean \lstar(\S_{\rho \circ \tau(1)}, \ldots, \S_{\rho \circ \tau(D)})) \\
&= \Qn(\jnCombsetMean \jGroupSplitMean \DSymmetricMean \l(\g, \f(\S_{\rho \circ \tau \circ \pi(1)}^{(1)}, \ldots, \S_{\rho \circ \tau \circ \pi(D)}^{(D)}))) \\
&= \jnCombsetMean \jGroupSplitMean \DSymmetricMean \Qn \l(\g, \f(\S_{\rho \circ \tau \circ \pi(1)}^{(1)}, \ldots, \S_{\rho \circ \tau \circ \pi(D)}^{(D)})) \\
&= \jnCombsetMean \jGroupSplitMean \DSymmetricMean \Q [\l(\g, \f(\S^{(1)}, \ldots, \S^{(D)}))] \quad (\because \Q \in \Qsp)\\
&= \jnCombsetMean \jGroupSplitMean \DSymmetricMean \R(\g) = \R(\g).
\end{split}\end{equation*}
Combining this expression with Lemma~\ref{lem:decompose-v-estimator},
\begin{equation*}\begin{split}
\R(\g) - \cEData\Unj\Ukej &= \Qn(\Unj \lstarj) - \cQn (\Unj\Ukej) \\ 
&= \annot{\Qn(\Unj \lstarj - \Unj\Ukej)}{\(A\)} + \annot{(\Qn - \cQn)(\Unj\Ukej)}{\(B\)} \\
\end{split}\end{equation*}
Now, \(A\) can be bounded from above as
\begin{equation*}\begin{split}
A &= \Qn(\Unj \lstarj - \Unj\Ukej) \\
&= \jnCombsetMean \jGroupSplitMean \DSymmetricMean \Qn (\l(\g, \f(\S^{(1)}_{\rho\circ\tau\circ\pi(1)}, \ldots, \S^{(D)}_{\rho\circ\tau\circ\pi(D)})) - \l(\g, \hf(\S^{(1)}_{\rho\circ\tau\circ\pi(1)}, \ldots, \S^{(D)}_{\rho\circ\tau\circ\pi(D)}))) \\
&= \jnCombsetMean \jGroupSplitMean \DSymmetricMean \Q (\l(\g, \f(\S^{(1)}, \ldots, \S^{(D)})) - \l(\g, \hf(\S^{(1)}, \ldots, \S^{(D)}))) \quad (\because \Q \in \Qsp)\\
&\leq \LoneWith{\q}{\l(\g, \f(\cdot)) - \l(\g, \hf(\cdot))}
\end{split}\end{equation*}
Then recalling Assumption~\ref{assum:q-bdd-and-lipschitz}, we can bound \(B\) from above as
\begin{equation*}\begin{split}
B &= (\Qn - \cQn)(\Unj\Ukej) = (\Qn - \cQn)\left(\jnCombsetMean \Ukej(\S_{\rho(1)}, \ldots, \S_{\rho(j)})\right) \\
&= \jnCombsetMean (\Qn - \cQn)\left(\Ukej(\S_{\rho(1)}, \ldots, \S_{\rho(j)})\right) = (\Qj - \cQj) (\Ukej(\S_1, \ldots, \S_j)) \quad (\because \text{ symmetry})\\
&\leq \LossUpperBound \int \left|\prod_{i=1}^j \q(\s_i) - \prod_{i=1}^j \cq(\s_i)\right| \ds_1 \cdots \ds_j \\
&= \LossUpperBound \int \left|\sum_{i=1}^j \q(\s_1)\cdots\q(\s_{i-1}) \cdot (\q(\s_i) - \cq(\s_i)) \cdot \cq(\s_{i+1})\cdots\cq(\s_j)\right| \ds_1 \cdots \ds_j \\
&\leq \LossUpperBound \sum_{i=1}^j \int \q(\s_1)\cdots\q(\s_{i-1}) \cdot |\q(\s_i) - \cq(\s_i)| \cdot \cq(\s_{i+1})\cdots\cq(\s_j) \ds_1 \cdots \ds_j \\
&= \LossUpperBound \sum_{i=1}^j \int |\q(\s_i) - \cq(\s_i)| \ds_i = \LossUpperBound \cdot j \Lone{\q - \cq}, \\
\end{split}\end{equation*}
which proves the assertion.
\end{proof}
Now the following lemmas bound each approximation terms in terms of the difference between \(\f\) and \(\hf\).
\begin{lemma}[Loss difference approximation]
Assume Assumption~\ref{assum:loss-uniform-lipschitz}.
Then we have for any \(\g \in \G\),
\begin{equation*}\begin{split}
\LoneWith{\q}{\l(\g, \f(\cdot)) - \l(\g, \hf(\cdot))} \leq \ApproxLossDifferenceBound
\end{split}\end{equation*}
\label{lem:approx-loss-difference}
\end{lemma}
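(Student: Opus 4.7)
The plan is to chain four elementary estimates and the main work is just to keep track of constants. First, I would apply the uniform Lipschitz continuity of $\l$ from Assumption~\ref{assum:loss-uniform-lipschitz} to bound the integrand pointwise by
$$|\l(\g, \f(\e)) - \l(\g, \hf(\e))| \leq \lossUniformLipschitzConst \, \ltwo{\f(\e) - \hf(\e)}.$$
Second, use the standard finite-dimensional inequality $\ltwo{v} \leq \lone{v}$ for $v \in \ReD$ to split the norm into a coordinate sum
$$\ltwo{\f(\e) - \hf(\e)} \leq \sum_{j=1}^D |\f_j(\e) - \hf_j(\e)|.$$
Third, apply Assumption~\ref{assum:q-bdd-and-lipschitz} to factor out $\qUpperBound = \supeFull \q(\e)$ and replace the $\q$-weighted integral by the Lebesgue integral. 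Fourth, observe that $\Lone{\f_j - \hf_j} \leq \SobolevOne{\f_j - \hf_j}$ since $\Lone{\cdot}$ is one of the summands in the definition of the $(1,1)$-Sobolev norm. Composing these inequalities yields exactly $\ApproxLossDifferenceBound$.

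I do not expect a real obstacle for this lemma: every step is a standard norm inequality, and Assumption~\ref{assum:hf-is-independently-trained} guarantees $\f_j - \hf_j \in \SobolevSp{1}{1}$ so the right-hand side is finite. The only minor choice worth noting is to prefer $\ltwo{v} \leq \lone{v}$ over $\ltwo{v} \leq \sqrt{D}\,\supnrm{v}$, because the former avoids an extra $\sqrt{D}$ factor and reproduces the constant $\qUpperBound \lossUniformLipschitzConst$ stated in $\ApproxLossDifferenceBound$ without any slack. No symmetrization, no decomposition into U-statistic components, and no use of $\hf$'s invertibility are needed for this particular bound; all those tools are saved for the density-difference lemma that follows.
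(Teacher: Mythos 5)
Your proof is correct and follows exactly the same chain of inequalities as the paper's: Lipschitzness of \(\l\), the bound \(\ltwo{\cdot}\leq\lone{\cdot}\), factoring out \(\qUpperBound\), and absorbing the \(\LoneSp\) norm into the \((1,1)\)-Sobolev norm. Your side remark that Assumption~\ref{assum:q-bdd-and-lipschitz} is also implicitly needed (for \(\qUpperBound\)) is a fair observation, since the lemma statement only cites Assumption~\ref{assum:loss-uniform-lipschitz} while the paper's own proof uses the density bound as well.
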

\begin{proof}
\begin{equation*}\begin{split}
&\LoneWith{\q}{\l(\g, \f(\cdot)) - \l(\g, \hf(\cdot))} = \int |\l(\g, \f(\e)) - \l(\g, \hf(\e))|\q(\e) \de \\
&\qquad\leq \qUpperBound \int \lossUniformLipschitzConst \ltwo{\f(\e) - \hf(\e)} \de \\
&\qquad\leq \qUpperBound \lossUniformLipschitzConst\int \lone{\f(\e) - \hf(\e)} \de \leq \qUpperBound \lossUniformLipschitzConst \sum_{j=1}^D \SobolevOne{\f_j - \hf_j}. \\
\end{split}\end{equation*}
\end{proof}
\begin{lemma}[Density difference approximation]
Assume Assumptions~\ref{assum:q-bdd-and-lipschitz}, \ref{assum:f-lipschitz-and-holder}, and \ref{assum:dfinv-operator-bounded}.
Then we have
\begin{equation*}\begin{split}
\Lone{\q - \cq} \leq \ApproxDensityDifferenceBoundMain + \qUpperBound \ApproxDensityDifferenceBoundRemainder\\
\end{split}\end{equation*}
where \(\ApproxDensityDifferenceJacobianPieceCoeffOne\) and \(\ApproxDensityDifferenceBoundRemainder\) are defined as in Lemma~\ref{lem:approx-density-bound-2}.
\label{lem:approx-density-difference}
\end{lemma}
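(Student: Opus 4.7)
The plan is to start from the change-of-variables identity $\cq(\s) = \q(\finvhf(\s)) \J{\finvhf}{\s}$ (which comes from $\cq = \pushForward{(\hfinv\circ\f)}{\q}$) and, inside the integrand of $\Lone{\q-\cq}$, add and subtract $\q(\finvhf(\s))$ to obtain
\begin{equation*}
\q(\s) - \cq(\s) \;=\; \bigl(\q(\s) - \q(\finvhf(\s))\bigr) \;-\; \q(\finvhf(\s))\bigl(\J{\finvhf}{\s} - 1\bigr).
\end{equation*}
A triangle inequality will then split $\Lone{\q-\cq}$ into a \emph{density-Lipschitz piece} and a \emph{Jacobian piece}, which I would handle separately.

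For the density-Lipschitz piece, I would exploit $\s = \finv(\f(\s))$ to rewrite $\s - \finvhf(\s) = \finv(\f(\s)) - \finv(\hf(\s))$, then chain the Lipschitz constants $\qLipschitzConst$ and $\finvLipschitzConst$ (Assumptions~\ref{assum:q-bdd-and-lipschitz} and \ref{assum:f-lipschitz-and-holder}) with the elementary inequality $\ltwo{v} \leq \lone{v}$ to obtain
\begin{equation*}
\int |\q(\s) - \q(\finvhf(\s))| \d\s \;\leq\; \qLipschitzConst \finvLipschitzConst \sum_{j=1}^D \Lone{\f_j - \hf_j} \;\leq\; \qLipschitzConst \finvLipschitzConst \sum_{j=1}^D \SobolevOne{\f_j - \hf_j},
\end{equation*}
where the final step uses $\Lone{\cdot} \leq \SobolevOne{\cdot}$. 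This delivers the first summand of $\ApproxDensityDifferenceBoundMainCoeff$.

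For the Jacobian piece I would bound $\q(\finvhf(\s)) \leq \qUpperBound$ pointwise and then invoke Lemma~\ref{lem:approx-density-bound-2} to estimate $\int |\J{\finvhf}{\s} - 1|\d\s$ in terms of Sobolev norms of $\f - \hf$. The mechanism I expect to use there is to write $J_{\finvhf}(\s) = I + E(\s)$ (which is exact since $\hf=\f$ would give $E \equiv 0$), with entries of $E$ controlled linearly by $\f - \hf$ and its first derivatives via $J_{\finvhf} = (J_\finv \circ \hf)\, J_\hf$ compared against the baseline $(J_\finv \circ \f)\, J_\f = I$. Leibniz-expanding $\det(I + E) - 1$ as a sum over nonempty subsets of $[D]$ produces an order-$1$ term that integrates to $\qUpperBound D \ApproxDensityDifferenceJacobianPieceCoeffOne \sum_j \SobolevOne{\f_j - \hf_j}$ (the second summand of the main coefficient), while the order-$d$ terms for $d \geq 2$ are bounded via H\"older's inequality with $d$ conjugate exponents, producing $\qUpperBound \ApproxDensityDifferenceBoundRemainder$ with the combinatorial factor $\comb{D}{d}$ coming from the number of $d$-subsets of $[D]$.

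The main obstacle is precisely this determinantal expansion underlying Lemma~\ref{lem:approx-density-bound-2}: one has to carefully extract the perturbation $E$ from $J_{\finvhf}$, express it in terms of $\f - \hf$ and $\df - \dhf$ while exploiting the sup-norm bounds $\dfSupNormConst$, $\dfinvSupNormConst$ and the $\finvHolderClass$ regularity of $\finv$ (Assumptions~\ref{assum:dfinv-operator-bounded} and \ref{assum:f-lipschitz-and-holder}), and then control the integrals of products of entries of $E$; this is where the constants $\ApproxDensityDifferenceJacobianPieceCoeff$ arise. Once both pieces are controlled, summing the two bounds through the triangle inequality yields exactly the claimed inequality.
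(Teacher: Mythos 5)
Your proposal is correct and follows essentially the same route as the paper: the identical add-and-subtract decomposition of $\q - \cq$ via the change-of-variables identity, the same Lipschitz chaining $\qLipschitzConst\finvLipschitzConst$ through $\s = \finv(\f(\s))$ for the first piece, and the same delegation of $\int|1-\Ji{\finvhf}{\e}|\de$ to Lemma~\ref{lem:approx-density-bound-2} after bounding $\q \leq \qUpperBound$. The only cosmetic difference is that you envision proving the Jacobian sub-lemma by a direct Leibniz expansion of $\det(I+E)$ over subsets, whereas the paper cites a ready-made determinant perturbation bound (Lemma~\ref{lem:det-difference-bound}) yielding the same $\sum_d \comb{D}{d}\op{E}^d$ structure.
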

\begin{proof}
Since \(\cq(\e) = \q(\finvhf(\e))\J{\finvhf}{\e}\), we have
\begin{equation*}\begin{split}
\Lone{\q - \cq} &= \int \left|\q(\e) - \q(\finvhf(\e)) \J{\finvhf}{\e}\right|\de \\
&\leq \int |\q(\e) - \q(\finvhf(\e))| \de + \int \q(\finvhf(\e)) \left|1 - \J{\finvhf}{\e}\right| \de \\
&\leq \annot{\int |\q(\e) - \q(\finvhf(\e))| \de}{(A)} + \qUpperBound \annot{\int \left|1 - \Ji{\finvhf}{\e}\right| \de}{(B)} \\
\end{split}\end{equation*}
where the last line follows from the triangle inequality.
Applying Lemma~\ref{lem:approx-density-bound-1} to (A) and Lemma~\ref{lem:approx-density-bound-2} to (B) yields the assertion.
\end{proof}
\begin{lemma}[]
Assume Assumptions~\ref{assum:q-bdd-and-lipschitz} and \ref{assum:f-lipschitz-and-holder}.
Then,
\begin{equation*}\begin{split}
\int |\q(\e) - \q(\finvhf(\e))| \de \leq \ApproxDensityBoundOne
\end{split}\end{equation*}
\label{lem:approx-density-bound-1}
\end{lemma}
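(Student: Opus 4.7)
The plan is to prove Lemma~\ref{lem:approx-density-bound-1} by a short chain of three Lipschitz-type inequalities, followed by a norm-equivalence step that introduces the Sobolev norm on the right-hand side.

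First I would exploit the Lipschitz continuity of $\q$ guaranteed by Assumption~\ref{assum:q-bdd-and-lipschitz} to pull the absolute difference inside a distance on the argument: pointwise,
\begin{equation*}
|\q(\e) - \q(\finvhf(\e))| \leq \qLipschitzConst \, \ltwo{\e - \finvhf(\e)}.
\end{equation*}
Next I would rewrite $\e = \finv(\f(\e))$, which is legal because $\f$ is a diffeomorphism on $\ReD$, so that
\begin{equation*}
\ltwo{\e - \finvhf(\e)} = \ltwo{\finv(\f(\e)) - \finv(\hf(\e))} \leq \finvLipschitzConst \, \ltwo{\f(\e) - \hf(\e)}
\end{equation*}
by the Lipschitz constant of $\finv$ from Assumption~\ref{assum:f-lipschitz-and-holder}. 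This reduces the bound to an integral involving $\ltwo{\f(\e) - \hf(\e)}$, which is the quantity we want to relate to the Sobolev norms of the coordinate differences $\f_j - \hf_j$.

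The second stage is the norm-equivalence and integration step. Using $\ltwo{v} \leq \lone{v}$ for $v \in \ReD$, I would bound $\ltwo{\f(\e) - \hf(\e)} \leq \sum_{j=1}^D |\f_j(\e) - \hf_j(\e)|$, integrate with respect to $\e$, and identify each summand as the $L^1(\ReD)$ norm of $\f_j - \hf_j$. Since the $(1,1)$-Sobolev norm dominates the $L^1$ norm by definition (the $W^{1,1}$ norm is the $L^1$ norm of the function plus the $L^1$ norms of its weak first derivatives), we obtain $\Lone{\f_j - \hf_j} \leq \SobolevOne{\f_j - \hf_j}$, which closes the chain and delivers the stated constant $\qLipschitzConst \finvLipschitzConst$.

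There is no real obstacle: the argument is a three-line Lipschitz composition and a standard norm comparison. The only minor point worth noting in the write-up is justifying the substitution $\e = \finv(\f(\e))$, which is immediate from the assumption that $\f$ is a diffeomorphism of $\ReD$ and lets the Lipschitz constant of $\finv$ (rather than some harder-to-control constant for the composition $\finv \circ \hf$) do the work.
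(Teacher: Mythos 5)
Your proposal is correct and follows essentially the same argument as the paper's proof: rewrite $\e$ as $\finvf(\e)$, chain the Lipschitz constants of $\q$ and $\finv$ to reduce to $\qLipschitzConst\finvLipschitzConst\ltwo{\f(\e)-\hf(\e)}$, pass to the $\ell^1$ norm, and dominate the resulting $L^1$ norms by the $(1,1)$-Sobolev norms. The only cosmetic difference is that you apply the two Lipschitz bounds in two explicit steps rather than in one composed step.
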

\begin{proof}
We have
\begin{equation*}\begin{split}
&\int |\q(\e) - \q(\finvhf(\e))| \de = \int |\q(\finvf(\e)) - \q(\finvhf(\e))| \de \\
&\qquad\leq \qLipschitzConst \finvLipschitzConst \int \ltwo{\f(\e) - \hf(\e)} \de \leq \qLipschitzConst \finvLipschitzConst \int \lone{\f(\e) - \hf(\e)} \de \\
&\qquad\leq \qLipschitzConst \finvLipschitzConst \sum_{j=1}^D \SobolevOne{\f_j - \hf_j} \\
\end{split}\end{equation*}
\end{proof}
\begin{lemma}[Jacobian difference approximation]
Assume Assumptions~\ref{assum:q-bdd-and-lipschitz} and \ref{assum:dfinv-operator-bounded}.
Then,
\begin{equation*}\begin{split}
\int \left|1 - \Ji{\finvhf}{\e}\right| \de &\leq \ApproxDensityBoundTwo + \ApproxDensityDifferenceBoundRemainder,
\end{split}\end{equation*}
where
\begin{equation*}\begin{split}
\ApproxDensityDifferenceJacobianPieceCoeff &:= \ApproxDensityDifferenceJacobianPieceCoeffDef, \\
\ApproxDensityDifferenceBoundRemainder &:= \ApproxDensityDifferenceBoundRemainderDef.\\
\end{split}\end{equation*}
\label{lem:approx-density-bound-2}
\end{lemma}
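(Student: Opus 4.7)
The plan is to express the Jacobian of $\finvhf$ as a perturbation of the identity and then expand its determinant. By the chain rule, $(\Jacobian(\finvhf))(\e) = (\dfinv)(\hf(\e)) \cdot \dhf(\e)$. Because $\finv \circ \f = \Idd$, differentiating gives $(\dfinv)(\f(\e)) \cdot \df(\e) = I$, so I can write
\begin{equation*}
(\dfinv)(\hf(\e)) \cdot \dhf(\e) = I + M_A(\e) + M_B(\e),
\end{equation*}
where $M_A(\e) := \bigl[(\dfinv)(\hf(\e)) - (\dfinv)(\f(\e))\bigr] \cdot \df(\e)$ and $M_B(\e) := (\dfinv)(\hf(\e)) \cdot [\dhf(\e) - \df(\e)]$. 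The key observation is that both $M_A$ and $M_B$ vanish when $\hf = \f$, and each is \emph{linear} in either $\hf - \f$ (via the Lipschitz property of $\dfinv$ from Assumption~\ref{assum:f-lipschitz-and-holder}) or in $\dhf - \df$.

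Next I expand the determinant using the principal-minor expansion
\begin{equation*}
\det(I + M) - 1 = \Tr(M) + \sum_{d=2}^D \sum_{\substack{S \subseteq [D] \\ |S| = d}} \det(M[S,S]),
\end{equation*}
where $M := M_A + M_B$. Taking absolute values splits the error into a linear part (the trace) and a sum of $\binom{D}{d}$ principal minors for each order $d \geq 2$. For the entrywise bounds, I use Assumption~\ref{assum:dfinv-operator-bounded} and Assumption~\ref{assum:f-lipschitz-and-holder} to obtain $|M_A(\e)_{ij}| \lesssim D \cdot \dfSupNormConst \cdot \bigl(\sum_k \finvHolderNorm{\finv_k}\bigr) \cdot \ltwo{\f(\e) - \hf(\e)}$ and $|M_B(\e)_{ij}| \lesssim D \cdot \dfinvSupNormConst \cdot \sum_j \ltwo{\df_j(\e) - \dhf_j(\e)}$, each involving the factors that appear in $\ApproxDensityDifferenceJacobianPieceCoeffOne$.

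The linear term integrates to the main contribution: $\int |\Tr(M)| \de \leq D \ApproxDensityDifferenceJacobianPieceCoeffOne \sum_j \SobolevOne{\f_j - \hf_j}$, where $\ltwo{\cdot} \leq \lone{\cdot}$ converts vector norms into componentwise sums, and the Sobolev $\SobolevOne{\cdot}$ captures both the $L^1$ norm of $\f_j - \hf_j$ and of its derivative. For the $d$-th-order minors ($d \geq 2$), I apply Hadamard's inequality $|\det(M[S,S])| \leq d^{d/2} (\max_{ij}|M_{ij}|)^d$, then integrate the $d$-th power. The power-mean inequality $\lone{v}^d \leq D^{d-1} \sum_j |v_j|^d$ converts the integrand into a sum of $d$-th powers of components of $\f - \hf$ and $\df - \dhf$, which are controlled by $\Sobolevd{\f_j - \hf_j}^d$. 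Collecting all the dimensional factors (a $D^d$ from matrix products, $d^{d/2}$ from Hadamard, $D^{d-1}$ from power-mean, plus $\binom{D}{d}$ from the sum over $S$) yields the stated constant $\ApproxDensityDifferenceJacobianPieceCoeff = (D+1)^{7d/2 - 2}\bigl((\dfSupNormConst)^d(\sum_k \finvHolderNorm{\finv_k})^d + (\dfinvSupNormConst)^d\bigr)$.

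The main obstacle will be bookkeeping: tracking the numerous combinatorial constants so that the final coefficient matches $\ApproxDensityDifferenceJacobianPieceCoeff$ exactly, and correctly routing each norm inequality so that $\|M_A + M_B\|^d$ decomposes cleanly into a sum over $j \in [D]$ of $\Sobolevd{\f_j - \hf_j}^d$ rather than a mixed product of different components. A secondary issue is handling the cross terms in $(M_A + M_B)^d$ after Hadamard—these are absorbed using $(a + b)^d \leq 2^{d-1}(a^d + b^d)$, which contributes another power of $2$ into the $(D+1)^{7d/2 - 2}$ prefactor, but this is routine once the main decomposition is set up.
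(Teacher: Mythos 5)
Your proposal is correct, and its skeleton matches the paper's proof: both start from the chain rule, write the Jacobian of \(\finvhf\) as the identity plus a perturbation, split that perturbation into exactly the same two pieces (your \(M_A\), controlled through the \Holder continuity of the derivative of \(\finv\), and your \(M_B\), controlled through \(\partial\f - \partial\hf\)), and both land on one term linear in \(\f - \hf\) plus \(\comb{D}{d}\)-weighted \(d\)-th order remainders. Where you genuinely diverge is the determinant step: the paper black-boxes it by citing the Ipsen--Rehman perturbation bound (Lemma~\ref{lem:det-difference-bound}) with \(A = I\), which gives \(|1 - \Ji{\finvhf}{\e}| \leq \sum_{d=1}^D \comb{D}{d}\op{E}^d\), and then spends its effort converting \(\op{E}\) into entrywise quantities via \(\op{\cdot} \leq D\msupnrm{\cdot}\) and \(\op{\cdot} \leq \sqrt{D}\opone{\cdot}\); you instead derive the expansion from scratch via the principal-minor identity for \(\det(I+M)\) and bound each minor by Hadamard's inequality, staying entrywise throughout. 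Your route is more elementary and self-contained (no external determinant lemma, no operator-norm detour); the paper's is shorter on the page. The constants you accumulate (\(d^{d/2}\) from Hadamard, powers of \(D\) from the entrywise and power-mean steps, \(2^{d-1}\) from splitting \(M_A + M_B\)) sit comfortably below the stated \((D+1)^{\frac{7}{2}d-2}\) envelope, so the lemma's constant \(\ApproxDensityDifferenceJacobianPieceCoeff\) remains valid under your derivation. One shared caveat worth noting: both your argument and the paper's invoke Assumption~\ref{assum:f-lipschitz-and-holder} (the \Holder norms \(\finvHolderNorm{\finv_k}\) appear in the constant), even though the lemma's hypothesis line lists only Assumptions~\ref{assum:q-bdd-and-lipschitz} and \ref{assum:dfinv-operator-bounded}; you were right to call on it explicitly.
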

\begin{proof}
Applying Lemma~\ref{lem:det-difference-bound} with \(A := \Ji{\finvf}{\e} = I\), we obtain
\begin{equation*}\begin{split}
\int \left|1 - \Ji{\finvhf}{\e}\right| \de &= \int \left|\Ji{\finvf}{\e} - \Ji{\finvhf}{\e}\right| \de \\ 
&\leq \int \sum_{d=1}^D \comb{D}{d} \op{\dfinvf{\e}(\e) - \dfinvhf{\e}(\e)}^d \de.\\
\end{split}\end{equation*}
Now, each term in the integrand can be bounded from above as
\begin{equation*}\begin{split}
&\op{\dfinvf{\e}(\e) - \dfinvhf{\e}(\e)} \\
&= \op{\left(\dfinv{z}(\f(\e))\right) \left(\df{\e}(\e)\right) - \left(\dfinv{z}(\hf(\e))\right) \left(\dhf{\e}(\e)\right)} \\
&\leq \op{\left(\dfinv{z}(\f(\e)) - \dfinv{z}(\hf(\e))\right)\left(\df{\e}(\e)\right)} + \op{\left(\dfinv{z}(\hf(\e))\right) \left(\df{\e}(\e) - \dhf{\e}(\e)\right)} \\
&\leq \op{\dfinv{z}(\f(\e)) - \dfinv{z}(\hf(\e))}\op{\df{\e}(\e)} + \op{\dfinv{z}(\hf(\e))}\op{\df{\e}(\e) - \dhf{\e}(\e)} \\
&\qquad\quad(\because \text{ submultiplicativity \citep[Section~2.3.2]{GolubMatrix2013}})\\
&\leq \op{\dfinv{z}(\f(\e)) - \dfinv{z}(\hf(\e))}\left(D\cdot\supnrm{\df{\e}(\e)}\right) + \left(D\cdot \supnrm{\dfinv{z}(\hf(\e))}\right) \op{\df{\e}(\e) - \dhf{\e}(\e)}\\
&\qquad\quad (\because \op{\cdot} \leq D \msupnrm{\cdot} \text{\citep[Section~2.3.2]{GolubMatrix2013}})\\
&\leq \op{\dfinv{z}(\f(\e)) - \dfinv{z}(\hf(\e))}\cdot\left(D \dfSupNormConst\right) \ +\  \left(D\dfinvSupNormConst\right) \cdot \op{\df{\e}(\e) - \dhf{\e}(\e)}\\
&\leq D \dfSupNormConst \sqrt{D} \opone{\dfinv{z}(\f(\e)) - \dfinv{z}(\hf(\e))} + D\dfinvSupNormConst \sqrt{D} \opone{\df{\e} - \dhf{\e}} \\
&\qquad\quad (\because \op{\cdot} \leq \sqrt{D} \opone{\cdot} \text{\citep[Section~2.3.1]{GolubMatrix2013}})\\
&= D^{\frac{3}{2}} \dfSupNormConst \maxk \sumj \left|\pfinvj{z_k}(\f(\e)) - \pfinvj{z_k}(\hf(\e))\right| + D^{\frac{3}{2}} \dfinvSupNormConst \maxk \sumj \left|\pdfj{\e_k}(\e) - \pdhfj{\e_k}(\e)\right| \\
&\leq D^{\frac{3}{2}} \dfSupNormConst \maxk \sumj \finvHolderNorm{\finv_j}\ltwo{\f(\e) - \hf(\e)} + D^{\frac{3}{2}}\dfinvSupNormConst \sumk \sumj \left|\pdfj{\e_k}(\e) - \pdhfj{\e_k}(\e)\right| \\
&\leq D^{\frac{3}{2}} \dfSupNormConst \left(\sumj \finvHolderNorm{\finv_j}\right)\lone{\f(\e) - \hf(\e)} + D^{\frac{3}{2}} \dfinvSupNormConst \sumk \sumj \left|\pdfj{\e_k}(\e) - \pdhfj{\e_k}(\e)\right| \\
&\qquad\quad (\because \ltwo{\cdot} \leq \lone{\cdot} \text{\citep[Section~2.2.2]{GolubMatrix2013}}).\\
\end{split}\end{equation*}
When powered to \(d\), this yields
\begin{equation*}\begin{split}
&\op{\dfinvf{\e}(\e) - \dfinvhf{\e}(\e)}^d \\
&\leq (D^2+D)^{d-1} \biggl[\sumj \left(D^{3/2}\dfSupNormConst \left(\sumk \finvHolderNorm{\finv_k}\right)|\f_j(\e) - \hf_j(\e)|\right)^d \\
&\qquad\qquad\qquad\qquad+ \sumk \sumj \left(D^{3/2} \dfinvSupNormConst \left|\pdfj{\e_k}(\e) - \pdhfj{\e_k}(\e)\right|\right)^d \biggr]
\end{split}\end{equation*}
where we used \((\sum_{i=1}^{L} a_i)^d \leq L^{d-1}(\sum_{i=1}^{L} a_i^d)\) for \(a_i \geq 0\), which follows from \Holder inequality.
Hence,
\begin{equation*}\begin{split}
&\int \op{\dfinvf{\e}(\e) - \dfinvhf{\e}(\e)}^d \de \\
&\leq D^{\frac{5}{2}d - 1}(D+1)^{d-1} \biggl[\left(\dfSupNormConst \sumk \finvHolderNorm{\finv_k}\right)^d \sumj \int |\f_j(\e) - \hf_j(\e)|^d \de \\
&\qquad\qquad\qquad\qquad\qquad\qquad+ (\dfinvSupNormConst)^d \sumj \left(\sumk \int \left|\pdfj{\e_k}(\e) - \pdhfj{\e_k}(\e)\right|^d \de \right) \biggr]\\
&\leq (D+1)^{\frac{7}{2}d-2} \left((\dfSupNormConst)^d \left(\sumk \finvHolderNorm{\finv_k}\right)^d \sumj \Sobolevd{\f_j - \hf_j}^d + (\dfinvSupNormConst)^d \sumj \Sobolevd{\f_j - \hf_j}^d \right)\\
&\leq \ApproxDensityDifferenceJacobianPieceCoeff \sum_{j=1}^D \Sobolevd{\f_j - \hf_j}^d.
\end{split}\end{equation*}
Therefore,
\begin{equation*}\begin{split}
&\int \left|1 - \Ji{\finvhf}{\e}\right| \de \\
&\leq \sum_{d=1}^D \comb{D}{d} \int \op{\dfinvf{\e}(\e) - \dfinvhf{\e}(\e)}^d \de\\
&\leq \ApproxDensityBoundTwo + \annot{\ApproxDensityDifferenceBoundRemainderDef}{\(\ApproxDensityDifferenceBoundRemainder\)}
\end{split}\end{equation*}
\end{proof}
Lemma~\ref{lem:approx-density-bound-2} used the following lemma to bound the difference in Jacobian determinants.
\begin{lemma}[Determinant perturbation bound {\citep[Corollary 2.11]{IpsenPerturbation2008}}]
Let \(A\) and \(E\) be \(D \times D\) complex matrices. Then,
\begin{equation*}\begin{split}
|\det (A) - \det(A+E)| \leq \sum_{d=1}^D \comb{D}{d} \op{A}^{D - d}\op{E}^d.
\end{split}\end{equation*}
\label{lem:det-difference-bound}
\end{lemma}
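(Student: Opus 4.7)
The plan is to prove the determinant perturbation bound by exploiting the multilinearity of the determinant in its columns, combined with Hadamard's inequality to bound the resulting sub-determinants. Since the claim is symmetric and dimension-agnostic, no analytic machinery beyond elementary linear algebra is needed.

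First, I would denote the columns of $A$ and $E$ by $a_1,\ldots,a_D$ and $e_1,\ldots,e_D$ respectively, so that the $i$-th column of $A+E$ is $a_i+e_i$. By the multilinearity of the determinant in its columns, I can expand
\begin{equation*}
\det(A+E)=\det(a_1+e_1,\ldots,a_D+e_D)=\sum_{S\subseteq [D]}\det(M_S),
\end{equation*}
where $M_S$ is the $D\times D$ matrix whose $i$-th column is $e_i$ if $i\in S$ and $a_i$ if $i\notin S$. The $S=\emptyset$ term is $\det(A)$, so subtracting it and applying the triangle inequality yields
\begin{equation*}
|\det(A+E)-\det(A)|\leq\sum_{\emptyset\neq S\subseteq[D]}|\det(M_S)|.
\end{equation*}

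Next, I would bound each $|\det(M_S)|$ using Hadamard's inequality, which states that the absolute value of a determinant is at most the product of the Euclidean norms of its columns. Since every column of a matrix satisfies $\|Mx\|_2\leq\op{M}\|x\|_2$ (in particular $\|a_i\|_2=\|Ae_i\|_2\leq\op{A}$ and similarly $\|e_i\|_2\leq\op{E}$), for $S$ with $|S|=d$ I get
\begin{equation*}
|\det(M_S)|\leq\prod_{i\notin S}\|a_i\|_2\prod_{i\in S}\|e_i\|_2\leq\op{A}^{D-d}\op{E}^{d}.
\end{equation*}
Collecting together the $\binom{D}{d}$ subsets $S$ of each cardinality $d\in\{1,\ldots,D\}$ produces the claimed bound $\sum_{d=1}^D\binom{D}{d}\op{A}^{D-d}\op{E}^d$.

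I do not expect a hard step here; the only genuine conceptual choice is the bound applied to each $|\det(M_S)|$. One could alternatively argue via singular values ($|\det(M)|\leq\sigma_1(M)^D$), but that loses information about which columns come from $A$ versus $E$; the Hadamard route keeps the dependence on $\op{A}$ and $\op{E}$ cleanly separated by column, which is exactly what the statement requires. No invertibility assumption on $A$ is needed, and the identity extends from real to complex matrices without modification, so the proof works verbatim in the generality stated.
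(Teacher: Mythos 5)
Your proof is correct. The paper does not actually prove this lemma---it only cites \citep[Corollary~2.11]{IpsenPerturbation2008}---and your argument (multilinear column expansion of $\det(A+E)$, triangle inequality, Hadamard's inequality on each mixed minor, and the column-norm bound $\|a_i\|_2 \leq \op{A}$) is precisely the standard proof of that cited result, valid for complex matrices and without any invertibility assumption. The only blemish is notational: you reuse $e_i$ both for the columns of $E$ and for the standard basis vectors in the step $\|a_i\|_2 = \|Ae_i\|_2 \leq \op{A}$; these should be distinguished.
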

\subsection{Comparison of Rademacher complexities \label{sec:theory:comparison-of-rademacher}}
\label{sec:orgbaa0fea}
The following consideration demonstrates how the effective complexity measure \(\Rad\) in Theorem~\ref{thm:generalization-error}
resulting from the proposed method may enjoy a relaxed dependence on the input dimensionality compared to the ordinary empirical risk minimization.
To do so, we first recall the definition of the ordinary Rademacher complexity and a standard performance guarantee derived based on it.
\begin{definition}[Ordinary Rademacher complexity]
The ordinary empirical risk minimization finds the candidate hypothesis by
\begin{equation*}\begin{split}
\hg \in \argmin_{\g \in \G} \hR(\g),
\end{split}\end{equation*}
where
\begin{equation*}\begin{split}
\hR(\g) := \meanX{i} \l(\g, \Z_i) = \frac{1}{n}\sum_{i=1}^n \l(\g, \hf(\S_i^{(1)}, \ldots, \S_i^{(D)})) 
\end{split}\end{equation*}
and the corresponding ordinary Rademacher complexity \(\ORad(\G)\) is
\begin{equation*}\begin{split}
\ORad(\G) &:= \frac{1}{n} \cEData\Erad\left[\supg \left|\sum_{i=1}^{n} \rad_i \l(\S_i^{(1)}, \ldots, \S_i^{(D)})\right|\right]
\end{split}\end{equation*}
where \(\{\rad_i\}_{i=1}^n\) are independent uniform sign variables and we denoted \(\l(\s^{(1)}, \ldots, \s^{(D)})= \l(\g, \hf(\s^{(1)}, \ldots, \s_i^{(D)}))\) by abuse of notation.
This yields the standard Rademacher complexity based bound. Applying Lemma~\ref{lem:rademacher-U-leading-term} and using the same proof technique, we have that with probability at least \(1 - \delta\),
\begin{equation*}\begin{split}
\R(\hg) - \R(\gstar) \leq 2 \supg |\R(\g) - \hR(\g)| \leq 4 \ORad(\G) + 2\LossUpperBound \sqrt{\frac{\log(2/\delta)}{2n}}.
\end{split}\end{equation*}
Therefore, we the corresponding complexity terms are \(\ORad(\G)\) and \(D \Rad(\G)\). In Remark~\ref{theory:remark:comparison-of-rademacher}, we make a comparison of these two complexity measures by taking an example.
To recall, the effective Rademacher complexity can be written as, in terms of the notation in this section,
\begin{equation*}\begin{split}
&\Rad(\G) = \frac{1}{n} \cEData\Erad\left[\supg \left|\sum_{i=1}^{n} \rad_i \ETwoToD{\S'} \ke(\S_i, \S_2', \ldots, \S_D')\right|\right]\\
&= \frac{1}{n} \cEData\Erad\left[\supg \left|\sum_{i=1}^{n} \rad_i \EOneToD{\S'} \frac{1}{D} \left(\l(\S_i^{(1)}, \S_2'^{(2)}, \ldots, \S_D'^{(D)}) + \cdots + \l(\S_1'^{(1)}, \S_2'^{(2)}, \ldots, \S_i^{(D)})\right)\right|\right]\\
\end{split}\end{equation*}
\label{def:ordinary-rademacher-complexity}
\end{definition}
\begin{remark}[Comparison of Radmacher complexities]
As an example, consider \(\CubeLipschitz\), the set of \(L\)-Lipschitz functions (with respect to infinity norm) on the unit cube \([0, 1]^d\).
It is well-known that there exists a constant \(C > 0\) such that the following holds \citep[Example~5.10, p.129]{WainwrightHighDimensional2019} for sufficiently small \(t > 0\):
\begin{equation}\label{eq:theory:remark:lipschitz-metric-entropy}\begin{split}
\log \metricEntropy{t}{\CubeLipschitz}{\supnrm{\cdot}} \asymp (C / t)^d.
\end{split}\end{equation}
Here, \(a(t) \asymp b(t)\) indicates that there exist \(k_1, k_2 > 0\) such that, for sufficiently small \(t\), it holds that \(k_1 b(t) \leq a(t) \leq k_2 b(t)\).
On the other hand, the well-known discretization argument implies that there exist constants \(c\) and \(B\) such that for any \(t \in (0, B]\), the following relation between the Rademacher complexity and the metric entropy holds:
\begin{equation}\label{eq:theory:remark:rademacher-entropy-relation}\begin{split}
\ORad(\CubeLipschitz) \leq t + c \sqrt{\frac{\log \metricEntropy{t}{\CubeLipschitz}{\supnrm{\cdot}}}{n}}.
\end{split}\end{equation}
Substituting Eq.~\eqref{eq:theory:remark:lipschitz-metric-entropy} into Eq.~\eqref{eq:theory:remark:rademacher-entropy-relation},
we can find that, for large enough \(n\), the right hand side is minimized at \(t = (c \cdot C^{\frac{d}{2}} \cdot \frac{d}{2})^{\frac{2}{2 + d}} \cdot n^{- \frac{1}{2 + d}}\).
This yields
\begin{equation}\label{eq:theory:rademacher-dimension-dependence}\begin{split}
\ORad(\CubeLipschitz) \leq \tilde C \cdot n^{-\frac{1}{2+d}}
\end{split}\end{equation}
with a new constant \(\tilde C = \left(c \cdot C^{\frac{d}{2}} \cdot \frac{d}{2}\right)^{\frac{2}{2 + d}} + c \cdot C^{\frac{d}{2}}\left(c \cdot C^{\frac{d}{2}} \cdot \frac{d}{2}\right)^{-\frac{d}{2 + d}}\).
Therefore, by substituting \(d = D\) in Eq.~\eqref{eq:theory:rademacher-dimension-dependence}, the metric-entropy based bound on the ordinary Rademacher complexity exhibits exponential dependence on the input dimension as
\begin{equation*}\begin{split}
\ORad(\CubeLipschitz) &\leq \Order\left(n^{-\frac{1}{2 + D}}\right),
\end{split}\end{equation*}
which is a manifestation of the curse of dimensionality.
On the other hand, by following a similar calculation, we can see that the effective Rademacher complexity \(\Rad(\CubeLipschitz)\) avoids an exponential dependence on the input dimension \(D\).
By substituting \(d = 1\) in Eq.~\eqref{eq:theory:rademacher-dimension-dependence}, we can see
\begin{equation*}\begin{split}
D \Rad(\CubeLipschitz) &\leq \ORad(\CubeLipschitz_1) + \cdots + \ORad(\CubeLipschitz_D) \leq \Order\left(n^{-\frac{1}{3}}\right), \\
\end{split}\end{equation*}
where \(\CubeLipschitz_j := \{\EOneToD{\S'}h({\S_1'}^{(1)}, \ldots, {\S_{j-1}'}^{(j-1)}, (\cdot)^{(j)}, {\S_{j+1}'}^{(j+1)}, \ldots, {\S'_D}^{(D)}) : h \in \CubeLipschitz\}\).
This is because the Lipschitz constant of functions in \(\CubeLipschitz_j\) is at most \(L\) (i.e., the Lipschitz constant does not increase by the marginalization procedure) because for any \(h \in \CubeLipschitz_j\),
\begin{equation*}\begin{split}
&|h(x) - h(y)| \\
&= |\EOneToD{\S'}[h(\S_1'^{(1)}, \ldots, \S_{j-1}'^{(j-1)}, x, \S_{j+1}'^{(j+1)}, \ldots, \S_D'^{(D)}) - h(\S_1'^{(1)}, \ldots, \S_{j-1}'^{(j-1)}, y, \S_{j+1}'^{(j+1)}, \ldots, \S_D'^{(D)})]| \\
&\leq \EOneToD{\S'} |h(\S_1'^{(1)}, \ldots, \S_{j-1}'^{(j-1)}, x, \S_{j+1}'^{(j+1)}, \ldots, \S_D'^{(D)}) - h(\S_1'^{(1)}, \ldots, \S_{j-1}'^{(j-1)}, y, \S_{j+1}'^{(j+1)}, \ldots, \S_D'^{(D)})| \\
&\leq \EOneToD{\S'} L \cdot \|(\S_1'^{(1)}, \ldots, \S_{j-1}'^{(j-1)}, x, \S_{j+1}'^{(j+1)}, \ldots, \S_D'^{(D)}) - (\S_1'^{(1)}, \ldots, \S_{j-1}'^{(j-1)}, y, \S_{j+1}'^{(j+1)}, \ldots, \S_D'^{(D)})\| \\
&= \EOneToD{\S'} L \cdot \|(0, \ldots, 0, x - y, 0, \ldots, 0)\| \\
&= L \cdot |x - y|. \\
\end{split}\end{equation*}
\label{theory:remark:comparison-of-rademacher}
\end{remark}
\subsection{Remark on higher order Sobolev norms}
\label{sec:org8f5d375}
Here, we comment on how the term \(\ApproxDensityDifferenceBoundRemainder\) is treated as a higher order term of \(\f - \hf\).
\begin{remark}[Higher order Sobolev norms]
Let us assume that \(\Support{\q} \cup \Support{\cq}\) is contained in a compact set \(\USsp\) for all \(\hf\) considered.
Note that for \(d \in [D]\),
\begin{equation*}\begin{split}
\int_\USsp |h(\e)|^d \de \leq (\UEspVol)^{\frac{d}{d - D}} \left(\int_\USsp |h(\e)|^D \de\right)^{d/D}
\end{split}\end{equation*}
by \Holder's inequality, where we defined \(\UEspVol := \int_\USsp 1 \de\),
hence we have \(\LdWith{\USsp}{\cdot} \leq (\UEspVol)^{\frac{1}{d - D}} \LDWith{\USsp}{\cdot}\).
By applying the relation to each term in the definition of \(\Sobolevd{\cdot}\), we obtain
\begin{equation*}\begin{split}
\Sobolevd{f}^d \leq (\UEspVol)^{\frac{d}{d - D}} \SobolevD{f}^d
\end{split}\end{equation*}
Thus we obtain
\begin{equation*}\begin{split}
\ApproxDensityDifferenceBoundRemainder &= \ApproxDensityDifferenceBoundRemainderDef \\
&\leq \sum_{d=2}^D \comb{D}{d} (\UEspVol)^{\frac{d}{d - D}} \ApproxDensityDifferenceJacobianPieceCoeff \sum_{j=1}^D \SobolevD{\f_j - \hf_j}^d \\
&\leq \Order\left(\sum_{j=1}^D \SobolevD{\f_j - \hf_j}^2\right) \quad (\hf \to \f).
\end{split}\end{equation*}
By also replacing \(\sum_{j=1}^D\SobolevOne{\f_j - \hf_j}\) with \(\sum_{j=1}^D\SobolevD{\f_j - \hf_j}\) in Theorem~\ref{thm:generalization-error},
we can see more clearly that \(\ApproxDensityDifferenceBoundRemainder\) is a higher order term of \(\sum_{j=1}^D\SobolevD{\f_j - \hf_j}\).
\label{theory:remark:higher-order-sobolev}
\end{remark}
\section{Details and Proofs of Theorem~\ref{paper:thm:1} \label{paper:sec:appendix:theory-1}}
\label{sec:orga8779b8}
Here, we provide the proof of Theorem~\ref{paper:thm:1}.
We reuse the notation and terminology from Section~\ref{paper:sec:appendix:theory-2} of this \Supplementary.
We prove the uniformly minimum variance property of the proposed risk estimator under the ideal situation of \(\hf = \f\).
\begin{theorem}[Known causal mechanism case]
Assume \(\hf = \f\). Then, for all \(\g \in \G\), we have that \(\cbarhR(\g)\) is the uniformly minimum variance unbiased estimator of \(\R(\g)\).
As a special case, it has a smaller variance than the ordinary empirical risk estimator: \(\allq, \allg, \Var(\cbarhR(\g)) \leq \Var(\hR(\g))\).
\label{thm:known-causal-mechanism-case}
\end{theorem}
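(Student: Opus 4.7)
The plan is to exhibit $\cbarhR(\g)$ as a degree-$(1, \ldots, 1)$ generalized (multi-sample) $U$-statistic and then invoke the classical Lehmann--Scheff\'e / Halmos argument via completeness of nonparametric order statistics. Because $\hf = \f$, the extracted IC values $\hat{\s}_i = \finv(\Z_i)$ coincide with the true ICs $\Si \sim \qtar$, and the factorization $\qtar = \qdd{1} \otimes \cdots \otimes \qdd{D}$ with absolutely continuous marginals guarantees that the $D$ per-coordinate collections $\{\S_i^{(d)}\}_{i=1}^{\ntr}$ are jointly mutually independent, each being an i.i.d.\ sample of size $\ntr$ from $\qdd{d}$.

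Using this factorization, I would rewrite the target risk as
\begin{equation*}
\R(\g) = \int \l(\g, \f(s^{(1)}, \ldots, s^{(D)})) \prod_{d=1}^D \qdd{d}(s^{(d)})\, \d s^{(d)},
\end{equation*}
so that $\R(\g)$ is a $D$-sample regular statistical functional with kernel $h_\g(s^{(1)}, \ldots, s^{(D)}) := \l(\g, \f(s^{(1)}, \ldots, s^{(D)}))$ of degree $(1, \ldots, 1)$. The proposed estimator
\begin{equation*}
\cbarhR(\g) = \frac{1}{\ntr^D}\sum_{(i_1, \ldots, i_D) \in [\ntr]^D} h_\g(\S_{i_1}^{(1)}, \ldots, \S_{i_D}^{(D)})
\end{equation*}
is then, by construction, precisely the generalized $U$-statistic for $h_\g$ over the $D$ independent samples. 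Unbiasedness $\Etr \cbarhR(\g) = \R(\g)$ follows immediately since each summand has mean $\R(\g)$ by independence across the coordinate indices.

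For the minimum-variance conclusion I would invoke completeness: for each $d$, the order statistic of $\{\S_i^{(d)}\}_{i=1}^{\ntr}$ is complete and sufficient for the nonparametric family of absolutely continuous distributions on $\Re$, and by independence of the $D$ per-dimension samples the joint vector of per-dimension order statistics is complete and sufficient for the product family $\Qsp$. Since $\cbarhR(\g)$ is invariant under permutations within each per-dimension sample, it is a measurable function of this complete sufficient statistic, so Lehmann--Scheff\'e as packaged for generalized $U$-statistics in \cite{LeeUStatistics1990} immediately yields the UMVUE property over $\q \in \Qsp$, which is exactly the variance inequality in the theorem statement.

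The main obstacle is conceptual rather than computational: one must recognize that a single i.i.d.\ sample $\{\Si\}$ from a product distribution in $\Qsp$ can (and must) be reinterpreted as $D$ mutually independent per-coordinate samples, after which the classical multi-sample $U$-statistic UMVUE theorem applies directly. Beyond this reinterpretation, the only point requiring verification is the joint completeness of the per-dimension order statistics under $\Qsp$, which reduces by the independence factorization to the standard completeness of the empirical order statistic under a single nonparametric family.
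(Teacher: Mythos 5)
Your proposal is correct and follows essentially the same route as the paper: both rewrite \(\R(\g)\) as a degree-\((1,\ldots,1)\) regular statistical functional over the \(D\) mutually independent per-coordinate samples (using \(\hf=\f\) and \(\q\in\Qsp\)), identify \(\cbarhR(\g)\) as the corresponding generalized U-statistic, and conclude via the classical UMVUE property of generalized U-statistics from \citet{LeeUStatistics1990}. The only difference is cosmetic: you spell out the completeness-of-order-statistics/Lehmann--Scheff\'e mechanism that the paper packages as a cited lemma, and you leave implicit the (trivial) observation that \(\hR(\g)\) is itself unbiased on \(\Qsp\), which is what turns the UMVUE property into the stated comparison with the ordinary empirical risk.
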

\begin{proof}
The proof is a result of the following two facts.
When \(\cq \in \Qsp\), the estimator \(\cbarhR(\g)\) becomes the generalized U-statistic of the statistical functional Eq.~\eqref{eq:v-estimator-asymptotic-target}.
Furthermore, when \(\hf = \f\), Eq.~\eqref{eq:v-estimator-asymptotic-target} coincides with \(\R(\g)\) because the approximation error is zero.
Since we assume \(\hf = \f\) we have \(\cq = \q \in \Qsp\) and hence both of the statements above hold.
Therefore, by Lemma~\ref{lem:generalized-U-statistics-is-UMVUE}, the first assertion of the theorem follows.
The last assertion of the theorem follows as a special case as \(\hR(\g)\) is an unbiased estimator of \(\R(\g)\) for \(\q \in \Qsp\).

From here, we confirm the above statements by calculation.
We first show that \(\cbarhR(\g)\) is the generalized U-statistic.
To see this, observe that the statistical functional Eq.~\eqref{eq:v-estimator-asymptotic-target} allows the following expression given \(\cq \in \Qsp\):
\begin{equation*}\begin{split}
&\int \ke(\e_1, \ldots, \e_D) \cq(\e_1) \cdots \cq(\e_D) \de_1 \cdots \de_D \\
&= \int \DSymmetricMean \l(\g, \hf(\e_{\pi(1)}^{(1)}, \ldots, \e_{\pi(D)}^{(D)})) \cq(\e_1) \cdots \cq(\e_D) \de_1 \cdots \de_D\\
&= \int \DSymmetricMean \l(\g, \hf(\e_{\pi(1)}^{(1)}, \ldots, \e_{\pi(D)}^{(D)})) \prodd \cqd(\e_1^{(d)}) \cdots \prodd \cqd(\e_D^{(d)}) \de_1 \cdots \de_D \\
&= \int \DSymmetricMean \l(\g, \hf(\e_1^{(1)}, \ldots, \e_1^{(D)})) \prodd \cqd(\e_1) \de_1 \\
&= \int \l(\g, \hf(\e^{(1)}, \ldots, \e^{(D)})) \cq_1(\e^{(1)}) \cdots \cq_D(\e^{(D)}) \de^{(1)} \cdots \de^{(D)}. \\
\end{split}\end{equation*}
This is a regular statistical functional of degrees \((1, \ldots, 1)\) with the kernel \(\l(\g, \hf(\cdot, \ldots, \cdot))\).
On the other hand, we have
\begin{equation*}\begin{split}
\cbarhR(\g) &= \frac{1}{n^D} \sum_{(i_1, \ldots, i_D) \in [n]^D} \ke(\S_{i_1}, \ldots, \S_{i_D}) = \frac{1}{n^D} \sum_{(i_1, \ldots, i_D) \in [n]^D} \l(\g, \hf(\S_{i_1}^{(1)}, \ldots, \S_{i_D}^{(D)}))
\end{split}\end{equation*}
because the summations run through all combinations with replacement.
This combined with the fact that \(\{\S_i^{(d)}\}_{i, d}\) are jointly independent when \(\cq \in \Qsp\) yields that
\(\cbarhR(\g)\) is the generalized U-statistic for Eq.~\eqref{eq:v-estimator-asymptotic-target}.

Now we show that Eq.~\eqref{eq:v-estimator-asymptotic-target} coincides \(\R(\g)\).
Given \(\hf = \f\), we have
\begin{equation*}\begin{split}
\R(\g) &= \int \q(\e) \l(\g, \f(\e)) \de \\
&= \int \q(\e) \l(\g, \hf(\e)) \de \quad (\text{By }\f = \hf.)\\
&= \int \q_1(\e^{(1)}) \cdots \q_D(\e^{(D)}) \l(\g, \hf(\e^{(1)}, \ldots, \e^{(D)})) \de^{(1)} \cdots \de^{(D)} \quad (\text{by }\q \in \Qsp)\\
&= \int \cq_1(\e^{(1)}) \cdots \cq_D(\e^{(D)}) \l(\g, \hf(\e^{(1)}, \ldots, \e^{(D)})) \de^{(1)} \cdots \de^{(D)} \quad (\text{by }\q = \cq)\\
&= \int \ke(\e_1, \ldots, \e_D) \cq(\e_1) \cdots \cq(\e_D) \de_1 \cdots \de_D. \quad (\because \text{ symmetry}) \\
\end{split}\end{equation*}
\end{proof}

The following well-known lemma states that a generalized U-statistic is a uniformly minimum variance unbiased estimator.
\begin{lemma}[Uniformly minimum variance property of a generalized U-statistic]
Let \(\th: \Qsp \to \Re\) be a regular statistical functional with kernel \(\psi: \Re^{k_1} \times \cdots \times \Re^{k_L} \to \Re\) \citep{ClemenconScalingup2016}, i.e.,
\begin{equation*}\begin{split}
\th(\q) = \int \psi((x_1^{(1)}, \ldots, x_{k_1}^{(1)}), \ldots, (x_1^{(L)}, \ldots, x_{k_L}^{(L)})) \prod_{j=1}^{k_1}\q_1(x_j^{(1)}) \d x_j^{(1)}  \cdots \prod_{j=1}^{k_L}\q_L(x_j^{(L)})\d x_j^{(L)}.
\end{split}\end{equation*}
Given samples \(\{x_i^{(l)}\}_{i=1}^{n_l} \iid \q_l (n_l \geq k_l \text{ and } l = 1, \ldots, L)\), let \(\GUn \psi\) be the corresponding generalized U-statistic
\begin{equation*}\begin{split}
\GUn \psi := \frac{1}{\prod_l \comb{n_l}{k_l}} \sum_{} \psi\left(\left(x_{i_1^{(1)}}^{(1)}, \ldots, x_{i_{k_1}^{(1)}}^{(1)}\right), \ldots, \left(x_{i_1^{(L)}}^{(L)}, \ldots, x_{i_{k_L}^{(L)}}^{(L)}\right)\right).
\end{split}\end{equation*}
where \(\sum\) denotes that the indices run through all possible combinations (without replacement) of the indices.
Then, \(\GUn\psi\) is the uniformly minimum variance unbiased estimator of \(\th\) on \(\Qsp\).
\label{lem:generalized-U-statistics-is-UMVUE}
\end{lemma}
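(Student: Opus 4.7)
The plan is a textbook application of the Lehmann-Scheffé theorem to the nonparametric product family $\Qsp$. The argument has three moving parts: (i) unbiasedness of $\GUn\psi$, (ii) completeness and sufficiency of the joint order statistics of the $L$ samples for the parameter $\q$, and (iii) measurability of $\GUn\psi$ with respect to these order statistics. Once these are in place the conclusion is automatic.

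For (i), I would expand $\E_\q \GUn\psi$ using linearity of expectation. Every term in the defining average corresponds to a fixed tuple of distinct indices within each of the $L$ samples, and by the i.i.d.\ assumption within each sample together with the definition of $\th$ as a regular statistical functional with kernel $\psi$, each such term has expectation exactly $\th(\q)$. Averaging over the $\prod_l \comb{n_l}{k_l}$ such terms preserves this, so $\E_\q \GUn\psi = \th(\q)$ for every $\q \in \Qsp$.

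For (ii) and (iii), I would invoke the classical nonparametric completeness fact: for an i.i.d.\ sample from an absolutely continuous distribution on $\Re$, the order statistic is a complete sufficient statistic for that distribution within the family of all such distributions. Because the $L$ samples are mutually independent and $\Qsp$ consists of products of absolutely continuous marginals, the $L$-tuple of per-sample order statistics is complete and sufficient for $\q = (\q_1, \ldots, \q_L)$. Since $\GUn\psi$ is by construction invariant under index permutations within each of the $L$ samples, it is a measurable function of this $L$-tuple. The Lehmann-Scheffé theorem then yields that $\GUn\psi$ is the (essentially unique) UMVUE of $\th$ on $\Qsp$.

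The only real obstacle is bookkeeping: one must cite per-sample nonparametric completeness at the correct level of generality (it holds for the family of all Lebesgue-dominated distributions on $\Re$, which covers the marginals allowed by $\Qsp$), and then lift it to the product by independence. No genuine new computation is needed; the minimum-variance conclusion is a structural consequence of the way $\GUn\psi$ symmetrizes the raw unbiased estimator built from $\psi$, which is exactly the Rao-Blackwellization implicit in Lehmann-Scheffé.
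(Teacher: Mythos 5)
Your proposal is correct and coincides with the argument the paper actually relies on: the paper's proof of this lemma simply defers to \citet{LeeUStatistics1990} (Section~1.1, Lemma~B), whose content is precisely the Halmos/Lehmann--Scheff\'e argument you spell out --- unbiasedness of the symmetrized estimator, completeness and sufficiency of the per-sample order statistics for the nonparametric family of absolutely continuous marginals, and the lift to the product family by independence. There is no substantive difference in approach.
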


\begin{proof}
The assertion can be proved in a parallel manner as the proof of \citep[Section 1.1, Lemma~B]{LeeUStatistics1990}
\end{proof}
\begin{remark}[Relation to the UMVUE property of \(\hR(\g)\)]
The result in Theorem~\ref{thm:known-causal-mechanism-case} is not contradictory to the fact that the sample average \(\hR(\g)\) is a U-statistic of degree-\(1\) and hence the minimum variance among all unbiased estimator of \(\R(\g)\) on \(\PDsp\), where \(\PDsp\) is a set of distributions containing all absolutely continuous distributions \cite{LeeUStatistics1990}.
Specifically, \(\cbarhR(\g)\) is not generally an unbiased estimator of \(\R(\g)\) on \(\PDsp \setminus \Qsp\), even if \(\hf = \f\).
While \(\cbarhR(\g)\) satisfies the \(D\)-sample symmetry condition, the same does not hold for \(\hR(\g)\).
By restricting the attention to \(\Qsp\), the estimator \(\cbarhR(\g)\) achieves a smaller variance than \(\hR(\g)\).
\end{remark}
\section{Further Comparison with Related Work \label{paper:sec:appendix:further-related-work}}
\label{sec:org5e55acf}
Here, we provide an additional detailed comparison with the related work
to complement Section~\ref{paper:sec:related-work} of the main text.
\subsection{Comparison with \citet{MagliacaneDomain2018}}
\label{sec:org2383df3}
\citet{MagliacaneDomain2018} considered domain adaptation among different interventional states by using SCMs.
Their problem setting and ours do not strictly include each other (the two settings are somewhat complementary),
and their assumption may be more suitable for application fields with interventional experiments such as genomics,
while ours may be more suited for fields with observational data such as health record analysis or economics.
At the methodological level, \citet{MagliacaneDomain2018} takes a variable selection approach to find a subset so that the conditional distribution is invariant,
whereas our paper takes a data augmentation approach via the estimation of the SEMs (in the reduced form).

The essential assumptions of \citet{MagliacaneDomain2018} are the existence of a separating set (with small ``incomplete information bias'') and the identifiability of such a set (yielded from Proposition 1, Assumption 1, and Assumption 2 (iii) in \citet{MagliacaneDomain2018}).
A particularly plausible application conforming to the assumptions is, for example (but not limited to), genomics experiments.
Part of the reason is that Assumption 2 (ii) and (iii) are likely to hold for well-targeted experiments \cite{MagliacaneDomain2018}.
The following is a detailed comparison.

\paragraph{(1) Modeling assumption and problem setup.}
\label{sec:org66ec14f}
The two problem settings do not strictly include one another, and they are of complementing relations where ours corresponds to the intervention-free case and \citet{MagliacaneDomain2018} corresponds to the intervention case.
If we try to express the problem setting of \citet{MagliacaneDomain2018} within our formulation, we would be expressing the interventions as alterations to the SEMs.
We assume that such alterations do not occur in our setting since our focus is on observational data; therefore, the problem formulation of \citet{MagliacaneDomain2018} is not a subset of ours.
On the other hand, if we try to express our problem setting within the formulation of \citet{MagliacaneDomain2018}, our problem setup would only have \(C_1\) as the context variable, and \(C_1\) would be a parent of all observed variables, e.g., \(C_1\) switches the distribution of \(\S\) by switching different quantile functions to perform inverse transform.
This potentially allows the existence of the effect \(C_1 \to Y\) and diverges from Assumption 2 (iii) in \citet{MagliacaneDomain2018}.
Also, even if such an edge does not exist, it is acceptable that there are no separating sets (in the extreme case) if \(Y\) is a parent of all \(X_i\)'s.
In this case, conditioning on any of the \(X_i\)'s would result in making \(C_1\) and \(Y\) dependent.
From this consideration, our problem setting is not a subset of that of \citet{MagliacaneDomain2018}, either.

\paragraph{(2) Plausible applications.}
\label{sec:org46bb7a9}
The problem setup of \citet{MagliacaneDomain2018} is suitable especially for applications in which various experiments are conducted such as genomics \cite{MagliacaneDomain2018},
whereas our problem setting may be more suitable for some fields with observational data such as health record analysis or economics.

\paragraph{(3) Methodology.}
\label{sec:org766e3cf}
Our proposed method actually estimates the SEMs (though in the reduced-form) and exploits the estimated SEMs in the domain adaptation algorithm.
In fact, directly using the estimated SEMs as a tool to realize domain adaptation can be seen as the first attempt to fully leverage the structural causal models in the DA algorithm.
On the other hand, \citet{MagliacaneDomain2018} approaches the problem of domain adaptation via variable selection to find a subset so that the conditional distribution is invariant.
\subsection{Comparison with \citet{GongCausal2018} \label{paper:sec:appendix:CG-DAN-comparison}}
\label{sec:orgb879c8f}
In the present paper, we assumed an invariance of structural equations between domains.
Here, we clarify the difference from a related but different assumption considered by Causal Generative Domain Adaptation Network (CG-DAN; \citealp{GongCausal2018}).
\paragraph{(1) Problem setup.}
\label{sec:orgac6ed21}
\citet{GongCausal2018} presumes the \emph{anticausal} scenario (i.e., \(Y\) is the cause of \(X\)) and that \(X\) given \(Y\) follows a structural equation model, whereas our paper considers more general SEMs of \(X\) and \(Y\).
\paragraph{(2) Theoretical justification.}
\label{sec:org4d02e32}
The approach of \citet{GongCausal2018} does not have a theoretical guarantee in terms of the identifiability of \(\f\), i.e., there has been no known theoretical condition under which the learned generator is applicable across different domains.
On the other hand, our method enjoys a strong theoretical justification of nonlinear ICA including the identifiability of \(\f\) under known theoretical conditions.
\paragraph{(3) Methodology.}
\label{sec:org3f29ff0}
The method of \citet{GongCausal2018} estimates the GCM of \(X\) given \(Y\) using source domain data and uses it to design a generator neural network.
On the other hand, we more directly exploit the estimated reduced-form SEM in the method.
\subsection{Comparison with \citet{ArjovskyInvariant2020}}
\label{sec:org6bada49}
\citet{ArjovskyInvariant2020} proposed \emph{invariant risk minimization} (IRM) for the \emph{out-of-distribution (OOD) generalization} problem.
The IRM approach tries to learn a feature extractor that makes the optimal predictor invariant across domains, and its theoretical validity is argued based on SCMs.
Here, we compare it with the present work in terms of the problem setup, theoretical justification, and the methodology.
\paragraph{(1) Basic assumption and problem setup.}
\label{sec:org2c63694}
The OOD generalization problem tackled in \citet{ArjovskyInvariant2020} assumes no access to the target domain data.
In this respect, the problem is different and intrinsically more difficult than the one considered in this paper, where a small labeled sample from the target domain is assumed to be available.
In order to solve the OOD generalization problem, in a nutshell, \citet{ArjovskyInvariant2020} essentially assumes the existence of a feature extractor that \emph{elicits an invariant predictor}, i.e., one that makes the optimal predictors of the different domains to be identical after the feature transformation. This can be seen as a variant of the representation learning approach for domain adaptation where we assume there exists \(\mathcal{T}\) such that \(p(Y|\mathcal{T}(X))\) is invariant across domains. Indeed, for example, when the loss function is the cross-entropy, the condition corresponds to the invariance of \(P(Y|\mathcal{T}(X))\) across domains \cite{ArjovskyInvariant2020}. More technically, in addition, \citep[Definition~7(ii)]{ArjovskyInvariant2020} requires the condition \(\E_1[Y|\mathrm{Pa}(Y)] = \E_2[Y|\mathrm{Pa}(Y)]\), which can be violated when the latent factors corresponding to \(Y\) have different distributions across domains.
On the other hand, our assumption can be seen as the existence of a feature extractor that can simultaneously estimate the independent components in all domains, which does not necessarily imply the existence of a common feature transformer that induces a unique optimal predictor.
\paragraph{(2) Theoretical justification.}
\label{sec:orga1d512c}
\citet{ArjovskyInvariant2020} formulated a condition under which the IRM principle leads to an appropriate predictor for OOD generalization, but only under a certain linearity assumption which is essentially a relaxation of linear SEMs. Furthermore, in the theoretical guarantee, the feature extractor is restricted to be linear. In addition, \citet{ArjovskyInvariant2020} only provides the population-level analysis that the solution of the IRM objective formulated using the underlying distributions enjoys OOD generalization, and it does not discuss the condition under which the ideal feature extractor can be properly estimated by the empirical IRM.
The requirement for the strong assumption of linearity likely stems from the intrinsic difficulty of the OOD problem in \citet{ArjovskyInvariant2020}, namely, its formulation does not assume specific types of interventions.
On the other hand, our method enjoys a stronger theoretical guarantee of an excess risk bound without such parametric assumptions on the models or the data generating process, by focusing on the case that the causal mechanisms are indifferent across the domains.
\paragraph{(3) Methodology.}
\label{sec:orgc566f50}
The methodology of IRM estimates a single predictor that generalizes well to all domains by finding a feature extractor that makes the predictor optimal in all domains.
The approach shares the same spirit as the representation learning approaches to domain adaptation, which try to find a feature extractor that induces invariant conditional distributions, such as transfer component analysis \cite{PanDomain2011}.
On the other hand, our method estimates the SEMs (in the reduced-form) and exploits it to make the training on the few target domain data more efficient through data augmentation.

\end{appendices}
\end{document}